\documentclass[preprint, 12pt]{elsarticle}
\journal{elsevier}
\PassOptionsToPackage{table,dvipsnames}{xcolor}
\usepackage{xcolor}

\usepackage{xr}
\usepackage{array,bm,amsmath,amssymb,mathrsfs,epsfig,epstopdf}
\usepackage{amsthm}
\usepackage{setspace} 
\usepackage{CJK}
\usepackage{color}
\usepackage{lscape}
\usepackage{verbatim}
\usepackage{enumerate}
\usepackage{booktabs}
\usepackage{threeparttable}
\usepackage{multicol} 
\usepackage{longtable}
\usepackage{multirow}
\usepackage{bm}
\usepackage
{bbm}
\usepackage{cases}
\usepackage{caption}
\usepackage{natbib}
\usepackage{algorithm}
\usepackage{algorithmic}
\usepackage{geometry}
\usepackage{framed}
\usepackage[pdftex,bookmarksnumbered,bookmarksopen,colorlinks, citecolor=blue,linkcolor=blue,urlcolor=blue]{hyperref}
\usepackage{graphicx}
\usepackage{amsmath}
\usepackage{amsfonts}
\usepackage{float}
\usepackage{ragged2e}
\usepackage[figuresright]{rotating}
\usepackage{mathtools}
\usepackage{pifont}
\usepackage{indentfirst}  
\usepackage{listings}

\usepackage{tikz}
\usetikzlibrary{arrows.meta,positioning,calc}

\definecolor{codegreen}{rgb}{0,0.6,0}
\definecolor{codegray}{rgb}{0.5,0.5,0.5}
\definecolor{codepurple}{rgb}{0.58,0,0.82}
\definecolor{backcolour}{rgb}{0.95,0.95,0.92}
\lstdefinestyle{mystyle}{
backgroundcolor=\color{backcolour},   
commentstyle=\color{codegreen},
keywordstyle=\color{magenta},
numberstyle=\tiny\color{codegray},
stringstyle=\color{codepurple},
basicstyle=\ttfamily\footnotesize,
breakatwhitespace=false,         
breaklines=true,                 
captionpos=b,                    
keepspaces=true,                 
numbers=left,                    
numbersep=5pt,                  
showspaces=false,                
showstringspaces=false,
showtabs=false,                  
tabsize=2
}

\newcommand\gA{{\mathcal{A}}}
\newcommand\gB{{\mathcal{B}}}

\newcommand\gI{{\mathcal{I}}}

\newcommand\E{\mathbb{E}}

\DeclareMathOperator*{\argmin}{arg\,min}

\DeclareMathOperator{\Tr}{Tr}

\newtheorem{theorem}{Theorem}[section]
\newtheorem{lemma}{Lemma}[section]
\newtheorem{proposition}{Proposition}[section]

\newcommand{\tabincell}[2]{\begin{tabular}{@{}#1@{}}#2\end{tabular}}

\newtheorem{assumption}{Assumption}

\theoremstyle{remark}

\newcommand{\psitwo}{\psi_2}
\newcommand{\psione}{\psi_1}
\newcommand{\normpsitwo}[1]{\left\lVert #1 \right\rVert_{\psi_2}}
\newcommand{\normpsione}[1]{\left\lVert #1 \right\rVert_{\psi_1}}
\usepackage{pdflscape}

\newcommand{\farbetter}[1]{\cellcolor{green!18}{#1}}
\newcommand{\similar}[1]{\cellcolor{gray!12}{#1}}
\newcommand{\worse}[1]{\cellcolor{red!15}{#1}}
\newcommand{\legendbox}[1]{\textcolor{#1}{\rule{1.2ex}{1.2ex}}}

\begin{document}
\title{\bf Machine Learning-Assisted High-Dimensional Matrix Estimation}

\author[1,2]{Wan Tian\corref{cor1}}
\ead{wantian61@foxmail.com}

\author[6]{Hui Yang\corref{cor1}}
\ead{yanghui6@stu.pku.edu.cn}

\author[2]{Zhouhui Lian}
\ead{lianzhouhui@pku.edu.cn}

\author[3]{Lingyue Zhang}
\ead{lingyue_zhang@126.com}

\author[4,5,6]{Yijie Peng\corref{cor2}}
\ead{pengyijie@pku.edu.cn}

\cortext[cor1]{Equal contribution}
\cortext[cor2]{Corresponding author}

\address[1]{Advanced Institute of Information Technology, Peking University}
\address[2]{Wangxuan Institute of Computer Technology, Peking University, China, 100871}
\address[3]{School of Statistics, Dongbei University of Finance and Economics, Dalian, China, 116025}
\address[4]{PKU-Wuhan Institute for Artificial Intelligence}
\address[5]{Xiangjiang Laboratory, Changsha 410000, China}
\address[6]{Guanghua School of Management, Peking University, Beijing,  China, 100871}

\begin{abstract}
Efficient estimation of high-dimensional matrices—including covariance and precision matrices—is a cornerstone of modern multivariate statistics. Most existing studies have focused primarily on the theoretical properties of the estimators (e.g., consistency and sparsity), while largely overlooking the computational challenges inherent in high-dimensional settings. Motivated by recent advances in learning-based optimization methods—which integrate data-driven structures with classical optimization algorithms—we explore high-dimensional matrix estimation assisted by machine learning. Specifically, for the optimization problem of high-dimensional matrix estimation, we first present a solution procedure based on the Linearized Alternating Direction Method of Multipliers (LADMM). We then introduce learnable parameters and model the proximal operators in the iterative scheme with neural networks, thereby improving estimation accuracy and accelerating convergence. Theoretically, we first prove the convergence of LADMM, and then establish the convergence, convergence rate, and monotonicity of its reparameterized counterpart; importantly, we show that the reparameterized LADMM enjoys a faster convergence rate. Notably, the proposed reparameterization theory and methodology are applicable to the estimation of both high-dimensional covariance and precision matrices. We validate the effectiveness of our method by comparing it with several classical optimization algorithms across different structures and dimensions of high-dimensional matrices.

\noindent
\emph{Keywords:} ADMM; High-dimensional; Learning-based optimization; Matrix estimation.
\end{abstract}

\maketitle

\section{Introduction} \label{sec1}



High-dimensional matrix estimation—covering both covariance and precision matrix estimation—constitutes a cornerstone of modern statistics and data science \citep{fan2016overview, wainwright2019high, giraud2021introduction}. Accurate covariance estimation enables the characterization of dependence structures among a large number of variables \citep{anderson1958introduction, lam2009sparsistency, pourahmadi2013high}, which is indispensable in diverse domains such as genomics \citep{li2010bayesian, serra2018robust}, neuroscience \citep{pang2016dimensionality}, finance \citep{fan2008high, fan2011high, moura2020comparing}, and climate science \citep{sarhadi2017advances, miftakhova2020statistical}. Precision matrix estimation—often interpreted as the learning of sparse graphical models—sheds light on conditional dependencies and enables network inference in complex systems \citep{fan2016overview, chen2013covariance, avella2018robust, jankova2017honest}, with significant applications in brain connectivity analysis \citep{ting2018multi}, gene regulatory network reconstruction \citep{ke2022high}, and spatio-temporal modeling \citep{bradley2015multivariate}. Consequently, accurate estimation of high-dimensional matrices is of fundamental importance from both statistical and optimization perspectives.

Over the past two decades, substantial progress has been made in the statistical theory of high-dimensional matrix estimation, particularly with respect to the accuracy of estimators, including properties such as sparsistency and consistency \citep{lam2009sparsistency, chen2013covariance, avella2018robust}. These theoretical results are typically derived under asymptotic regimes in which the dimensionality is assumed to grow large, even approaching infinity. However, in empirical studies, the dimensionality is often only on the order of tens to hundreds, and in many cases is comparable to the sample size \citep{wei2023large, fatima2024two, nguyen2022distributionally, zhu2023online}. This observation highlights a notable gap between the statistical theory of estimators and the practical challenges of their computational implementation.

For the computation of high-dimensional covariance matrices, existing solution methods are primarily developed within the convex optimization framework. Among them, the ADMM is the most widely used \citep{wen2019fast, sedghi2014multi, tan2019learning}. By introducing an auxiliary variable, ADMM decouples the quadratic term from the non-smooth $\ell_1$-penalty and leverages proximal operators—such as soft-thresholding and eigenvalue projection—to achieve efficient iterations. This makes it particularly suitable for medium- to high-dimensional settings; however, the $O(p^3)$ complexity of the eigen-decomposition required at each iteration poses a major computational bottleneck in ultra-high-dimensional regimes, where $p$ denotes the matrix dimension. Other classical approaches include proximal gradient descent \citep{xu2022proximal, tibshirani2010proximal} and its accelerated variant, FISTA \citep{kim2018another}. The former applies proximal operators directly to handle the $\ell_1$-penalty and projection constraints, offering a simple and easy-to-implement scheme but with relatively slow convergence. The latter improves the convergence rate from $O(1/k)$ to $O(1/k^2)$ by incorporating Nesterov’s momentum mechanism, making it better suited for low-dimensional problems or rapid prototyping, where $k$ denotes the iteration number. In addition, block coordinate descent \citep{treister2014block, qin2013efficient} methods are effective for structured matrices, as they reduce computational burden by optimizing over matrix blocks. Semidefinite programming (SDP)-based approaches, on the other hand, employ interior-point solvers to obtain highly accurate solutions but require prohibitively large memory, thus limiting their use to low-dimensional settings \citep{amini2008high}.

For the estimation of high-dimensional precision matrices, existing optimization methods are also primarily grounded in convex optimization techniques. Among them, the Graphical Lasso (GLasso) \citep{friedman2008sparse} represents a pioneering approach, achieving efficient estimation by iteratively solving row- or column-wise lasso \citep{tibshirani1996regression} subproblems. While well-suited for moderate-dimensional matrices, its $O(p^3)$ computational complexity limits its applicability in ultra-high-dimensional settings. To overcome this limitation, variants of ADMM have been widely adopted \citep{lin2024estimation, danaher2014joint, zhao2019optimization, hallac2017toeplitz}. By introducing auxiliary variables, these methods decouple the $-\log\det$ term from the non-smooth $\ell_1$-penalty, solving subproblems via proximal operators, and can further accelerate convergence through linearization of the quadratic term and the addition of proximal terms. The QUIC (Quadratic Inverse Covariance) algorithm  \citep{hsieh2014quic, hsieh2013big}, based on a quadratic approximation within a Newton-type method, achieves performance several times faster than standard GLasso while remaining memory-efficient. Meanwhile, the Dual-Primal Graphical Lasso (DP-GLASSO) \citep{zhang2020proximal} leverages a primal-dual framework to reduce the number of iterations and supports parallel computation.

It is thus evident that these methods, developed within the convex optimization framework, exhibit inherent limitations in terms of scalability and adaptability to the data. Even algorithms such as QUIC, which outperform the standard GLasso in computational efficiency, remain sensitive to the choice of regularization parameters, particularly when the matrix is near-singular. Moreover, QUIC has limited capacity to exploit the statistical properties of the data for adaptively guiding the optimization path. Of course, in communities such as statistics and machine learning, researchers often design task-specific loss functions and estimate parameters based on carefully crafted optimization strategies. However, these approaches may also encounter similar computational challenges \citep{xie2019differentiable}.

To alleviate computational challenges, the framework of learning-based optimization (LBO) has been proposed, which combines data-driven learning with traditional optimization algorithms. The key idea of LBO is to reparameterize certain operators within the optimization algorithm and introduce learnable parameters to enhance optimization performance \citep{gregor2010learning, liu2016learning, chen2016trainable}. Motivated by the success of deep learning across various applications, many studies have employed deep neural networks (DNNs) as the learning units to reparameterize the optimization process \citep{li2024pdhg, lin2025pdcs, yang2024efficient, li2024power, chen2024symilo, li2025towards}. For example, \citet{li2024pdhg} and \citet{yang2024efficient} respectively developed primal–dual hybrid gradient (PDHG)-based computational procedures for large-scale linear programming (in supervised learning settings) and quadratic programming (in unsupervised learning settings), introduced learnable parameters, and employed graph neural networks to represent the iterative process. They further provided theoretical guarantees on the number of neurons required to achieve a given accuracy. Similarly, \citet{xie2019differentiable} first derived the iterative scheme for linearly constrained convex optimization based on LADMM and then reparameterized the computation using neural networks. Notably, they were the first to theoretically demonstrate that such reparameterization can accelerate the convergence rate. However, these approaches have not been effectively applied to high-dimensional matrix estimation.

In this paper, we introduce LBO into the problem of high-dimensional matrix estimation to enhance both estimation accuracy and scalability. Specifically, we first derive an iterative scheme based on LADMM, then introduce learnable parameters and reparameterize the proximal operators within the iterations using neural networks. This framework is applicable to both high-dimensional covariance and precision matrix estimation. Theoretically, we first establish the convergence of LADMM; for the reparameterized LADMM, we further prove convergence, an explicit convergence rate, and a monotone descent property. Importantly, we show that the reparameterized variant achieves a strictly faster rate. In the empirical analysis, we compare the proposed method with several classical approaches for high-dimensional matrix estimation across matrices of diverse structures and dimensions, in order to demonstrate its superior performance.

The remainder of this paper is organized as follows. Section \ref{sec2} and Section \ref{sec3} discuss how to perform high-dimensional covariance and precision matrix estimation within the proposed LBO framework, respectively. Section \ref{sec4} presents a theoretical analysis of LADMM and the proposed LBO method. Section \ref{sec5} reports numerical experiments that validate the effectiveness of our approach. Finally, Section \ref{sec6} concludes the paper.




\section{High-dimensional covariance matrix estimation} \label{sec2}
In this section, we show how to leverage LBO to facilitate the estimation of high-dimensional covariance matrices. Specifically, we first formulate the original optimization problem and describe its ADMM and LADMM solution procedures. We then introduce learnable parameters and reparameterize the proximal operators via neural networks, and present the resulting iterative scheme.

The estimation of a high-dimensional covariance matrix \(\Sigma = (\sigma_{ij})_{1\le i,j\le p}\) can be pursued via a variety of methods, among which a particularly influential one is the soft-thresholding estimator \citep{rothman2009generalized}. By shrinking small off-diagonal entries of the sample covariance toward zero, it yields a sparse and interpretable estimator that is computationally scalable and, under standard sparsity assumptions, comes with strong theoretical guarantees (e.g., rate-optimality and consistency). In this paper, we adopt the soft-thresholding estimator as the objective of our optimization framework. This estimator is equivalent to the following optimization problem:
\[
\min_{\Sigma} \frac{1}{2} \lVert \Sigma - S \rVert^2_F + \lambda \lVert \Sigma  \rVert_{1,\text{off}},	
\]
where \(S \in \mathbb{R}^{p \times p}\) is the empirical covariance matrix computed from an observation matrix with \(n\) samples and \(p\) features, \( \|\cdot\|_F \) denotes the Frobenius norm of the matrix, and \(\lambda\) is the regularization parameter for promoting sparsity in the estimation, and \(\lVert \Sigma  \rVert_{1,\text{off}} := \sum_{i\neq j}|\sigma_{ij} |\). However, the positive definiteness of the covariance matrix estimated based on this optimization problem is only guaranteed with high probability in an asymptotic setting, and it may not hold in real-world scenarios. Following \citet{xue2012positive}, we impose a positive definite constraint on the soft-thresholding optimization problem, formalized as:
\begin{equation}\label{raw_high_dimension}
\min_{\Sigma \succeq \epsilon I} \frac{1}{2} \lVert \Sigma - S \rVert^2_F + \lambda \lVert \Sigma  \rVert_{1,\text{off}},
\end{equation}
where $\epsilon$ is an arbitrarily small positive number that does not require tuning, and $I \in \mathbb{R}^{p \times p}$ denotes the identity matrix. Prior to describing the LADMM solution for optimization problem (\ref{raw_high_dimension}), we first review the ADMM-based solution procedure. 

To apply ADMM, we introduce an auxiliary variable \(W\) and rewrite problem~(\ref{raw_high_dimension}) as
\[
\min_{\Sigma,\,W}\;
\frac{1}{2}\|\Sigma - S\|_F^2 \;+\; \lambda \|W\|_{1,\mathrm{off}} \;+\; \mathcal{I}_{\{\Sigma \succeq \epsilon I\}}(\Sigma)
\quad \text{s.t.} \quad \Sigma = W,
\]
where \(\mathcal{I}_{\{\Sigma \succeq \epsilon I\}}\) is the indicator that equals \(0\) if \(\Sigma \succeq \epsilon I\) and \(+\infty\) otherwise. Introducing a dual variable (Lagrange multiplier) \(\Lambda \in \mathbb{R}^{p \times p}\) and a penalty parameter \(\rho > 0\), the augmented Lagrangian for the split formulation is
\begin{equation}\label{raw_argu_large}
L_\rho(\Sigma, W, \Lambda)
= \frac{1}{2} \|\Sigma - S\|_F^2
+ \lambda \|W\|_{1,\mathrm{off}}
+ \mathcal{I}_{\{\Sigma \succeq \epsilon I\}}(\Sigma)
+ \langle \Lambda, \Sigma - W \rangle
+ \frac{\rho}{2} \|\Sigma - W\|_F^2,
\end{equation}
where \(\langle A, B \rangle = \mathrm{Tr}(A^\top B)\) denotes the Frobenius inner product. ADMM then alternates between minimizing \(L_\rho\) with respect to \(\Sigma\) and \(W\), followed by a dual ascent step on \(\Lambda\). This splitting decouples the smooth quadratic fit, the non-smooth off-diagonal \(\ell_1\)-penalty, and the PSD constraint; in particular, the \(W\)-update reduces to off-diagonal soft-thresholding, while the \(\Sigma\)-update enforces \(\Sigma \succeq \epsilon I\).

Given the iterates \(\Sigma^{(k)}, W^{(k)}\), and \(\Lambda^{(k)}\) at iteration \(k\), the \((k+1)\)-st update proceeds as follows. 
The \(\Sigma\)-update is obtained by solving
\begin{equation}\label{admmSigma}
\Sigma^{(k+1)}
= \arg\min_{\Sigma \succeq \epsilon I}\;
\frac{1}{2}\|\Sigma - S\|_F^2
+ \langle \Lambda^{(k)}, \Sigma \rangle
+ \frac{\rho}{2}\|\Sigma - W^{(k)}\|_F^2,
\end{equation}
which is a convex quadratic problem with a positive–semidefinite (PSD) constraint. 
Completing the square shows that \eqref{admmSigma} is equivalent (up to an additive constant) to
\[
\Sigma^{(k+1)}=\arg\min_{\Sigma \succeq \epsilon I}\;
\frac{1+\rho}{2}\,\big\|\Sigma-\overline{\Sigma}^{(k+1)}\big\|_F^2,
\qquad
\overline{\Sigma}^{(k+1)} \;=\; \frac{S+\rho W^{(k)}-\Lambda^{(k)}}{1+\rho}.
\]
Hence, \(\Sigma^{(k+1)}\) is the projection of \(\overline{\Sigma}^{(k+1)}\) onto the convex cone \(\{\Sigma \succeq \epsilon I\}\). 
Let the eigendecomposition of \(\overline{\Sigma}^{(k+1)}\) be
\(\overline{\Sigma}^{(k+1)} = Q\,\mathrm{diag}(\lambda_1,\dots,\lambda_p)\,Q^\top\).
The PSD projection with eigenvalue floor \(\epsilon\) yields
\[
\Sigma^{(k+1)} \;=\; Q\,\mathrm{diag}\!\big(\max\{\lambda_1,\epsilon\},\dots,\max\{\lambda_p,\epsilon\}\big)\,Q^\top.
\]

We next update the auxiliary variable \(W\) by solving
\begin{equation}\label{admmz}
W^{(k+1)}
= \arg\min_{W}\;
\lambda \|W\|_{1,\mathrm{off}}
- \langle \Lambda^{(k)}, W \rangle
+ \frac{\rho}{2}\,\|\Sigma^{(k+1)} - W\|_{F}^{2},
\end{equation}
which, after completing the square, is equivalent (up to an additive constant) to
\[
W^{(k+1)}
= \arg\min_{W}\;
\lambda \|W\|_{1,\mathrm{off}}
+ \frac{\rho}{2}\,\big\|W - B^{(k)}\big\|_{F}^{2},
\qquad
B^{(k)} := \Sigma^{(k+1)} + \frac{1}{\rho}\Lambda^{(k)}.
\]
This subproblem decouples elementwise and admits the proximal (soft–thresholding) solution on the
off–diagonals:
\[
W^{(k+1)}_{ij}=
\begin{cases}
\operatorname{sgn}\!\big(B^{(k)}_{ij}\big)\,
\max(|B^{(k)}_{ij}|-\lambda/\rho,\,0\big), & i\neq j,\\[4pt]
B^{(k)}_{ii}, & i=j,
\end{cases}
\]
i.e., \(W^{(k+1)}=\mathcal{S}^{\mathrm{off}}_{\lambda/\rho}\!\big(B^{(k)}\big)\) where
\(\mathcal{S}^{\mathrm{off}}_{\lambda/\rho}\) applies elementwise soft–thresholding with threshold \(\lambda/\rho\) to the
off–diagonal entries and leaves the diagonal unchanged.
Finally, the dual variable is updated by
\begin{equation}\label{ADMdual}
\Lambda^{(k+1)} \;=\; \Lambda^{(k)} \;+\; \rho\big(\Sigma^{(k+1)} - W^{(k+1)}\big).
\end{equation}

The core idea of LADMM is to linearize the quadratic terms in the covariance matrix estimation step (\ref{admmSigma}) and the auxiliary variable estimation step (\ref{admmz}) of ADMM, and to add proximal terms to ensure convergence. The augmented Lagrangian (\ref{raw_argu_large}) can be equivalently written as:
\begin{equation} \label{raw_argu_large_scale}
L_\rho(\Sigma, W, \Lambda) = \frac{1}{2} \|\Sigma - S\|_F^2 + \lambda \|W\|_{1,\text{off}} + \mathcal{I}_{\{\Sigma \succeq \epsilon I\}}(\Sigma) + \frac{\rho}{2} \|\Sigma - W + \Lambda / \rho\|_F^2.
\end{equation}

At this stage, the update of the covariance matrix $\Sigma$ is approximated by solving the following optimization problem:
\begin{align}
\Sigma^{(k+1)} & = \arg\min_{\Sigma \succeq \epsilon I} 
\frac{1}{2} \|\Sigma - S\|_F^2 
+ \langle \rho (\Sigma^{(k)} - W^{(k)} + \Lambda^{(k)}/\rho), \Sigma - \Sigma^{(k)} \rangle
+ \frac{\rho \phi_1}{2} \|\Sigma - \Sigma^{(k)}\|_F^2\\
& = \arg\min_{\Sigma \succeq \epsilon I} 
\frac{1}{2} \|\Sigma - S\|_F^2 
+ \frac{\rho \phi_1}{2} \|\Sigma - \overline{\Sigma}^{(k)}\|_F^2\\
& = \mathrm{prox}_{f / (\rho \phi_1)}\!\left(\overline{\Sigma}^{(k)}\right),
\end{align}
where \(f(\Sigma) = \frac{1}{2}\|\Sigma - S\|_F^2 + \mathcal{I}_{\{\Sigma \succeq \epsilon I\}}(\Sigma)\), \(\overline{\Sigma}^{(k)} = ((\phi_1 - 1)\Sigma^{(k)} + W^{(k)} - \Lambda^{(k)} / \rho)/\phi_1\). Analogously, the auxiliary variable $W$ is updated by solving the following approximate optimization problem:
\begin{align}
W^{(k+1)} & = \arg\min_{W} \ \lambda \|W\|_1 + \langle \rho (W^{(k)} - \Sigma^{(k+1)} - \Lambda^{(k)} / \rho), W - W^{(k)} \rangle + \frac{\rho \phi_2}{2} \|W - W^{(k)}\|_F^2\\
&= \mathrm{prox}_{\lambda / (\rho \phi_2) \|\cdot\|_1} (\overline{W}^{(k)}),
\end{align}
where \(\overline{W}^{(k)} = ((\phi_2 - 1) W^{(k)} + \Sigma^{(k+1)} + \Lambda^{(k)} / \rho ) / \phi_2\). The appropriate choices of $\phi_1$ and $\phi_2$ will be discussed later. According to the preceding discussion, the iterative procedure for solving the optimization problem (\ref{raw_high_dimension}) using LADMM is as follows:
\begin{equation} \label{LADMMmatrix_covariance}
\left\{
\begin{aligned}
\Sigma^{(k+1)} & = \mathrm{prox}_{f / (\rho \phi_1)}\!\left(\Sigma^{(k)} + \frac{1}{\phi_1}(W^{(k)} - \Lambda^{(k)} / \rho - \Sigma^{(k)}) \right),\\
W^{(k+1)} & = \mathrm{prox}_{\lambda / (\rho \phi_2) \|\cdot\|_1} \left(
W^{(k)} + \frac{1}{\phi_2}(\Sigma^{(k+1)} + \Lambda^{(k)} / \rho - W^{(k)})
\right),\\
\Lambda^{(k+1)} & = \Lambda^{(k)} +  \rho (\Sigma^{(k+1)} - W^{(k+1)}).
\end{aligned}
\right.
\end{equation}

Compared with standard ADMM, LADMM replaces the quadratic penalties in the
\(\Sigma\)- and \(W\)-subproblems with first–order (linearized) approximations
around the current iterate and adds proximal (quadratic) stabilization terms.
This yields cheaper per–iteration updates, improves numerical stability, and—
with appropriate choices of the proximal parameters—retains global convergence
with faster practical progress.  Using the scaled form of the dual variable, the
augmented Lagrangian in \eqref{raw_argu_large} can be equivalently written as
\begin{equation}\label{raw_argu_large_scale}
L_\rho(\Sigma,W,\Lambda)
= \tfrac{1}{2}\|\Sigma-S\|_F^2
+ \lambda\|W\|_{1,\mathrm{off}}
+ \mathcal{I}_{\{\Sigma\succeq \epsilon I\}}(\Sigma)
+ \tfrac{\rho}{2}\,\big\|\Sigma - W + \Lambda/\rho\big\|_F^2 .
\end{equation}

(Linearized \(\Sigma\)-update). At iteration \(k\), linearize the quadratic term in \(\Sigma\) at \(\Sigma^{(k)}\) and
add a proximal term with parameter \(\phi_1>0\):
\begin{align*}
\Sigma^{(k+1)}
&= \arg\min_{\Sigma\succeq \epsilon I}\;
\frac{1}{2}\|\Sigma-S\|_F^2
+ \big\langle \rho\big(\Sigma^{(k)} - W^{(k)} + \Lambda^{(k)}/\rho\big),\, \Sigma-\Sigma^{(k)} \big\rangle
+ \frac{\rho\phi_1}{2}\,\|\Sigma-\Sigma^{(k)}\|_F^2 \\[2pt]
&= \arg\min_{\Sigma\succeq \epsilon I}\;
\frac{1}{2}\|\Sigma-S\|_F^2
+ \frac{\rho\phi_1}{2}\,\big\|\Sigma-\overline{\Sigma}^{(k)}\big\|_F^2
\;=\; \mathrm{prox}_{\,f/(\rho\phi_1)}\!\big(\overline{\Sigma}^{(k)}\big),
\end{align*}
where \(f(\Sigma)=\frac{1}{2}\|\Sigma-S\|_F^2+\mathcal{I}_{\{\Sigma\succeq \epsilon I\}}(\Sigma)\) and
\[
\overline{\Sigma}^{(k)}
= \Sigma^{(k)} - \frac{1}{\phi_1}\Big(\Sigma^{(k)}-W^{(k)}+\Lambda^{(k)}/\rho\Big)
= \frac{(\phi_1-1)\Sigma^{(k)} + W^{(k)} - \Lambda^{(k)}/\rho}{\phi_1}\, .
\]

(Linearized \(W\)-update). Analogously, linearize the quadratic term in \(W\) at \(W^{(k)}\) and add a proximal term with
parameter \(\phi_2>0\):
\begin{align*}
W^{(k+1)}
&= \arg\min_{W}\;
\lambda\|W\|_{1,\mathrm{off}}
+ \big\langle \rho\big(W^{(k)}-\Sigma^{(k+1)}-\Lambda^{(k)}/\rho\big),\, W-W^{(k)} \big\rangle
+ \frac{\rho\phi_2}{2}\,\|W-W^{(k)}\|_F^2 \\[2pt]
&= \mathrm{prox}_{\,(\lambda/(\rho\phi_2))\,\|\cdot\|_{1,\mathrm{off}}}\!\big(\overline{W}^{(k)}\big),\\
\overline{W}^{(k)}
&= W^{(k)} - \frac{1}{\phi_2}\Big(W^{(k)}-\Sigma^{(k+1)}-\Lambda^{(k)}/\rho\Big)
= \frac{(\phi_2-1)W^{(k)} + \Sigma^{(k+1)} + \Lambda^{(k)}/\rho}{\phi_2}\, .
\end{align*}

(Scaled dual update). The (scaled) dual variable is then updated by
\[
\Lambda^{(k+1)} \;=\; \Lambda^{(k)} \;+\; \rho\big(\Sigma^{(k+1)}-W^{(k+1)}\big).
\]

Collecting the above steps, the LADMM scheme for \eqref{raw_high_dimension} is
\begin{equation}\label{LADMMmatrix_covariance}
\left\{
\begin{aligned}
\Sigma^{(k+1)}
&= \mathrm{prox}_{\,f/(\rho\phi_1)}\!\left(
\Sigma^{(k)} + \tfrac{1}{\phi_1}\big(W^{(k)} - \Lambda^{(k)}/\rho - \Sigma^{(k)}\big)
\right),\\[4pt]
W^{(k+1)}
&= \mathrm{prox}_{\,(\lambda/(\rho\phi_2))\,\|\cdot\|_{1,\mathrm{off}}}\!\left(
W^{(k)} + \tfrac{1}{\phi_2}\big(\Sigma^{(k+1)} + \Lambda^{(k)}/\rho - W^{(k)}\big)
\right),\\[4pt]
\Lambda^{(k+1)}
&= \Lambda^{(k)} + \rho\big(\Sigma^{(k+1)} - W^{(k+1)}\big).
\end{aligned}
\right.
\end{equation}

\noindent
Here \(\mathrm{prox}_{u \varphi }(V):=\arg\min_U\{\,\varphi(U)+\tfrac{1}{2u}\|U-V\|_F^2\,\}\) denotes the proximal
operator.



LBO augments model–based iterative solvers with data–driven components, yielding updates that (i) adapt to the target problem family, (ii) amortize computation across instances once trained, and (iii) accelerate convergence while preserving the algorithmic structure and constraints. Motivated by these advantages—and by the theoretically and empirically superior convergence behavior of LADMM over ADMM—we endow the LADMM iteration in \eqref{LADMMmatrix_covariance} with learnable, stage–wise parameters and reparameterize each update as a neural block. The resulting unrolled architecture maintains the interpretability and constraint handling of the original method while enabling task-specific adaptation and improved practical performance.
Concretely, for \(k=0,\dots,K-1\), we set
\begin{equation}\label{LADMMmatrix_neural_covariance}
\left\{
\begin{aligned}
\Sigma^{(k+1)}
&= \eta_{(\omega_1)_k}\!\left(
\Sigma^{(k)}
- \alpha_k \odot \Big(\Lambda^{(k)} + \gamma_k \odot \big(\Sigma^{(k)} - W^{(k)}\big)\Big)
\right),\\[4pt]
W^{(k+1)}
&= \xi_{(\omega_2)_k}\!\left(
W^{(k)}
+ \beta_k \odot \Big(\Lambda^{(k)} + \gamma_k \odot \big(\Sigma^{(k+1)} - W^{(k)}\big)\Big)
\right),\\[4pt]
\Lambda^{(k+1)}
&= \Lambda^{(k)} \;+\; \gamma_k \odot \big(\Sigma^{(k+1)} - W^{(k+1)}\big),
\end{aligned}
\right.
\end{equation}
where \(\odot\) denotes the Hadamard (elementwise) product. The collection
\(\{(\omega_1)_k,(\omega_2)_k,\alpha_k,\beta_k,\gamma_k\}_{k=1}^{K}\) comprises
the learnable parameters: \(\alpha_k,\beta_k,\gamma_k\) are (scalar, diagonal, or
entrywise) step–size/penalty schedules, while \(\eta_{(\omega_1)_k}\) and
\(\xi_{(\omega_2)_k}\) are neural network blocks that act as learned proximal
operators (mapping symmetric matrices to symmetric matrices and preserving the
required structural constraints such as PSD enforcement or diagonal handling).
Once trained, the unrolled \(K\)-block network implements an LBO solver that
retains interpretability and structure while achieving improved practical accuracy
and convergence speed.

Setting \(\eta_{(\omega_1)_k}=\mathrm{prox}_{\,f/(\rho\phi_1)}\),
\(\xi_{(\omega_2)_k}=\mathrm{prox}_{\,(\lambda/(\rho\phi_2))\|\cdot\|_{1,\mathrm{off}}}\),
\(\alpha_k=\tfrac{1}{\phi_1}\), \(\beta_k=\tfrac{1}{\phi_2}\), and \(\gamma_k\equiv 1\)
reduces \eqref{LADMMmatrix_neural_covariance} to the classical LADMM iteration
\eqref{LADMMmatrix_covariance}. Under learning, these components become
data–adaptive while preserving the interpretability and feasibility constraints
of the model–based solver.

Furthermore, as (\ref{raw_high_dimension}) is an unsupervised optimization problem, and the iterative procedure in (\ref{LADMMmatrix_neural_covariance}) requires parameter updates via error backpropagation, we must define an optimization objective. The primal problem is $\min_{\Sigma} f_1(\Sigma) = \frac{1}{2} \|\Sigma - S\|_F^2 + \lambda \|\Sigma\|_{1,\mathrm{off}}$, with the corresponding dual $\max_{\Lambda} d_1
(\Lambda) = -\langle \Lambda, S \rangle - \frac{1}{2} \|\Lambda\|_F^2$. In the experiments, we employ the duality gap 
\[
f_1(\Sigma) - d_1(\Lambda)
\]
as the per-iteration loss to update the block neural network. Assuming the final estimated covariance matrix is $\widehat{\Sigma}$ and the dual multiplier matrix is $\widehat{\Lambda}$,

\section{High-dimensional precision matrix estimation}  \label{sec3}


In this section we consider high-dimensional precision–matrix estimation within the LBO framework. As our optimization target we adopt the graphical Lasso \citep{friedman2008sparse}, which solves
\begin{equation}\label{graphical_lasso}
\min_{\Theta \succeq \epsilon I}\; \mathrm{Tr}(S\Theta)\;-\;\log\det(\Theta)\;+\;\lambda\|\Theta\|_{1,\mathrm{off}}.
\end{equation}
This choice offers several advantages: (i) the objective is convex with an explicit PSD constraint, so a global minimizer exists and can be found reliably; 
(ii) the off–diagonal \(\ell_1\)-penalty induces sparsity in \(\Theta\), yielding an interpretable conditional-independence graph; 
(iii) the problem admits efficient proximal/ADMM/LADMM updates (soft-thresholding on off-diagonals and PSD projection), making it well suited to unrolling and LBO; and 
(iv) it enjoys well-studied statistical guarantees in the high-dimensional regime. 
Moreover, many popular nonconvex penalties—such as SCAD \citep{fan2001variable} and MCP \citep{zhang2010nearly}—can be handled via local linear approximation (LLA) \citep{zou2008one}, which converts each LLA step into a weighted graphical-Lasso subproblem. Thus graphical Lasso serves as a unifying and computationally convenient objective for LBO-based precision estimation.

Analogous to Section \ref{sec2}, we first derive an ADMM scheme for solving the graphical Lasso in
\eqref{graphical_lasso}. Introducing an auxiliary variable \(Z\) and enforcing
\(\Theta = Z\) yields the equivalent split formulation
\[
\min_{\Theta \succeq \epsilon I,\, Z}\;
\mathrm{Tr}(S\Theta) - \log\det(\Theta) + \lambda \|Z\|_{1,\mathrm{off}}
\quad \text{s.t.}\quad \Theta = Z .
\]
With a Lagrange multiplier \(U\) and penalty parameter \(\rho>0\), the augmented
Lagrangian is
\[
L_{\rho}(\Theta,Z,U)
= \mathrm{Tr}(S\Theta) - \log\det(\Theta)
+ \lambda \|Z\|_{1,\mathrm{off}}
+ \langle U,\, \Theta - Z \rangle
+ \frac{\rho}{2}\,\|\Theta - Z\|_{F}^{2},
\]
where \(\langle A,B\rangle=\mathrm{Tr}(A^\top B)\).
At iteration \(k\), ADMM performs the updates
\begin{equation}
\left\{
\begin{aligned}
\Theta^{(k+1)}
&= \arg\min_{\Theta \succeq \epsilon I}\;
\mathrm{Tr}(S\Theta) - \log\det(\Theta)
+ \tfrac{\rho}{2}\,\big\|\Theta - Z^{(k)} + U^{(k)}/\rho\big\|_{F}^{2},
\\[4pt]
Z^{(k+1)}
&= \operatorname{prox}_{\frac{\lambda}{\rho}\|\cdot\|_{1,\mathrm{off}}}
\!\Big(\Theta^{(k+1)} + U^{(k)}/\rho\Big),
\\[4pt]
U^{(k+1)}
&= U^{(k)} + \rho\big(\Theta^{(k+1)} - Z^{(k+1)}\big).
\end{aligned}
\right.
\end{equation}
The \(Z\)-update is an off–diagonal soft–thresholding operator, while the
\(\Theta\)-update is a convex subproblem that can be solved efficiently via an
eigendecomposition-based proximal step and PSD enforcement \(\Theta \succeq \epsilon I\).

Next, we derive the LADMM algorithm. By linearizing the quadratic term and adding a proximal term, we obtain the following approximation:
\begin{align*}
\Theta^{(k+1)} 
& = \arg\min_{\Theta \succeq \epsilon I} \;
\operatorname{Tr}(S\Theta) - \log \det \Theta
+ \big\langle\rho(\Theta^{(k)} - Z^{(k)})+U^{(k)},\Theta-\Theta^{(k)}\big\rangle+\frac{\rho \phi_1}{2}\|\Theta - \Theta^{(k)}\|_F^2 \\
& = \arg\min_{\Theta \succeq \epsilon I} \;
\operatorname{Tr}(S\Theta) - \log \det \Theta
+\frac{\rho \phi_1}{2}\|\Theta - \overline{\Theta}^{(k)}\|_F^2 \\
& = \operatorname{prox}_{g/(\rho \phi_1)}(\overline{\Theta}^{(k)}),
\end{align*}
where $\overline{\Theta}^{(k)}=\Theta^{(k)}- (\Theta^{(k)} - Z^{(k)}+U^{(k)}/\rho) / \phi_1, g(\Theta)=\operatorname{Tr}(S\Theta) - \log \det \Theta+\mathcal{I}_{\{\Theta\succeq\epsilon I\}}(\Theta)$. 
Analogously,
\begin{align*}
Z^{(k+1)} &= \arg\min_{Z}\lambda\|Z\|_{1,\text{off}}+\left\langle\rho(Z^{(k)}-\Theta^{(k+1)})-U^{(k)}, Z-Z^{(k)}\right\rangle + \frac{\rho \phi_2}{2}\|Z-Z^{k}\|_F^2, \\
&= \operatorname{prox}_{(\lambda/(\rho \phi_2))\|\cdot\|_{\text{1, off}}}(\overline{V}^{(k)}),
\end{align*}
where $\quad\overline{V}^{(k)} = Z^{(k)}- (Z^{(k)} - \Theta^{(k+1)}-U^{(k)}/\rho) / \phi_2.$ Based on the preceding discussion, the LADMM-based iterative procedure for Optimization Problem (\ref{graphical_lasso}) consists of the following three steps:
\begin{equation}
\label{LADMMmatrix_precision}
\left\{
\begin{aligned}
\Theta^{(k+1)} & = \operatorname{prox}_{g/(\rho \phi_1)} \left(
\Theta^{(k)}- \frac{1}{\phi_1}(\Theta^{(k)} - Z^{(k)}+U^{(k)}/\rho)
\right), \\
Z^{(k+1)} &= \operatorname{prox}_{(\lambda/(\rho \phi_2))\|\cdot\|_{\text{1, off}}}\left(
Z^{(k)}- \frac{1}{ \phi_2}(Z^{(k)} - \Theta^{(k+1)}-U^{(k)}/\rho)
\right), \\
U^{(k+1)} &= U^{(k)} + \rho \,(\Theta^{(k+1)} - Z^{(k+1)}).
\end{aligned}
\right.
\end{equation}

Next, we derive a  LADMM scheme.  At each iteration we
linearize the quadratic coupling term in the ADMM subproblems at the current
iterate and add a proximal (quadratic) stabilization.  This yields cheaper,
single–matrix updates while preserving global convergence for suitable choices
of the proximal parameters \(\phi_1,\phi_2>0\) (typically \(\phi_i\ge 1\)).

(Linearized \(\Theta\)-update). Starting from the \(\Theta\)-subproblem in ADMM, we linearize
\(\frac{\rho}{2}\|\Theta-Z^{(k)}+U^{(k)}/\rho\|_F^2\) at \(\Theta^{(k)}\) and add
\(\frac{\rho\phi_1}{2}\|\Theta-\Theta^{(k)}\|_F^2\), which gives the surrogate
\begin{align*}
\Theta^{(k+1)}
&= \arg\min_{\Theta\succeq \epsilon I}\;
\operatorname{Tr}(S\Theta) - \log\det(\Theta)
+ \big\langle \rho(\Theta^{(k)}-Z^{(k)})+U^{(k)},\,\Theta-\Theta^{(k)} \big\rangle
+ \frac{\rho\phi_1}{2}\,\|\Theta-\Theta^{(k)}\|_F^2 \\
&= \arg\min_{\Theta\succeq \epsilon I}\;
\operatorname{Tr}(S\Theta) - \log\det(\Theta)
+ \frac{\rho\phi_1}{2}\,\big\|\Theta-\overline{\Theta}^{(k)}\big\|_F^2
\\
&=\; \operatorname{prox}_{\,g/(\rho\phi_1)}\!\big(\overline{\Theta}^{(k)}\big),
\end{align*}
where
\[
\overline{\Theta}^{(k)}
= \Theta^{(k)} - \frac{1}{\phi_1}\Big(\Theta^{(k)} - Z^{(k)} + U^{(k)}/\rho\Big),
\qquad
g(\Theta)=\operatorname{Tr}(S\Theta)-\log\det(\Theta)+\mathcal{I}_{\{\Theta\succeq \epsilon I\}}(\Theta).
\]
The proximal map of \(g\) admits a closed form via eigendecomposition: letting
\(\overline{\Theta}^{(k)}-\frac{1}{\rho\phi_1}S = Q\operatorname{diag}(d_i)Q^\top\), one obtains
\[
\operatorname{prox}_{\,g/(\rho\phi_1)}\!\big(\overline{\Theta}^{(k)}\big)
= Q\,\operatorname{diag}\!\Big(\max\!\Big(\tfrac{d_i+\sqrt{d_i^{2}+\tfrac{4}{\rho\phi_1}}}{2},\,\epsilon\Big)\Big)\,Q^\top .
\]

(Linearized \(Z\)-update). Analogously, linearizing the quadratic term at \(Z^{(k)}\) and adding
\(\frac{\rho\phi_2}{2}\|Z-Z^{(k)}\|_F^2\) yields
\begin{align*}
Z^{(k+1)}
&= \arg\min_{Z}\;
\lambda\|Z\|_{1,\mathrm{off}}
+ \big\langle \rho(Z^{(k)}-\Theta^{(k+1)})-U^{(k)},\,Z-Z^{(k)} \big\rangle
+ \frac{\rho\phi_2}{2}\,\|Z-Z^{(k)}\|_F^2 \\
&= \operatorname{prox}_{\,(\lambda/(\rho\phi_2))\|\cdot\|_{1,\mathrm{off}}}
\!\big(\overline{V}^{(k)}\big),
\end{align*}
where
\[
\overline{V}^{(k)}
= Z^{(k)} - \frac{1}{\phi_2}\Big(Z^{(k)}-\Theta^{(k+1)}-U^{(k)}/\rho\Big),
\]
This proximal map is the off–diagonal soft–thresholding operator with threshold
\(\lambda/(\rho\phi_2)\), leaving the diagonal unchanged. Collecting the updates, the LADMM procedure for \eqref{graphical_lasso} reads
\begin{equation}\label{LADMMmatrix_precision}
\left\{
\begin{aligned}
\Theta^{(k+1)}
&= \operatorname{prox}_{\,g/(\rho \phi_1)} \!\left(
\Theta^{(k)} - \tfrac{1}{\phi_1}\big(\Theta^{(k)} - Z^{(k)} + U^{(k)}/\rho\big)
\right), \\[3pt]
Z^{(k+1)}
&= \operatorname{prox}_{\,(\lambda/(\rho \phi_2))\|\cdot\|_{1,\mathrm{off}}}\!\left(
Z^{(k)} - \tfrac{1}{\phi_2}\big(Z^{(k)} - \Theta^{(k+1)} - U^{(k)}/\rho\big)
\right), \\[3pt]
U^{(k+1)}
&= U^{(k)} + \rho \big(\Theta^{(k+1)} - Z^{(k+1)}\big).
\end{aligned}
\right.
\end{equation}

Choosing \(\phi_1,\phi_2 \ge 1\) majorizes the linearized quadratic terms, guarantees
monotone descent of the augmented objective, and yields numerically stable,
single–pass updates well suited for unrolling within the LBO framework. Motivated by the LBO algorithm, we introduce learnable, stage–wise
parameters and replace the proximal operators by neural blocks. The resulting
\(K\)-stage unrolled scheme takes the form
\begin{equation}
\label{LADMMmatrix_neural_precision}
\left\{
\begin{aligned}
\Theta^{(k+1)}
&= \eta_{(\omega_1)_k}\!\Big(
\Theta^{(k)} - \alpha_k \circ \big(U^{(k)} + \gamma_k \circ (\Theta^{(k)} - Z^{(k)})\big)
\Big), \\[1ex]
Z^{(k+1)}
&= \xi_{(\omega_2)_k}\!\Big(
Z^{(k)} + \beta_k \circ \big(U^{(k)} + \gamma_k \circ (\Theta^{(k+1)} - Z^{(k)})\big)
\Big), \\[1ex]
U^{(k+1)}
&= U^{(k)} + \gamma_k \circ \big(\Theta^{(k+1)} - Z^{(k+1)}\big),
\end{aligned}
\right.
\end{equation}
for \(k=0,\dots,K-1\), where \(\circ\) denotes the Hadamard (elementwise) product.
The parameter set \(\{(\omega_1)_k,(\omega_2)_k,\alpha_k,\beta_k,\gamma_k\}_{k=0}^{K-1}\)
is learned from data: \(\alpha_k,\beta_k,\gamma_k\) may be scalars, diagonal
(preconditioning) matrices, or entrywise tensors, while
\(\eta_{(\omega_1)_k}\) and \(\xi_{(\omega_2)_k}\) are neural blocks acting as
learned proximal operators that map symmetric inputs to symmetric outputs (and, if
desired, incorporate PSD enforcement and diagonal handling).


Consequently, the classical LADMM updates are recovered by setting
\(\eta_{(\omega_1)_k}=\operatorname{prox}_{\,g/(\rho\phi_1)}\),
\(\xi_{(\omega_2)_k}=\operatorname{prox}_{\,(\lambda/(\rho\phi_2))\|\cdot\|_{1,\mathrm{off}}}\),
\(\alpha_k=\tfrac{1}{\phi_1}\), \(\beta_k=\tfrac{1}{\phi_2}\), and \(\gamma_k\equiv 1\).
Under learning, these components become data–driven: they adapt to the target
problem family and amortize computation across instances, while preserving the
algorithmic structure and feasibility constraints of the model–based solver.

To solve optimization problem (\ref{graphical_lasso}) using the proposed LBO algorithm, we need to define a corresponding loss function. The primal problem is $\min_{\Theta} f_2(\Theta) = -\log\det(\Theta) + \Tr(S \Theta) + \lambda \|\Theta\|_{1,\mathrm{off}}$, with the corresponding dual $\max_{\Xi} d_2(\Xi) = -\log\det(S - \Xi) - p$, subject to $|\xi_{ij}| \leq \lambda$ for $i \neq j$, where \(\Xi = (\xi_{ij})_{1\leq i,j\leq p}\) is the dual variable. In the experiments, we employ the duality gap
\[
f_2(\Theta) - d_2(\Xi)
\]
as the per-iteration loss to update the parameters.

\section{Theoretical Properties}  \label{sec4}

In this section, we first discuss the convergence of LADMM and the reparameterized LADMM algorithm (i.e., our proposed method). We then analyze the statistical optimization error bound between the high-dimensional matrix estimator obtained from the optimization procedure and the true high-dimensional matrix. Moreover, since the essence of the reparameterized LADMM lies in approximating the proximal operator, we further analyze its approximation properties.

\subsection{Convergence properties of algorithms}
In this section we present convergence guarantees for both the classical LADMM
and the proposed reparameterized (learned) LADMM. We first show that LADMM
converges to a Karush-Kuhn-Tucker (KKT) point of the unified convex formulation, and then establish
that the reparameterized scheme also converges under mild conditions. Moreover,
we argue that suitable choices of the learned parameters can yield a strictly
faster convergence rate, thereby demonstrating the potential superiority of the
reparameterized method.

The covariance and precision problems in \eqref{raw_high_dimension} and
\eqref{graphical_lasso} can be written in the split form
\begin{equation}
\label{unified_opt}
\min_{X,Y}\; F(X)+G(Y), \qquad \text{s.t. } X=Y,
\end{equation}
where \(X,Y\in\mathbb{R}^{p\times p}\) and \(F,G\) are proper, closed, convex functions.
Here \(\mathcal{I}_{\{\Sigma \succeq \epsilon I\}}(\Sigma)\) denotes the indicator that equals
\(0\) if \(\Sigma \succeq \epsilon I\) and \(+\infty\) otherwise. The specific choices of
\(F\) and \(G\) are as follows. For covariance estimation \eqref{raw_high_dimension} with \((X,Y)=(\Sigma,W)\),
\[
F(X)=\tfrac{1}{2}\|X-S\|_F^2+\mathcal{I}_{\{X \succeq \epsilon I\}}(X),
\qquad
G(Y)=\lambda\,\|Y\|_{1,\mathrm{off}} .
\]

For precision (graphical Lasso) \eqref{graphical_lasso} with \((X,Y)=(\Theta,Z)\),
\[
F(X)=\mathrm{Tr}(S X)-\log\det(X)+\mathcal{I}_{\{X \succeq \epsilon I\}}(X),
\qquad
G(Y)=\lambda\,\|Y\|_{1,\mathrm{off}} .
\]
The PSD indicator with floor \(\epsilon>0\) enforces feasibility and, in the precision case,
ensures that \(-\log\det(X)\) is well defined on the domain \(X\succeq \epsilon I\).
Let \(V\) denote the Lagrange multiplier and \(\rho>0\) the penalty parameter used in the
augmented Lagrangian and LADMM updates. At this point, for the optimization problem (\ref{unified_opt}), its LADMM update at iteration \(k\) is:
\begin{equation}\label{unified_LADMM}
\left\{
\begin{aligned}
X^{(k+1)} &= \operatorname{prox}_{\,F/(\rho\phi_1)}\!\left(
X^{(k)} - \frac{1}{\phi_1}\big(X^{(k)} - Y^{(k)} + V^{(k)}/\rho\big)
\right),\\[6pt]
Y^{(k+1)} &= \operatorname{prox}_{\,G/(\rho\phi_2)}\!\left(
Y^{(k)} - \frac{1}{\phi_2}\big(Y^{(k)} - X^{(k+1)} - V^{(k)}/\rho\big)
\right),\\[6pt]
V^{(k+1)} &= V^{(k)} + \rho\,\big(X^{(k+1)} - Y^{(k+1)}\big),
\end{aligned}
\right.
\end{equation}
where \(\phi_1,\phi_2>0\).
Introducing learnable, stage-wise parameters and replacing the proximal maps by
neural blocks yields
\begin{equation}
\label{unified_neural_LADMM}
\left\{
\begin{aligned}
X^{(k+1)} &= \eta_{(w_1)_k}\!\left(
X^{(k)} - \alpha_k \circ \big(V^{(k)} + \gamma_k\circ(X^{(k)} - Y^{(k)})\big)
\right), \\[1ex]
Y^{(k+1)} &= \xi_{(w_2)_k}\!\left(
Y^{(k)} + \beta_k \circ \big(V^{(k)} + \gamma_k\circ(X^{(k+1)} - Y^{(k)})\big)
\right), \\[1ex]
V^{(k+1)} &= V^{(k)} + \gamma_k\circ\big(X^{(k+1)} - Y^{(k+1)}\big),
\end{aligned}
\right.
\end{equation}
where \(\circ\) denotes the Hadamard product; the step/penalty schedules
\(\alpha_k,\beta_k,\gamma_k\) may be scalars, diagonal preconditioners, or
entrywise tensors; and \(\eta_{(w_1)_k},\xi_{(w_2)_k}\) are symmetric-preserving
neural blocks acting as learned proximal operators.

\begin{figure}
    \centering
    \includegraphics[width=\linewidth]{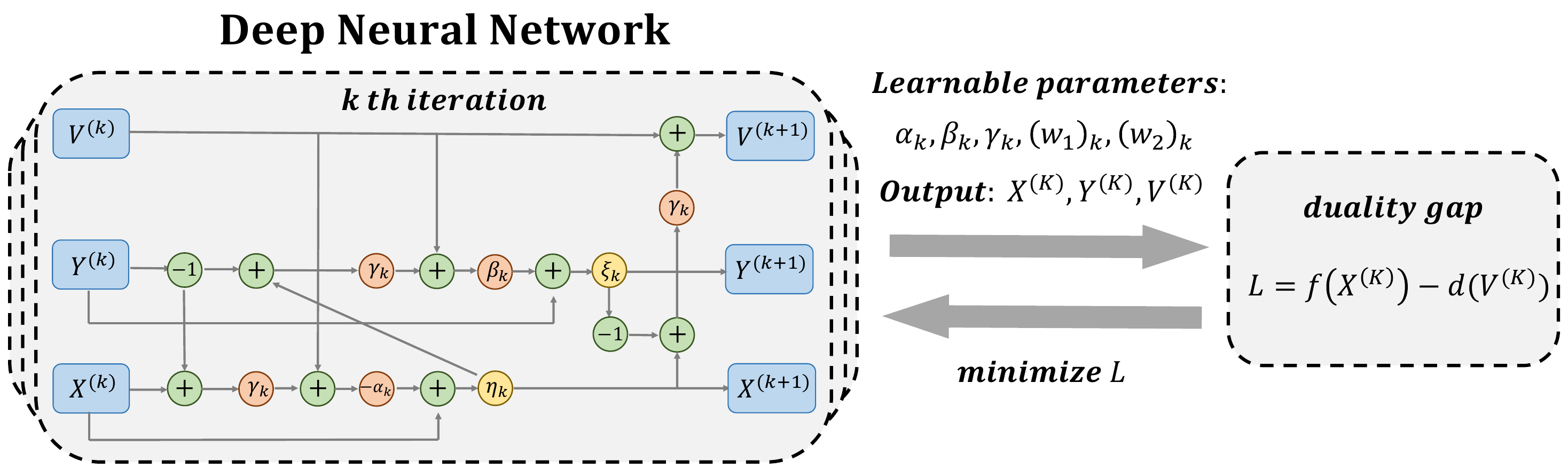}
    \caption{Overview of the proposed LBO algorithm framework. The left panel illustrates the forward process with total $K$ iterations of the algorithm, while the right panel shows the target loss function which is used to update learnable parameters. The operators $\eta_k,\xi_k$ are parameterized by $(w_1)_k,(w_2)_k$, respectively.}
    \label{fig:lbo_flowchart}
\end{figure}

We begin by presenting the convergence results of LADMM (\ref{unified_LADMM}) for the unified optimization problem (\ref{unified_opt}).
\begin{theorem}[Convergence of LADMM]\label{convergence_of_LADMM}
If $\phi_1,\phi_2>1$, then the sequence $\{X^{(k)},Y^{(k)},V^{(k)}\}$ generated by \eqref{unified_LADMM} converges to a KKT point of problem \eqref{unified_opt}.  
\end{theorem}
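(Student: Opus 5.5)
We argue through the standard Lyapunov (Fejér-monotonicity) analysis of linearized ADMM, writing every step in variational form. Let $(X^*,Y^*,V^*)$ be any KKT point of \eqref{unified_opt}, i.e. $-V^*\in\partial F(X^*)$, $V^*\in\partial G(Y^*)$ and $X^*=Y^*$. Such a point exists because $F$ is strongly convex on its domain, $G$ is finite everywhere, and Slater's condition holds.

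\textbf{Step 1: optimality conditions of the updates.} The $X$-update in \eqref{unified_LADMM} minimizes $F(X)+\langle \rho(X^{(k)}-Y^{(k)})+V^{(k)},X\rangle+\frac{\rho\phi_1}{2}\|X-X^{(k)}\|_F^2$. This gives
\[
-V^{(k)}-\rho(X^{(k)}-Y^{(k)})-\rho\phi_1(X^{(k+1)}-X^{(k)})\in\partial F(X^{(k+1)}).
\]
Likewise, the $Y$-update gives
\[
V^{(k)}+\rho(X^{(k+1)}-Y^{(k)})-\rho\phi_2(Y^{(k+1)}-Y^{(k)})\in\partial G(Y^{(k+1)}).
\]
Using $V^{(k+1)}=V^{(k)}+\rho(X^{(k+1)}-Y^{(k+1)})$, we rewrite both inclusions in terms of $V^{(k+1)}$ plus correction terms:
\[
-V^{(k+1)}-\rho\big[(\phi_1-1)(X^{(k+1)}-X^{(k)})+(Y^{(k+1)}-Y^{(k)})\big]\in\partial F(X^{(k+1)}),
\]
\[
V^{(k+1)}-\rho(\phi_2-1)(Y^{(k+1)}-Y^{(k)})\in\partial G(Y^{(k+1)}).
\]

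\textbf{Step 2: contraction inequality.} Pair these inclusions with $-V^*\in\partial F(X^*)$ and $V^*\in\partial G(Y^*)$ via monotonicity of $\partial F$ and $\partial G$, and add the two resulting inequalities. Use $X^{(k+1)}-Y^{(k+1)}=(V^{(k+1)}-V^{(k)})/\rho$ and the polarization identity $2\langle a-b,a-c\rangle=\|a-c\|^2-\|b-c\|^2+\|a-b\|^2$. We aim for
\[
\|u^{(k+1)}-u^*\|_H^2\le \|u^{(k)}-u^*\|_H^2-c\,\|u^{(k+1)}-u^{(k)}\|_H^2 .
\]
Here $u=(X,Y,V)$ and $H=\mathrm{diag}\big(\rho(\phi_1-1)I,\;\rho\phi_2 I,\;\rho^{-1}I\big)$, possibly with a modified $Y$-block.

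\textbf{Step 3: the main obstacle.} The cross term $\langle Y^{(k+1)}-Y^{(k)},\,V^{(k+1)}-V^{(k)}\rangle$ comes from the $Y$-shift in the $X$-inclusion, and controlling it is the hardest part. I would first handle it as in the classical ADMM analysis. Apply monotonicity of $\partial G$ at the consecutive points $Y^{(k+1)}$ and $Y^{(k)}$, using the $G$-inclusion at iterations $k$ and $k+1$. This shows the cross term is bounded below by a telescoping difference $\rho(\phi_2-1)\big(\|Y^{(k+1)}-Y^{(k)}\|^2-\langle\cdot\rangle\big)$, which can be absorbed into the $H$-norm with a slightly enlarged $Y$-weight. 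If this route fails, I would fall back on Young's inequality. That costs a factor in the constants, and the condition $\phi_1,\phi_2>1$ must then be strong enough to keep all diagonal weights positive; if it is not, I would enlarge the weights further. Getting strictly positive coefficients on $\|X^{(k+1)}-X^{(k)}\|^2$, $\|Y^{(k+1)}-Y^{(k)}\|^2$ and $\|V^{(k+1)}-V^{(k)}\|^2$ is exactly where $\phi_i>1$ is used.

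\textbf{Step 4: conclusion.} From the contraction inequality, $\{u^{(k)}\}$ is bounded and $\sum_k\|u^{(k+1)}-u^{(k)}\|^2<\infty$, so the successive differences tend to zero. Take a cluster point $\bar u$ along a subsequence. The correction terms in Step 1 vanish in the limit, $X-Y\to 0$, and the graphs of $\partial F$ and $\partial G$ are closed. Hence $\bar u$ satisfies the KKT system. Finally, substitute $u^*=\bar u$ in the Fejér inequality. The $H$-distance to $\bar u$ is then nonincreasing and tends to zero along the subsequence, so the whole sequence converges to $\bar u$, which proves Theorem~\ref{convergence_of_LADMM}.
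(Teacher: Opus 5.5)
Your outline follows the paper's proof. It obtains a Fejér-type inequality from monotonicity of $\partial F,\partial G$ against the KKT inclusions, shows successive differences are summable, gets a KKT cluster point by closedness, and concludes whole-sequence convergence by re-centering the potential at that point. Your metric is the correct one: $H/\rho$ has weights $(\phi_1-1,\phi_2,\rho^{-2})$. The paper's printed potential puts weight $1$ on $\|Y-Y^*\|^2$, and with that weight its displayed identity is off by $(\phi_2-1)\bigl(\|Y^{(k)}-Y^*\|^2-\|Y^{(k+1)}-Y^*\|^2\bigr)$.

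You diverge from the paper in Step 3, and the cross term is not really an obstacle. Set $\Delta X:=X^{(k+1)}-X^{(k)}$, and likewise $\Delta Y,\Delta V$. Substituting $X^{(k+1)}-X^*=(Y^{(k+1)}-Y^*)+\rho^{-1}\Delta V$ into your summed monotonicity inequality gives
\[
\|u^{(k+1)}-u^*\|_H^2\le\|u^{(k)}-u^*\|_H^2-\rho(\phi_1-1)\|\Delta X\|^2-\rho(\phi_2-1)\|\Delta Y\|^2-\rho^{-1}\|\Delta V+\rho\,\Delta Y\|^2 .
\]
So $2\langle\Delta V,\Delta Y\rangle$ is absorbed simply by completing the square. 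Equivalently, with $a=\sqrt{\rho}\,\Delta Y$ and $b=\Delta V/\sqrt{\rho}$, the form $\phi_2\|a\|^2+2\langle a,b\rangle+\|b\|^2$ is positive definite exactly when $\phi_2>1$. Hence your target inequality holds directly with some $c>0$, and this is the paper's step.

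Your consecutive-iterate argument also works for $k\ge1$. It gives $2\langle\Delta V,\Delta Y\rangle\ge\rho(\phi_2-1)\bigl(\|Y^{(k+1)}-Y^{(k)}\|^2-\|Y^{(k)}-Y^{(k-1)}\|^2\bigr)$. However, this term cannot be absorbed by reweighting $\|Y-Y^*\|^2$ in $H$. You would have to add $\rho(\phi_2-1)\|Y^{(k)}-Y^{(k-1)}\|^2$ to the potential as a separate term and run Step 4 with that augmented potential. That is fine, since the extra term tends to zero. This route gives the full decrease $\|u^{(k+1)}-u^{(k)}\|_H^2$, at the cost of an extra monotonicity step.

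The Young's-inequality fallback with ``enlarged weights'' is not well defined, because the weights are fixed by the identity; it is also unnecessary. A minor point: in the precision case $F$ is not strongly convex on $\{X\succeq\epsilon I\}$. Existence of a KKT point should therefore be assumed, as the paper does, or obtained from coercivity.
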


The requirement \(\phi_1,\phi_2>1\) enforces a proper majorization of the linearized quadratic terms, turning each subproblem into a strongly convex proximal step and
thereby yielding a 
{Fejér decrease} with respect to the KKT set via the Lyapunov potential
\[
\mathcal{E}_k
:= (\phi_1-1)\,\|X^{(k)}-X^*\|^2 \;+\; \|Y^{(k)}-Y^*\|^2
\;+\; \rho^{-2}\|V^{(k)}-V^*\|^2,
\]
for any KKT point \((X^*,Y^*,V^*)\). In particular, the proof
establishes
\[
\mathcal E_{k+1}\;\le\;\mathcal E_k
-(\phi_1-1)\|X^{(k+1)}-X^{(k)}\|^2
-(\phi_2-1)\|Y^{(k+1)}-Y^{(k)}\|^2
-\big\|\rho^{-1}(V^{(k+1)}-V^{(k)})+Y^{(k+1)}-Y^{(k)}\big\|^2,
\]
so that \(\|X^{(k+1)}-X^{(k)}\|\to0\), \(\|Y^{(k+1)}-Y^{(k)}\|\to0\),
\(\|V^{(k+1)}-V^{(k)}\|\to0\), and the whole sequence converges to a KKT point
of \eqref{unified_opt}.
One may {initialize} with \(\phi_i=1\)
and employ a
backtracking rule that increases \(\phi_i\) on-the-fly until a prescribed
majorization is met; this empirical variant often
works well. The theorem states a simple
sufficient (not necessary) condition that covers both covariance and precision
instances through the unified split form \eqref{unified_opt}. 



We then discuss the convergence properties of the re-parameterized LADMM. For analysis we consider the case where the learned blocks behave as weighted
proximal maps of \(F\) and \(G\):
\[
\eta_{(w_1)_k}=\operatorname{prox}_{\,(w_1)_k F}(M)
:=\arg\min_X\Big\{F(X)+\tfrac{1}{2}\big\|\tfrac{1}{\sqrt{(w_1)_k}}\circ(X-M)\big\|_F^2\Big\},
\]
\[
\xi_{(w_2)_k}=\operatorname{prox}_{\,(w_2)_k G}(M)
:=\arg\min_Y\Big\{G(Y)+\tfrac{1}{2}\big\|\tfrac{1}{\sqrt{(w_2)_k}}\circ(Y-M)\big\|_F^2\Big\}.
\]
Choose \((w_1)_k=\alpha_k\), \((w_2)_k=\beta_k\), and \(\gamma_k=1/\beta_k\). Then
\eqref{unified_neural_LADMM} can be rewritten as
\begin{equation}
\label{unified_neural_LADMM2}
\left\{
\begin{aligned}
X^{(k+1)} &= \arg\min_X\left\{
F(X)+\tfrac{1}{2}\left\|\tfrac{1}{\sqrt{\alpha_k}}\circ\Big(
X - X^{(k)} + \alpha_k\circ\big(V^{(k)}+\tfrac{1}{\beta_k}\circ(X^{(k)}-Y^{(k)})\big)
\Big)\right\|_F^2\right\}, \\[1ex]
Y^{(k+1)} &= \arg\min_Y\left\{
G(Y)+\tfrac{1}{2}\left\|\tfrac{1}{\sqrt{\beta_k}}\circ\Big(
Y - X^{(k+1)} - \beta_k\circ V^{(k)}
\Big)\right\|_F^2\right\}, \\[1ex]
V^{(k+1)} &= V^{(k)} + \tfrac{1}{\beta_k}\circ\big(X^{(k+1)} - Y^{(k+1)}\big).
\end{aligned}
\right.
\end{equation}

%
%

Before discussing the theoretical results of re-parameterized LADMM, we first introduce some basic notations. Let \(\mathcal{S}:=\{(\alpha,\beta):\, 0<\alpha<\beta,\ \alpha,\beta\in\mathbb{R}^{p\times p}\}\),
and denote \(\omega_k:=(X^{(k)},Y^{(k)},V^{(k)})^\top\).
Let \(\omega^*:=(X^*,Y^*,V^*)^\top\) be a KKT point of \eqref{unified_opt}, and
let \(\Omega^*\) be the set of all such points. Define the block-diagonal,
entrywise positive operator
\[
H_k(\omega):=\Big(\big(\tfrac{1}{\alpha_k}-\tfrac{1}{\beta_k}\big)\circ X,\;
\tfrac{1}{\beta_k}\circ Y,\; \beta_k\circ V\Big)^\top,\qquad
\phi(\omega):=(V,-V,Y-X)^\top.
\]
For any \(\omega\), set \(\|\omega\|_{H_k}^2:=\langle \omega, H_k(\omega)\rangle\).
We call \(H_k\succ 0\) (positive definite) if \(\|\omega\|_{H_k}^2>0\) for all
\(\omega\neq 0\). The induced operator norm is
\(\|H_k\|:=\sup_{\omega\neq0}\big|\|\omega\|_{H_k}^2\big|/\|\omega\|_F^2\). Then the convergence theorem can be established as follows.


\begin{theorem}[Convergence of re-parameterized LADMM]\label{convergence_of_neural_LADMM}

There exist parameters \((\alpha_k,\beta_k)\in\mathcal{S}\) such that the
sequence \(\{\omega_k\}\) generated by \eqref{unified_neural_LADMM2} converges to
a KKT point of \eqref{unified_opt}.
\end{theorem}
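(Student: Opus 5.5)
The plan is to read \eqref{unified_neural_LADMM2} as a weighted (entrywise-preconditioned) linearized ADMM. We will show that, for a suitable admissible choice of $(\alpha_k,\beta_k)\in\mathcal S$, the iterates are Fejér monotone with respect to $\Omega^*$ in the $H_k$-norm. Convergence to a KKT point then follows by a standard Opial-type argument. The existential form of the statement lets us make a simple choice: we take $\alpha_k\equiv\alpha$ and $\beta_k\equiv\beta$ constant, with $0<\alpha<\beta$ entrywise. Then $H_k\equiv H$ is fixed and positive definite, because $\tfrac1\alpha-\tfrac1\beta>0$, $\tfrac1\beta>0$ and $\beta>0$ entrywise. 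This already covers the classical LADMM case $\alpha=1/(\rho\phi_1)$, $\beta=1/\rho$ with $\phi_1>1$, consistent with Theorem \ref{convergence_of_LADMM}.

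\textbf{Step 1: variational inequalities.} We write the optimality conditions of the two subproblems in \eqref{unified_neural_LADMM2}. Using the dual update $V^{(k+1)}=V^{(k)}+\tfrac1{\beta}\circ(X^{(k+1)}-Y^{(k+1)})$, they take the form
\[
0\in\partial F(X^{(k+1)})+V^{(k+1)}+\Big(\tfrac1\alpha-\tfrac1\beta\Big)\circ(X^{(k+1)}-X^{(k)})+\tfrac1\beta\circ(Y^{(k+1)}-Y^{(k)}),
\]
\[
0\in\partial G(Y^{(k+1)})-V^{(k+1)}.
\]
The $Y$-condition simplifies this way because $Y^{(k+1)}-X^{(k+1)}-\beta\circ V^{(k)}=-\beta\circ V^{(k+1)}$. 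Monotonicity of $\partial F$ and $\partial G$ is then combined with the KKT conditions $-V^*\in\partial F(X^*)$, $V^*\in\partial G(Y^*)$ and $X^*=Y^*$. Summing gives
\[
\langle \omega_{k+1}-\omega^*,\ H(\omega_k-\omega_{k+1})\rangle+\langle \omega_{k+1}-\omega^*,\phi(\omega_{k+1})-\phi(\omega^*)\rangle\ \ge\ \text{(cross terms)}.
\]
Here $\phi$ is skew, so $\langle\omega-\omega',\phi(\omega)-\phi(\omega')\rangle=0$. The cross term $\langle Y^{(k+1)}-Y^{(k)},V^{(k+1)}-V^{(k)}\rangle$ is nonnegative by monotonicity of $\partial G$ applied at iterations $k$ and $k+1$.

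\textbf{Step 2: Fejér inequality.} Applying the polarization identity $2\langle a-b,H(c-a)\rangle=\|c-b\|_H^2-\|a-b\|_H^2-\|c-a\|_H^2$ yields
\[
\|\omega_{k+1}-\omega^*\|_H^2\le\|\omega_k-\omega^*\|_H^2-c\,\|\omega_{k+1}-\omega_k\|_{H}^2
\]
for some $c>0$. This is the weighted analogue of the $\mathcal E_k$ decrease stated after Theorem \ref{convergence_of_LADMM}. Absorbing the mixed term $\tfrac1\beta\circ(Y^{(k+1)}-Y^{(k)})$ that appears in the $X$-condition is the main obstacle. It has to be split via Young's inequality between the $Y$-block and the $V$-block of $H$, and this is exactly where $\alpha<\beta$, and possibly a strict margin $\alpha\le(1-\delta)\beta$, is needed. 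If that bookkeeping fails, the fallback is to prove Fejér monotonicity with the explicitly modified metric $\mathcal E_k$-type potential, which weights the $V$-block by $\beta$ and the $Y$-block by $1/\beta$ and includes the combined term $\|\beta\circ(V^{(k+1)}-V^{(k)})+Y^{(k+1)}-Y^{(k)}\|^2$, as in the LADMM proof.

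\textbf{Step 3: limit.} From Step 2, $\{\omega_k\}$ is bounded and $\sum_k\|\omega_{k+1}-\omega_k\|^2<\infty$, so $\omega_{k+1}-\omega_k\to0$. Take any cluster point $\bar\omega$ and pass to the limit in the inclusions of Step 1, using closedness of the graphs of $\partial F$ and $\partial G$ (both are maximal monotone, since $F$ and $G$ are proper, closed and convex). The vanishing increments kill the $H$-terms, so $\bar\omega\in\Omega^*$. Because $\|\omega_k-\bar\omega\|_H$ is nonincreasing and $H\succ0$, the whole sequence converges to $\bar\omega$, which proves Theorem \ref{convergence_of_neural_LADMM}.
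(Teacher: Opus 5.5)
Your proposal is correct and follows essentially the paper's argument. The paper uses the same variational inequality in the $H$-metric, closed by the cross term $\langle V^{(k+1)}-V^{(k)},Y^{(k+1)}-Y^{(k)}\rangle\ge 0$ from monotonicity of $\partial G$, followed by Fej\'er monotonicity and a cluster-point argument; fixing $(\alpha_k,\beta_k)\equiv(\alpha,\beta)$ is legitimate for this existential statement and only spares you the paper's control of the drifting metric via $\|H_{k+1}-H_k\|\le \tfrac{c_0}{(k+1)^2}\|H_{k+1}\|$. Note also that the ``main obstacle'' you flag in Step 2 does not exist: since $X^*=Y^*$ and $X^{(k+1)}-Y^{(k+1)}=\beta\circ(V^{(k+1)}-V^{(k)})$, the mixed term $\langle X^{(k+1)}-X^*,\tfrac1\beta\circ(Y^{(k)}-Y^{(k+1)})\rangle$ equals the $Y$-block $\langle Y^{(k+1)}-Y^*,\tfrac1\beta\circ(Y^{(k)}-Y^{(k+1)})\rangle$ minus that same cross term, so Step 1 already gives the Fej\'er inequality with $c=1$, with no Young's inequality or margin $\alpha\le(1-\delta)\beta$ required ($\alpha<\beta$ is used only to make $H\succ 0$).
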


The learned scheme is analyzed under the \emph{variable-metric} interpretation:
the neural blocks act as weighted proximal operators and the parameters
\((\alpha_k,\beta_k)\in\mathcal{S}\) induce an iteration-dependent geometry.
This connects the reparameterized LADMM to preconditioned operator-splitting
methods. The proof process of this theorem shows that, provided the variable metric remains positive
definite (i.e., \(0<\alpha_k<\beta_k\)), and slowly varying, i.e., $\|H_{k+1}-H_k\|\le\mathcal{O}(1/(k+1)^2)$, the method inherits global convergence
to a KKT point of \eqref{unified_opt}. In practice, even more general
nonexpansive/averaged learned maps often work, but the proximal surrogate
assumption yields clean guarantees.

Building on the above discussion, we observe that the reparameterized scheme subsumes LADMM as a special case and enables data-adaptive preconditioning through \(\alpha_k,\beta_k,\gamma_k\) together with learned proximal surrogates \(\eta\) and \(\xi\). The metric \(H_k\) formalizes the iteration-dependent geometry induced by these weights. Theorems~\ref{monotonicity_of_neural_LADMM} and~\ref{convergence_rate_of_neural_LADMM} establish monotonicity and convergence rates for \eqref{unified_neural_LADMM2}. Moreover, Theorem~\ref{superiority_of_neural_LADMM} shows that, under mild regularity and suitable parameter schedules, the method achieves provably faster progress than baseline LADMM.



Define the distance to the solution set in the \(H_k\)-metric by
\(\operatorname{dist}_{H_k}(\omega,\Omega^*):=\inf_{\omega^*\in\Omega^*}\|\omega-\omega^*\|_{H_k}\). Then we can show the monotonicity property of \eqref{unified_neural_LADMM2}.

\begin{theorem}[Monotonicity of re-parameterized LADMM]\label{monotonicity_of_neural_LADMM}
There exist parameters \((\alpha_k,\beta_k)\in\mathcal{S}\) such that the
sequence \(\{\omega_k\}\) generated by \eqref{unified_neural_LADMM2} satisfies
that \(\operatorname{dist}_{H_k}(\omega_k,\Omega^*)\) is nonincreasing for all
sufficiently large \(k\).
\end{theorem}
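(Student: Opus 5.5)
The plan is to derive a one-step Fejér-type inequality for \eqref{unified_neural_LADMM2} in the variable metric $H_k$, and then pass to the distance by taking the infimum over $\Omega^*$. The parameters $(\alpha_k,\beta_k)$ will be chosen in $\mathcal{S}$ with a slowly varying schedule, so that the change of metric from $H_k$ to $H_{k+1}$ is harmless.

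\emph{Step 1: variational inequalities.} First I will write the optimality conditions of the three updates in \eqref{unified_neural_LADMM2} as subgradient inclusions. For the $X$-step,
\[
0\in \partial F(X^{(k+1)})+\tfrac{1}{\alpha_k}\circ(X^{(k+1)}-X^{(k)})+V^{(k)}+\tfrac{1}{\beta_k}\circ(X^{(k)}-Y^{(k)}).
\]
The $Y$-step and the dual update give the analogous inclusion for $\partial G$. Using $V^{(k+1)}=V^{(k)}+\tfrac{1}{\beta_k}\circ(X^{(k+1)}-Y^{(k+1)})$, I will eliminate $V^{(k)}$ in favour of $V^{(k+1)}$. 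This yields the compact form: for all $\omega$,
\[
\theta(u)-\theta(u^{(k+1)})+\langle \omega-\omega_{k+1},\phi(\omega_{k+1})\rangle\ge\langle \omega-\omega_{k+1},H_k(\omega_k-\omega_{k+1})\rangle ,
\]
where $\theta(u)=F(X)+G(Y)$. Setting $\omega=\omega^*\in\Omega^*$ and invoking the KKT conditions together with monotonicity of $\partial F,\partial G$ and skew-symmetry of $\phi$, the left side is nonpositive. Hence $\langle \omega_{k+1}-\omega^*,H_k(\omega_k-\omega_{k+1})\rangle\ge 0$. The identity $2\langle a,H(b-a)\rangle=\|b\|_H^2-\|a\|_H^2-\|b-a\|_H^2$ then gives
\[
\|\omega_{k+1}-\omega^*\|_{H_k}^2\le\|\omega_k-\omega^*\|_{H_k}^2-\|\omega_k-\omega_{k+1}\|_{H_k}^2 .
\]
Positive definiteness of $H_k$ is exactly $0<\alpha_k<\beta_k$, which is why we need $(\alpha_k,\beta_k)\in\mathcal{S}$. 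This is the same computation that underlies Theorem~\ref{convergence_of_neural_LADMM}, and I will take it as given there.

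\emph{Step 2: change of metric.} Here I will exploit the freedom in the existential statement. I will choose the schedule so that $H_{k+1}\preceq H_k$ entrywise, i.e. $\beta_k$ is nonincreasing and $\tfrac{1}{\alpha_k}-\tfrac{1}{\beta_k}$ and $\tfrac1{\beta_k}$ are nonincreasing, while keeping $0<\alpha_k<\beta_k$. The simplest such choice is constant after some index $k_0$: $(\alpha_k,\beta_k)=(\bar\alpha,\bar\beta)$ for $k\ge k_0$. This is compatible with the $\mathcal{O}(1/(k+1)^2)$ slow-variation condition and with Theorem~\ref{convergence_of_neural_LADMM}. With this choice,
\[
\|\omega_{k+1}-\omega^*\|_{H_{k+1}}\le\|\omega_{k+1}-\omega^*\|_{H_k}\le\|\omega_k-\omega^*\|_{H_k}\quad\text{for every }\omega^*\in\Omega^*.
\]

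\emph{Step 3: pass to the distance.} Taking the infimum over $\omega^*$ on the right first, then on the left, gives $\operatorname{dist}_{H_{k+1}}(\omega_{k+1},\Omega^*)\le\operatorname{dist}_{H_k}(\omega_k,\Omega^*)$ for all $k\ge k_0$, which is the claim.

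\emph{Main obstacle.} The hardest part is Step 1's bookkeeping of the entrywise weights. The cross term $\tfrac1{\beta_k}\circ(X^{(k)}-Y^{(k)})$ must be rewritten so that the $X$-block of $H_k$ comes out as $(\tfrac1{\alpha_k}-\tfrac1{\beta_k})\circ X$. The remaining coupling $\langle X^{(k+1)}-X^*,\tfrac1{\beta_k}\circ(Y^{(k)}-Y^{(k+1)})\rangle$ must then be absorbed into the $Y$ and $V$ blocks. I will try to absorb it by substituting $Y^{(k+1)}=X^{(k+1)}-\beta_k\circ(V^{(k+1)}-V^{(k)})$.

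If the metric is genuinely varying and not monotone, the fallback is to replace Step 2 by
\[
\|\cdot\|_{H_{k+1}}^2\le(1+c/(k+1)^2)\|\cdot\|_{H_k}^2 .
\]
This only gives quasi-monotonicity, which is why the constant-tail choice is preferred.
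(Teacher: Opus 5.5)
Your proof is correct, but it handles the change of metric differently from the paper. Both arguments start from the same Fej\'er inequality \eqref{contraction_ineq} in the frozen metric $H_k$.

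The paper keeps $H_k$ genuinely varying. It first disposes of stationary steps via Lemma~\ref{absorbing_states}. It then introduces an iterate-dependent ratio $\rho_k$ with $\operatorname{dist}^2_{H_k}(\omega_{k+1},\Omega^*)\le\rho_k\|\omega_k-\omega_{k+1}\|^2_{H_k}$ and combines it with \eqref{contraction_ineq} to get the strict factor $(1+1/\rho_k)^{-1}$. It spends that slack to absorb a perturbation $\|H_{k+1}-H_k\|\le\tau_k\|H_k\|$, with $\tau_k$ chosen small relative to $\rho_k$.

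You instead use the existential quantifier to make the metric change harmless outright: $H_{k+1}\preceq H_k$, e.g.\ a constant tail. Non-increase of the distance then follows just by taking infima, with no error-bound ratio and no case split on stationary steps.

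What the paper's route buys is strict decrease away from $\Omega^*$ and tolerance of schedules whose metric may grow slightly. The price is an adaptive, non-uniform $\tau_k$. Its bound $\|\omega_{k+1}-\omega^*\|^2_{H_{k+1}-H_k}\le\tau_k\|\omega_{k+1}-\omega^*\|^2_{H_k}$ also tacitly needs a condition-number factor of $H_k$, which shrinking $\tau_k$ absorbs. Your route proves exactly the stated non-increase. It essentially reduces the theorem to Fej\'er monotonicity of a fixed-metric scheme, which the existential phrasing permits.

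Two small remarks:
\begin{itemize}
\item Your conditions for $H_{k+1}\preceq H_k$ require both the $Y$-block weight $1/\beta_k$ and the $V$-block weight $\beta_k$ to be nonincreasing, which forces $\beta_k$ to be constant. Your monotone family is therefore $\beta_k\equiv\bar\beta$ with $\alpha_k$ nondecreasing below $\bar\beta$. This is the same device the paper uses in proving Theorem~\ref{convergence_rate_of_neural_LADMM}.
\item The compact variational inequality in your Step~1 does not hold for all $\omega$ as written. After substituting $X^{(k+1)}-Y^{(k+1)}=\beta_k\circ(V^{(k+1)}-V^{(k)})$, two extra terms remain on the right:
\begin{itemize}
\item $\langle X-Y,\tfrac{1}{\beta_k}\circ(Y^{(k)}-Y^{(k+1)})\rangle$, which vanishes when $X=Y$ (as at $\omega^*$) but not in general;
\item $\langle V^{(k+1)}-V^{(k)},Y^{(k+1)}-Y^{(k)}\rangle$, which is nonnegative by monotonicity of $\partial G$, since $V^{(k)}\in\partial G(Y^{(k)})$ for $k\ge1$.
\end{itemize}
The second fact is \eqref{lem2_3}, and it is exactly how your ``main obstacle'' is resolved. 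So the inequality you need is valid at $\omega=\omega^*$, which is all you use.
\end{itemize}
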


The quantity \(\operatorname{dist}_{H_k}(\omega_k,\Omega^*)\) plays the role of a
Lyapunov function in a \emph{time-varying metric} \(H_k\). Monotonicity "for
sufficiently large \(k\)" reflects that once the parameter schedule stabilizes
(or varies slowly), the iteration contracts toward \(\Omega^*\) in the induced
metric. This clarifies why gentle parameter updates (or piecewise-constant
schedules) are numerically robust, and why aggressive, rapidly changing weights
can transiently break monotonicity.

For rate statements, define the update operator
\(\mathcal{T}(\alpha_k,\beta_k)(\omega_k)=\omega_{k+1}\) and assume
\((\alpha_k,\beta_k)\to(\alpha^*,\beta^*)\in\mathcal{S}\). We then have the convergence rate of \eqref{unified_neural_LADMM2}.


\begin{theorem}[Convergence rate of re-parameterized LADMM]\label{convergence_rate_of_neural_LADMM}
Let \(\{\omega_k\}\) be generated by \eqref{unified_neural_LADMM2}. Suppose that
for all sufficiently large \(k\),
\(\operatorname{dist}^2_{H^*}(\widetilde{\omega}_{k+1},\Omega^*) \le
(\kappa/16)\,\|\widetilde{\omega}_{k+1}-\omega_k\|_{H^*}^2\), where \(H^*\) is
defined by \((\alpha^*,\beta^*)\) and
\(\widetilde{\omega}_{k+1}:=\mathcal{T}(\alpha^*,\beta^*)(\omega_k)\).
Then there exist parameters \((\alpha_k,\beta_k)\in\mathcal{S}\) such that
\[
\operatorname{dist}^2_{H_{k+1}}(\omega_{k+1},\Omega^*)
\;\le\; \gamma\,\operatorname{dist}^2_{H_k}(\omega_k,\Omega^*)
\qquad\text{with }~0<\gamma<1,
\]
i.e., the convergence is (asymptotically) linear in the \(H_k\)-metric.
\end{theorem}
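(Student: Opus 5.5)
The plan is to obtain linear convergence by combining a contraction inequality for the frozen-parameter operator $\mathcal{T}(\alpha^*,\beta^*)$ with a perturbation bound that transfers it to the actual iterate $\omega_{k+1}=\mathcal{T}(\alpha_k,\beta_k)(\omega_k)$.

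\textbf{Step 1: the frozen step.} I would reuse the variational-inequality argument behind Theorem~\ref{convergence_of_neural_LADMM}. For \eqref{unified_neural_LADMM2}, write the first-order optimality conditions of the $X$- and $Y$-subproblems and combine them with the dual update. Together with monotonicity of $\partial F$, $\partial G$ and skew-symmetry of $\phi$, this gives, for fixed parameters $(\alpha^*,\beta^*)$ and any $\omega^*\in\Omega^*$, the Fejér-type inequality
\[
\|\widetilde{\omega}_{k+1}-\omega^*\|_{H^*}^2\le\|\omega_k-\omega^*\|_{H^*}^2-\|\widetilde{\omega}_{k+1}-\omega_k\|_{H^*}^2 .
\]
This is where $0<\alpha^*<\beta^*$ enters: it makes the weight $\tfrac{1}{\alpha^*}-\tfrac{1}{\beta^*}$ in $H^*$ positive. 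Now take $\omega^*$ to be the $H^*$-projection of $\omega_k$ onto $\Omega^*$, which is closed and convex. Use $\operatorname{dist}_{H^*}(\widetilde\omega_{k+1})\le\|\widetilde\omega_{k+1}-\omega^*\|_{H^*}$ and then the error-bound hypothesis $\|\widetilde\omega_{k+1}-\omega_k\|^2_{H^*}\ge(16/\kappa)\operatorname{dist}^2_{H^*}(\widetilde\omega_{k+1},\Omega^*)$. This yields
\[
\operatorname{dist}^2_{H^*}(\widetilde{\omega}_{k+1},\Omega^*)\le \frac{\kappa}{\kappa+16}\operatorname{dist}^2_{H^*}(\omega_k,\Omega^*).
\]
So the frozen map is a strict contraction with factor $\gamma_0:=\kappa/(\kappa+16)<1$.

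\textbf{Step 2: perturbation.} Next I would bound $\|\omega_{k+1}-\widetilde\omega_{k+1}\|$. Proximal (resolvent) maps in a weighted metric are Lipschitz in the weights on bounded sets. The iterates are bounded, by Theorem~\ref{monotonicity_of_neural_LADMM} and the Fejér inequality. Together these give
\[
\|\omega_{k+1}-\widetilde\omega_{k+1}\|\le c\,\varepsilon_k\,\operatorname{dist}(\omega_k,\Omega^*),
\qquad \varepsilon_k:=\|(\alpha_k,\beta_k)-(\alpha^*,\beta^*)\| .
\]
The factor $\operatorname{dist}(\omega_k,\Omega^*)$ appears because every KKT point is a common fixed point of $\mathcal{T}(\alpha,\beta)$ for every admissible $(\alpha,\beta)$. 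Hence we may subtract $\mathcal{T}(\alpha_k,\beta_k)(\omega^*)-\mathcal{T}(\alpha^*,\beta^*)(\omega^*)=0$ before estimating.

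We also need to change metrics. Since $H_k\to H^*$ and all these metrics are uniformly equivalent on $\mathcal{S}$, we have $\|\cdot\|_{H_{k+1}}^2\le(1+\delta_k)\|\cdot\|_{H^*}^2$ and $\|\cdot\|_{H^*}^2\le(1+\delta_k)\|\cdot\|_{H_k}^2$, with $\delta_k\to0$. Combining these with the triangle inequality,
\[
\operatorname{dist}^2_{H_{k+1}}(\omega_{k+1},\Omega^*)\le(1+\delta_k)^2\big(\sqrt{\gamma_0}+c'\varepsilon_k\big)^2\operatorname{dist}^2_{H_k}(\omega_k,\Omega^*).
\]

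\textbf{Step 3: choosing the schedule.} Finally I would pick $(\alpha_k,\beta_k)\in\mathcal{S}$ converging fast enough to $(\alpha^*,\beta^*)$, for instance with $\|H_{k+1}-H_k\|=\mathcal{O}(1/(k+1)^2)$ as in the remark after Theorem~\ref{convergence_of_neural_LADMM}. Then for large $k$ the prefactor is at most some fixed $\gamma\in(\gamma_0,1)$, for example $\gamma=(1+\gamma_0)/2$.

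The main obstacle is Step 2. The learned blocks use entrywise weights, so Lipschitz dependence of the weighted prox on $\alpha,\beta$ must be established uniformly. Moreover, the perturbation has to be proportional to $\operatorname{dist}(\omega_k,\Omega^*)$ rather than merely small; otherwise one only gets linear convergence up to an additive error. I would first try the fixed-point trick above, via resolvent identities for $(I+\alpha\circ\partial F)^{-1}$. If that fails, the fallback is to impose $\varepsilon_k\to0$ summably, which gives an asymptotic linear rate for large $k$.
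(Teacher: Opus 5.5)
Your proof is correct, but it runs in the opposite direction from the paper's.

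The paper never builds a contraction for the frozen map. Instead it transfers the \emph{error bound} from $\widetilde\omega_{k+1}$ to the actual iterate, obtaining $\operatorname{dist}^2_{H_{k+1}}(\omega_{k+1},\Omega^*)\le\kappa\|\omega_{k+1}-\omega_k\|^2_{H_k}$. The factor $16$ is spent on two things: comparing $H^*$ with $H_k$ and $H_{k+1}$, and absorbing an absolute perturbation of order $|\alpha_k-\alpha^*|+|\beta_k-\beta^*|$ into $\tfrac12\operatorname{dist}_{H_{k+1}}(\omega_{k+1},\Omega^*)$. It then combines this bound with the Fej\'er inequality \eqref{contraction_ineq} for the actual step. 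To make the change of metric free, it uses the monotone schedule $\beta_k\equiv\beta^*$, $\alpha_k\uparrow\alpha^*$, so that $H_{k+1}\preceq H_k$. This gives $\gamma=(1+1/\rho_0)^{-1}$, with the range $k<K_0$ patched by a separate constant $\widetilde\kappa$.

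You instead transfer the \emph{contraction}. Fej\'er plus the hypothesis at the frozen parameters gives the explicit factor $\kappa/(\kappa+16)$. Your observation that KKT points are common fixed points of every $\mathcal{T}(\alpha,\beta)$ then turns the parameter perturbation into a relative error $O(\varepsilon_k)\operatorname{dist}(\omega_k,\Omega^*)$. That relative bound is exactly what the paper's absorption step implicitly requires, but the paper only justifies it by pointing to the degenerate choice $(\alpha_k,\beta_k)=(\alpha^*,\beta^*)$.

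With the relative bound, your argument buys the following:
\begin{itemize}
\item It works for any schedule with $(\alpha_k,\beta_k)\to(\alpha^*,\beta^*)$.
\item It needs neither monotonicity of $H_k$ nor any rate of convergence. Your boundedness remark and the $\mathcal{O}(1/(k+1)^2)$ condition are therefore superfluous.
\item It delivers the rate only for large $k$, which is what ``asymptotically'' asks.
\end{itemize}

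Two small remarks. First, the frozen-parameter Fej\'er inequality uses $V^{(k)}\in\partial G(Y^{(k)})$; this holds because $\omega_k$ was produced by a previous $Y$-update. Second, the statement is existential, so taking the constant schedule makes your Step~1 a complete proof on its own, as the paper itself notes.
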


The assumed inequality with \(H^*\) is an \emph{error-bound}/\emph{quadratic-growth}
type condition near the solution set and is closely related to the
Kurdyka–Łojasiewicz framework \citep{attouch2010proximal}. Under this local regularity and stabilization of
\((\alpha_k,\beta_k)\to(\alpha^*,\beta^*)\), the mapping becomes a contraction in
the \(H^*\)-metric, giving (asymptotically) linear rate with factor
\(\gamma\in(0,1)\). In strongly convex instances one can often promote the
assumption globally; otherwise the result should be read as a local rate around
\(\Omega^*\).

\begin{theorem}[Superiority of re-parameterized LADMM]\label{superiority_of_neural_LADMM}
Assume \(\rho<1\) in \eqref{unified_LADMM} and that the learned blocks
\(\eta,\xi\) in \eqref{unified_neural_LADMM} are bijective. Then for any
\(\omega_k\notin \Omega^*\cup\Omega_0\) and any \(\omega^*\in\Omega^*\), where
\(\Omega_0\) is a measure-zero set, there exist
\((w_1)_k,(w_2)_k,\alpha_k,\beta_k,\gamma_k\) such that
\[
\big\|\widehat{\omega}_{k+1}-\omega^*\big\|_F
\;<\;
\big\|\widetilde{\omega}_{k+1}-\omega^*\big\|_F,
\]
where \(\widetilde{\omega}_{k+1}\) and \(\widehat{\omega}_{k+1}\) are generated by
\eqref{unified_LADMM} and \eqref{unified_neural_LADMM}, respectively.

\end{theorem}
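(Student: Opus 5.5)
The proof will be an explicit construction. The statement is existential: for a fixed non-optimal iterate $\omega_k=(X^{(k)},Y^{(k)},V^{(k)})$ and a fixed $\omega^*=(X^*,Y^*,V^*)\in\Omega^*$, we must exhibit one choice of $(w_1)_k,(w_2)_k,\alpha_k,\beta_k,\gamma_k$ for which the learned step \eqref{unified_neural_LADMM} lands strictly closer to $\omega^*$ than the LADMM step \eqref{unified_LADMM}. I will use the flexibility that $\eta,\xi$ are learned bijective blocks and that $\alpha_k,\beta_k,\gamma_k$ may be entrywise tensors. The aim is to make $\widehat{\omega}_{k+1}$ coincide with, or strictly improve on, a point closer to $\omega^*$ than $\widetilde{\omega}_{k+1}:=(\widetilde X,\widetilde Y,\widetilde V)$.

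\emph{Step 1 (compare with the baseline).} First I note that \eqref{unified_neural_LADMM} contains \eqref{unified_LADMM} as a special case, as already observed in Sections~\ref{sec2}--\ref{sec3}: take $\eta=\operatorname{prox}_{F/(\rho\phi_1)}$, $\xi=\operatorname{prox}_{G/(\rho\phi_2)}$, and suitable constants. This gives $\widehat\omega_{k+1}=\widetilde\omega_{k+1}$ as a baseline, so $\|\widehat\omega_{k+1}-\omega^*\|_F\le\|\widetilde\omega_{k+1}-\omega^*\|_F$ is always achievable. The task is to make the inequality strict.

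\emph{Step 2 (perturb each block).} I will perturb block by block.
\begin{itemize}
\item For the $X$-block, the input $M_X=X^{(k)}-\alpha_k\circ(V^{(k)}+\gamma_k\circ(X^{(k)}-Y^{(k)}))$ is an affine function of $\alpha_k$. Since $\eta$ may be any bijection, I choose $\eta$ mapping $M_X$ to $\widehat X:=\widetilde X+t(X^*-\widetilde X)$ with $t\in(0,1]$. This gives $\|\widehat X-X^*\|=(1-t)\|\widetilde X-X^*\|$, a strict decrease whenever $\widetilde X\neq X^*$.
\item For the $Y$-block, I likewise choose $\xi$ sending its input to $\widetilde Y+s(Y^*-\widetilde Y)$.
\item The $V$-block is the delicate one, because $\widehat V=V^{(k)}+\gamma_k\circ(\widehat X-\widehat Y)$ is not passed through a learned map. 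I will use the entrywise freedom in $\gamma_k$. On every entry where $\widehat X-\widehat Y\neq0$, choose $(\gamma_k)_{ij}=(V^*-V^{(k)})_{ij}/(\widehat X-\widehat Y)_{ij}$, so $\widehat V_{ij}=V^*_{ij}$. On the remaining entries, a small perturbation of $t$ or $s$ makes $\widehat X-\widehat Y$ nonzero entrywise.
\end{itemize}
This is where the exceptional set $\Omega_0$ enters. It collects the degenerate configurations in which $V^{(k)}$ already agrees with $V^*$ on entries where $X-Y$ is forced to vanish, or where the required $\gamma_k$ conflicts with the shared use of $\gamma_k$ inside the $X$- and $Y$-inputs. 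Because $\eta,\xi$ absorb whatever inputs $\gamma_k$ produces, that conflict is harmless as long as the inputs are well defined. So $\Omega_0$ is contained in a finite union of zero sets of nontrivial polynomial equations in $\omega_k$, which has measure zero.

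\emph{Step 3 (strictness).} With these choices, $\widehat V=V^*$, and the $X$- and $Y$-distances shrink by factors $1-t$ and $1-s$. Hence
\[
\|\widehat\omega_{k+1}-\omega^*\|_F^2\le(1-t)^2\|\widetilde X-X^*\|_F^2+(1-s)^2\|\widetilde Y-Y^*\|_F^2 .
\]
This is strictly below $\|\widetilde\omega_{k+1}-\omega^*\|_F^2$ unless $\widetilde\omega_{k+1}=\omega^*$.

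\emph{Step 4 (the case $\widetilde\omega_{k+1}=\omega^*$).} I must exclude this case, and I expect it to be the main obstacle. I plan to use $\rho<1$ to argue that LADMM cannot reach a KKT point in one step from $\omega_k\notin\Omega^*$, except on a measure-zero set. Using the optimality conditions of the two proximal steps together with the dual update, $\widetilde\omega_{k+1}=\omega^*$ forces $\omega_k$ to satisfy
\[
V^{(k)}=V^*-\rho(\widetilde X-\widetilde Y)=V^*,
\]
together with linear relations of the form $\phi_1\rho(X^{(k)}-X^*)=\rho(X^{(k)}-Y^{(k)})$. I expect these relations to cut out a lower-dimensional affine set that can be placed inside $\Omega_0$. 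The role of $\rho<1$ is to keep the linear map $\omega_k\mapsto$ (these equations) nondegenerate, so the preimage of $\omega^*$ has measure zero. If this argument turns out to be shaky, the fallback is simply to enlarge $\Omega_0$ by $\{\omega_k:\widetilde\omega_{k+1}\in\Omega^*\}$ and show directly that this set is measure zero.
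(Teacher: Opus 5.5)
There are two genuine problems.

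\textbf{Redesigning the blocks.} Your Step 2 rests on ``since $\eta$ may be any bijection, I choose $\eta$ mapping $M_X$ to $\widehat X$'', and later on ``$\eta,\xi$ absorb whatever inputs $\gamma_k$ produces''. The statement does not give you that freedom. Bijectivity is a hypothesis on the given blocks (in the paper's analysis, the weighted proximal maps indexed by $(w_1)_k,(w_2)_k$), and it is meant to be used by \emph{inverting} a fixed block. If you could redesign the block, the hypothesis would be idle and the theorem trivial. The tunable objects are the entrywise step sizes, and you had the right tool when you noted that the block input is affine in the step size.

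The paper fixes $(w_1)_k=\alpha_k=\frac{1}{\rho\phi_1}\mathbf{1}$, $(w_2)_k=\frac{1}{\rho\phi_2}\mathbf{1}$ and $\gamma_k=\rho\mathbf{1}$, so that $\widehat X^{(k+1)}=\widetilde X^{(k+1)}$. It then realizes a prescribed $\widehat Y$ by solving $\beta_k\circ R^{(k)}=\xi^{-1}(\widehat Y)-Y^{(k)}$ entrywise, where $R^{(k)}=V^{(k)}+\rho(\widetilde X^{(k+1)}-Y^{(k)})$. The requirement that $R^{(k)}$ have no zero entry is one half of $\Omega_0$.

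Repairing your route the same way would pin $\gamma_k$ by the $V$-requirement. You would then need nonvanishing entries of $V^{(k)}+\gamma_k\circ(X^{(k)}-Y^{(k)})$ and $V^{(k)}+\gamma_k\circ(\widehat X-Y^{(k)})$, which are further exceptional conditions you have not accounted for. Your claim that $\Omega_0$ is a union of polynomial zero sets is also unsupported, since $\widetilde X,\widetilde Y$ are proximal maps of $\omega_k$, not polynomials.

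\textbf{The $V$-block step fails.} Since $X^*=Y^*$, on any entry with $\widetilde X_{ij}=\widetilde Y_{ij}=X^*_{ij}$ you have $(\widehat X-\widehat Y)_{ij}=0$ for all $t,s$. Perturbing $t,s$ therefore does nothing there, and $\widehat V_{ij}=V^{(k)}_{ij}$ cannot be moved to $V^*_{ij}$. In particular, if $\widetilde X=\widetilde Y=X^*$ but $\widetilde V\neq V^*$, your construction returns exactly $\widetilde\omega_{k+1}$ and gives no strict gain. Yet such $\omega_k$ is not in your $\Omega_0$, which you describe backwards: entries where $V^{(k)}$ already equals $V^*$ are the harmless ones.

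The paper avoids targeting $V^*$ altogether and perturbs only $Y$, by $\Delta Y=-\frac{c}{1+\rho^2}D$ with $D:=\widetilde Y^{(k+1)}-Y^*-\rho(\widetilde V^{(k+1)}-V^*)$. Because $\gamma_k=\rho$ and $\widehat X=\widetilde X$, this forces $\widehat V^{(k+1)}=\widetilde V^{(k+1)}-\rho\Delta Y$. A direct expansion then gives $\|\widetilde\omega_{k+1}-\omega^*\|_F^2-\|\widehat\omega_{k+1}-\omega^*\|_F^2=-2\langle D,\Delta Y\rangle-(1+\rho^2)\|\Delta Y\|_F^2=\frac{2c-c^2}{1+\rho^2}\|D\|_F^2$. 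This is positive for $c\in(0,1)$ whenever $D\neq0$, and in your bad case $D=-\rho(\widetilde V-V^*)\neq0$.

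The other half of $\Omega_0$ is $\{D=0\}$, which already contains your Step 4 case $\widetilde\omega_{k+1}=\omega^*$, so no argument through $\rho<1$ is needed. Your planned Step 4 relation $\phi_1\rho(X^{(k)}-X^*)=\rho(X^{(k)}-Y^{(k)})$ does not follow from the optimality conditions when $\partial F(X^*)$ is not a singleton. The clean observation is that $\widetilde\omega_{k+1}=\omega^*$ forces $\widetilde X=\widetilde Y$, hence $V^{(k)}=\widetilde V=V^*$, an affine condition of codimension $p^2$.
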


This result is \emph{one-step} and \emph{existential}: outside a measure-zero
exceptional set, there exist learned parameters that reduce the next-iterate
error more than classical LADMM. It formalizes the intuition that adaptive
preconditioning and learned proximal surrogates can accelerate progress.
However, it does not assert uniform dominance for arbitrary learned parameters:
poorly trained blocks may stagnate or diverge. The bijectivity assumption
ensures a well-defined, reversible local mapping, while \(\rho<1\) aligns the
comparison with a stable LADMM baseline.

\subsection{Statistical–optimization error analysis}
In this part, we sequentially analyze the statistical optimization errors of the high-dimensional covariance and precision matrix estimators. Before doing so, we introduce some necessary notations and preliminaries. Let \(\widetilde{X}_1, \widetilde{X}_2, \dots, \widetilde{X}_n \in \mathbb{R}^p\) be i.i.d. sub-Gaussian random vectors with mean zero and true covariance matrix \(\Sigma^\star = (\sigma^\star_{ij})_{1\leq i,j \leq p} \in \mathbb{S}^p_+\), \(\Theta^\star = (\theta^\star_{ij})_{1\leq i,j \leq p} =  (\Sigma^\star)^{-1}\) denote the corresponding true precision matrix, and let \(S=\tfrac1n\sum_{i=1}^n \widetilde{X}_i \widetilde{X}_i^\top\) be the sample covariance. We estimate \(\Sigma^\star\) by solving the optimization problem (\ref{raw_high_dimension}), and let \(\widehat{\Sigma}^\star \in \argmin_{\Sigma \succeq \epsilon I} f_1(\Sigma)\). Considering that we are estimating a high-dimensional matrix, we define \(\gA := \{(i,j): i \neq j,; \sigma^\star_{ij} \neq 0\}\) to be the off-diagonal support with size \(|\gA|\).

We know that \(\Sigma^{(k)}\) is the iterate produced by a reparameterized optimization procedure after \(k\) steps. We then separate the total error \(\lVert \Sigma^{(k)} - \Sigma^\star \rVert_F\) into a statistical part (intrinsic) \(\lVert \widehat{\Sigma}^\star - \Sigma^\star \rVert_F\) and an optimization part (algorithmic) \(\lVert \Sigma^{(k)} - \widehat{\Sigma}^\star \rVert_F\). Define the optimization suboptimality as
\[
\varepsilon_{\mathrm{opt}}(k)
:= f_1(\Sigma^{(k)})-f_1(\widehat{\Sigma}^\star)\ge 0.
\]

%

Before presenting a theoretical upper bound for the total error \(\lVert \Sigma^{(k)} - \Sigma^\star \rVert_F\), we first introduce a basic assumption and some preliminary results.
\begin{assumption}[Coordinate sub-Gaussianity \citep{vershynin2018hdp}]\label{asmp:subg}
There exists $K<\infty$ such that each coordinate $\widetilde{X}_{ij}$ of $\widetilde{X}_i$ is sub-Gaussian with
$\psitwo$-Orlicz norm bounded by $K$, i.e.\ $\normpsitwo{X_i}\le K$ for all $j=1,\dots,p$.
\end{assumption}

\begin{theorem}[Entrywise concentration]\label{thm:entry}
Under Assumption~\ref{asmp:subg}, there exist constants $c_0,C_0>0$ (depending only on the sub-Gaussian parameter $K$ and absolute constants) such that
\[
\Pr\!\left(\,\|S-\Sigma^\star\|_\infty \;\le\; C_0\,\sqrt{\frac{\log p}{n}}\,\right)
\;\ge\; 1 - 2\,p^{-c_0}.
\]
\end{theorem}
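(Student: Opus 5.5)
Since $\Sigma^\star$ is the covariance of the $\widetilde X_i$ and the mean is zero, each entry of $S-\Sigma^\star$ is an average of $n$ i.i.d. centred random variables:
\[
S_{jl}-\sigma^\star_{jl}=\frac1n\sum_{i=1}^n\big(\widetilde X_{ij}\widetilde X_{il}-\E[\widetilde X_{ij}\widetilde X_{il}]\big).
\]
The plan is therefore a standard two-step argument: a Bernstein-type tail bound for each fixed pair $(j,l)$, followed by a union bound over the $p^2$ entries.

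\textbf{Step 1: sub-exponentiality of products.} Under Assumption~\ref{asmp:subg}, each coordinate satisfies $\normpsitwo{\widetilde X_{ij}}\le K$. The product of two sub-Gaussian variables is sub-exponential, with $\normpsione{\widetilde X_{ij}\widetilde X_{il}}\le \normpsitwo{\widetilde X_{ij}}\normpsitwo{\widetilde X_{il}}\le K^2$ (cf.\ \citet{vershynin2018hdp}). Centring costs at most an absolute constant factor, giving $\normpsione{\widetilde X_{ij}\widetilde X_{il}-\sigma^\star_{jl}}\le CK^2$. No independence across coordinates is needed here; we only need independence across samples $i$, which the i.i.d.\ assumption supplies.

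\textbf{Step 2: Bernstein inequality for a fixed entry.} Bernstein's inequality for sums of independent sub-exponential variables gives, for every $t\ge0$,
\[
\Pr\big(|S_{jl}-\sigma^\star_{jl}|\ge t\big)\le 2\exp\!\Big(-c\,n\min\Big(\tfrac{t^2}{C^2K^4},\tfrac{t}{CK^2}\Big)\Big).
\]
We set $t=C_0\sqrt{\log p/n}$. In the regime $\log p\le n$ (implicit in the statement; otherwise the bound is vacuous or $C_0$ absorbs it), we have $t\le CK^2$ once $C_0$ is fixed relative to $K$, so the quadratic branch of the minimum is active. The tail bound then becomes $2\exp(-cC_0^2\log p/(C^2K^4))=2p^{-cC_0^2/(C^2K^4)}$.

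\textbf{Step 3: union bound and constants.} A union bound over the $p^2$ pairs $(j,l)$ (or $p(p+1)/2$ by symmetry) gives failure probability at most $2p^{2-cC_0^2/(C^2K^4)}$. We then choose $C_0$ large, depending only on $K$ and absolute constants, so that $c_0:=cC_0^2/(C^2K^4)-2>0$. This yields the stated bound $1-2p^{-c_0}$.

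The main obstacle is making the regime condition in Step 2 explicit. The linear branch of Bernstein's inequality would only give $\exp(-c\,C_0\sqrt{n\log p})$, which is still at most $p^{-c'}$ when $n\ge\log p$, so either branch suffices. I would handle this by stating $n\gtrsim\log p$ as a standing condition and noting that the bound holds trivially otherwise after enlarging $C_0$.
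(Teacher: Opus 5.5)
Your proposal matches the paper's proof essentially step for step: a $\psi_1$-bound of order $K^2$ for the centred products, Bernstein for each fixed entry, a union bound over the $p^2$ entries, and $C_0$ taken large relative to $K^2$ so that $2p^{2-(\cdot)}\le 2p^{-c_0}$, all under the standing condition $n\gtrsim\log p$. Your observation that either Bernstein branch suffices once $n\ge\log p$ is slightly cleaner than the paper's appeal to the quadratic branch, which needs $n\ge (C_0/B)^2\log p$; however, drop the claim that the case $n<\log p$ is handled trivially by enlarging $C_0$, since it is false: with $n=1$ and $\Sigma^\star=I$ Gaussian, $\max_j|\widetilde X_{1j}^2-1|$ is of order $\log p\gg C_0\sqrt{\log p}$, so $n\gtrsim\log p$ is a genuine hypothesis (left implicit in the paper's statement as well), not a removable one.
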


\begin{theorem}[Total error decomposition and statistically optimal early stopping]\label{thm:total}
Under Assumption~\ref{asmp:subg}, choose \(\lambda \ge 2\|S-\Sigma^\star\|_\infty\).
Then with probability at least \(1-2p^{-c_0}\),
\begin{equation}\label{eq:total_bound}
\|\Sigma^{(k)}-\Sigma^\star\|_F
\;\le\;
\underbrace{\sqrt{2\,\varepsilon_{\mathrm{opt}}(k)}}_{\text{optimization}}
\;+\;
\underbrace{4\sqrt{|\gA|}\,\lambda + 4\,\|(\Sigma^\star)_{\gA^c}\|_1/\sqrt{|\gA|}}_{\text{statistical}}.
\end{equation}
If we stop when
\(
\varepsilon_{\mathrm{opt}}(k)\;\le\; C^2\, |\gA|\, \tfrac{\log p}{n}
\)
(for a universal \(C\) absorbing constants in \(\lambda\)), then
\(
\|\Sigma^{(k)}-\Sigma^\star\|_F
\lesssim \sqrt{|\gA|\,\tfrac{\log p}{n}} + \|(\Sigma^\star)_{\gA^c}\|_1/\sqrt{|\gA|},
\)
which is minimax-optimal up to constants for sparse models.
\end{theorem}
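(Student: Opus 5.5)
The argument splits the error with the triangle inequality,
\[
\|\Sigma^{(k)}-\Sigma^\star\|_F\le \|\Sigma^{(k)}-\widehat{\Sigma}^\star\|_F+\|\widehat{\Sigma}^\star-\Sigma^\star\|_F ,
\]
and bounds the two pieces separately. We work on the event of Theorem~\ref{thm:entry}, which has probability at least \(1-2p^{-c_0}\). On that event we may take \(\lambda\ge 2\|S-\Sigma^\star\|_\infty\) with \(\lambda\asymp C_0\sqrt{\log p/n}\).

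\emph{Optimization part.} The objective \(f_1\) restricted to \(\{\Sigma\succeq\epsilon I\}\) is \(1\)-strongly convex, because \(\frac12\|\Sigma-S\|_F^2\) has modulus \(1\) and the remaining terms are convex. Since \(\widehat{\Sigma}^\star\) is its minimizer over this convex set, the first-order optimality condition combined with strong convexity gives the quadratic growth bound
\[
f_1(\Sigma)-f_1(\widehat{\Sigma}^\star)\ge \tfrac12\|\Sigma-\widehat{\Sigma}^\star\|_F^2
\]
for every feasible \(\Sigma\). Hence \(\|\Sigma^{(k)}-\widehat{\Sigma}^\star\|_F\le\sqrt{2\varepsilon_{\mathrm{opt}}(k)}\). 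This needs \(\Sigma^{(k)}\succeq\epsilon I\), which holds because the \(\Sigma\)-block \(\eta\) enforces PSD feasibility. If instead one uses the split iterate \(W^{(k)}\), an extra feasibility-gap term appears; we ignore it, as the statement implicitly does.

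\emph{Statistical part.} This is a standard basic-inequality argument, made cleaner by strong convexity. Write \(\Delta=\widehat{\Sigma}^\star-\Sigma^\star\). We assume \(\Sigma^\star\succeq\epsilon I\) so that \(\Sigma^\star\) is feasible; this is implicit, since \(\epsilon\) is arbitrarily small. Optimality of \(\widehat{\Sigma}^\star\) together with strong convexity gives
\[
\tfrac12\|\Delta\|_F^2\le f_1(\Sigma^\star)-f_1(\widehat{\Sigma}^\star)-\tfrac12\|\Delta\|_F^2+\ldots
\]
More simply, expanding the quadratic yields
\[
\tfrac12\|\Delta\|_F^2+\tfrac12\|\Delta\|_F^2\le \langle S-\Sigma^\star,\Delta\rangle+\lambda\big(\|\Sigma^\star\|_{1,\mathrm{off}}-\|\widehat{\Sigma}^\star\|_{1,\mathrm{off}}\big).
\]
Hölder's inequality bounds the inner product by \(\frac{\lambda}{2}\|\Delta\|_1\). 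Decompose the off-diagonal entries over \(\gA\) and \(\gA^c\) and use the decomposability of \(\ell_1\): \(\|\Sigma^\star\|_{1,\mathrm{off}}-\|\widehat{\Sigma}^\star\|_{1,\mathrm{off}}\le\|\Delta_\gA\|_1-\|\Delta_{\gA^c}\|_1+2\|(\Sigma^\star)_{\gA^c}\|_1\). The diagonal contributes only through the inner product term and can be grouped with \(\gA\) by enlarging it by \(p\) entries. We expect this bookkeeping step to need care, since \(|\gA|\) counts only off-diagonals; we would absorb the diagonal into the constant or note \(p\lesssim|\gA|\). Combining these gives
\[
\|\Delta\|_F^2\le \tfrac{3\lambda}{2}\|\Delta_\gA\|_1+2\lambda\|(\Sigma^\star)_{\gA^c}\|_1\le \tfrac{3\lambda}{2}\sqrt{|\gA|}\,\|\Delta\|_F+2\lambda\|(\Sigma^\star)_{\gA^c}\|_1 .
\]

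\emph{Combining.} The last display is a quadratic inequality in \(\|\Delta\|_F\). Solving it gives \(\|\Delta\|_F\le 2\sqrt{|\gA|}\lambda+\sqrt{2\lambda\|(\Sigma^\star)_{\gA^c}\|_1}\). To reach the stated form, apply AM–GM: \(\sqrt{2\lambda b}\le \sqrt{|\gA|}\lambda+b/\sqrt{|\gA|}\) with \(b=\|(\Sigma^\star)_{\gA^c}\|_1\). The constants \(4\) in the theorem are looser than what this yields, so there is slack. For early stopping, substitute \(\varepsilon_{\mathrm{opt}}(k)\le C^2|\gA|\log p/n\) and \(\lambda\asymp\sqrt{\log p/n}\); both the optimization and statistical terms are then of order \(\sqrt{|\gA|\log p/n}\) plus the approximation term. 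The minimax-optimality claim is quoted from the known lower bounds for sparse covariance estimation rather than proved here.

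The main obstacle is the handling of the diagonal entries and the cone-type decomposition. The other steps are routine.
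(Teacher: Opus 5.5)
Your argument follows the paper's proof step for step. You use the triangle inequality through \(\widehat{\Sigma}^\star\), and the quadratic-growth bound \(f_1(\Sigma)-f_1(\widehat{\Sigma}^\star)\ge\frac12\|\Sigma-\widehat{\Sigma}^\star\|_F^2\) for the optimization term. For the statistical term you use the basic inequality, H\"older with \(\lambda\ge 2\|S-\Sigma^\star\|_\infty\), decomposability of \(\|\cdot\|_{1,\mathrm{off}}\) over \(\gA\), \(\|\Delta_{\gA}\|_1\le\sqrt{|\gA|}\,\|\Delta\|_F\), and a quadratic inequality in \(\|\Delta\|_F\). Your deviations are minor. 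Applying quadratic growth at \(\widehat{\Sigma}^\star\) inside the basic inequality puts \(\|\Delta\|_F^2\) rather than \(\frac12\|\Delta\|_F^2\) on the left; this is valid and gives the constant \(3\) instead of \(4\). You also make explicit the feasibility of \(\Sigma^\star\) and \(\Sigma^{(k)}\), which the paper uses tacitly.

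Your concern about the diagonal is well founded, and absorbing it into the constant is not an option. The paper does not resolve it either. It decomposes \(\Delta\) ``on off-diagonals (diagonals are unpenalized)'' and then replaces \(\frac{\lambda}{2}\|\Delta\|_1\) by \(\frac{\lambda}{2}(\|\Delta_{\gA}\|_1+\|\Delta_{\gA^c}\|_1)\). This silently drops \(\frac{\lambda}{2}\|\mathrm{diag}(\Delta)\|_1\), which the off-diagonal penalty cannot cancel.

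The omission cannot be patched, because the displayed bound is false in general. Take Gaussian data with \(\Sigma^\star\) equal to the identity plus a few fixed off-diagonal entries, and \(\lambda\asymp\sqrt{\log p/n}\) small. Then \(\lambda\ge 2\|S-\Sigma^\star\|_\infty\) soft-thresholds every off-support entry to zero, and the constraint \(\Sigma\succeq\epsilon I\) is inactive. The KKT conditions therefore force \(\widehat{\Sigma}^\star_{ii}=S_{ii}\). Hence \(\|\Delta\|_F\ge\|\mathrm{diag}(S-\Sigma^\star)\|_F\approx\sqrt{2p/n}\), which exceeds \(4\sqrt{|\gA|}\,\lambda\asymp\sqrt{|\gA|\log p/n}\) once \(p\gg|\gA|\log p\).

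What your argument does prove is the corrected statement. Bound \(\|\mathrm{diag}(\Delta)\|_1+\|\Delta_{\gA}\|_1\le\sqrt{p+|\gA|}\,\|\Delta\|_F\), which replaces \(\sqrt{|\gA|}\,\lambda\) by \(\sqrt{p+|\gA|}\,\lambda\). Alternatively, you get the stated bound under the extra assumption \(p\lesssim|\gA|\). The early-stopping rate and the minimax remark change accordingly.
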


Similarly, for high-dimensional precision matrix estimation, we have an analogous bound on the total error. We define 
\[
\varepsilon_{\mathrm{opt}}^{\mathrm{GL}}(k) := f_2(\Theta^{(k)})-f_2(\widehat{\Theta}^\star)\ge 0.
\]

\begin{theorem}[Total error decomposition and statistically optimal early stopping]\label{thm:total-gl}
Assume sub-Gaussian sampling so that, with probability at least $1-2p^{-c_0}$,
$\|S-\Sigma^\star\|_\infty \le C_0\sqrt{\tfrac{\log p}{n}}$.
Choose $\lambda \ge 2\|S-\Sigma^\star\|_\infty$.
Assume there exist $0<\epsilon\le M<\infty$ such that $\epsilon I\preceq \Theta^\star\preceq MI$ and both
$\widehat\Theta^\star$ and $\Theta^{(k)}$ lie in $\{\,\Theta: \epsilon I\preceq \Theta\preceq MI\,\}$.
Let $\gB=\{(i,j):i\neq j,\ \theta^\star_{ij}\neq 0\}$ with size $|\gB|$.
Then,
\[
\|\Theta^{(k)}-\Theta^\star\|_F
\ \le\
\underbrace{\sqrt{2M^2\,\varepsilon_{\mathrm{opt}}^{\mathrm{GL}}(k)}}_{\mathrm{optimization}}
\;+\;
\underbrace{M^2\Big(3\lambda\sqrt{|\gB|} + 4\,\|(\Theta^\star)_{\gB^c}\|_1/\sqrt{|\gB|}\Big)}_{\mathrm{statistical}}.
\]
If we stop when $\varepsilon_{\mathrm{opt}}^{\mathrm{GL}}(T)\ \lesssim\ |\gB|\,\tfrac{\log p}{n}$ and take $\lambda \asymp \sqrt{\tfrac{\log p}{n}}$, then
$\|\Theta^{(k)}-\Theta^\star\|_F \lesssim M^2 \sqrt{|\gB|\,\tfrac{\log p}{n}}
+ M^2\,\|(\Theta^\star)_{\gB^c}\|_1/\sqrt{|\gB|}$.
\end{theorem}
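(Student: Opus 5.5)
The plan is to split the total error by the triangle inequality,
\[
\|\Theta^{(k)}-\Theta^\star\|_F\le \|\Theta^{(k)}-\widehat\Theta^\star\|_F+\|\widehat\Theta^\star-\Theta^\star\|_F,
\]
and to bound the two pieces separately. Both pieces rest on the same structural fact. On the box $\mathcal{K}:=\{\Theta:\epsilon I\preceq\Theta\preceq MI\}$, the function $-\log\det$ is strongly convex with modulus $1/M^2$, because its Hessian is $\Theta^{-1}\otimes\Theta^{-1}\succeq M^{-2}I$. Hence $f_2$ is $M^{-2}$-strongly convex on $\mathcal{K}$. All iterates and estimators involved lie in $\mathcal{K}$ by assumption, so every Taylor expansion can be taken along segments inside this convex set.

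For the optimization term, we use that $\widehat\Theta^\star$ minimizes $f_2$ over $\{\Theta\succeq\epsilon I\}\supseteq\mathcal{K}$. The first-order optimality condition then gives $\langle g,\Theta^{(k)}-\widehat\Theta^\star\rangle\ge0$ for some subgradient $g\in\partial f_2(\widehat\Theta^\star)$ (including the normal-cone part). Strong convexity then yields
\[
f_2(\Theta^{(k)})-f_2(\widehat\Theta^\star)\ge\tfrac{1}{2M^2}\|\Theta^{(k)}-\widehat\Theta^\star\|_F^2,
\]
so $\|\Theta^{(k)}-\widehat\Theta^\star\|_F\le\sqrt{2M^2\varepsilon^{\mathrm{GL}}_{\mathrm{opt}}(k)}$. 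This is the optimization term exactly.

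For the statistical term, I follow the standard Rothman-type argument, with an approximate-sparsity refinement as in Theorem~\ref{thm:total}. Set $\Delta=\widehat\Theta^\star-\Theta^\star$ and start from $f_2(\widehat\Theta^\star)\le f_2(\Theta^\star)$. This is legitimate because $\Theta^\star\succeq\epsilon I$ is feasible. Write the log-det difference as a linear term plus a remainder. Since $\nabla(-\log\det)(\Theta^\star)=-\Sigma^\star$, the linear term is $\langle S-\Sigma^\star,\Delta\rangle$, and strong convexity on the segment, which lies in $\mathcal{K}$, gives a remainder at least $\tfrac{1}{2M^2}\|\Delta\|_F^2$. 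This produces the basic inequality
\[
\tfrac{1}{2M^2}\|\Delta\|_F^2\le \|S-\Sigma^\star\|_\infty\|\Delta\|_1+\lambda\big(\|\Theta^\star\|_{1,\mathrm{off}}-\|\Theta^\star+\Delta\|_{1,\mathrm{off}}\big).
\]
Next I decompose entries into the diagonal, $\gB$, and $\gB^c$ (off-diagonal). The decomposability of the $\ell_1$ norm gives $\|\Theta^\star\|_{1,\mathrm{off}}-\|\Theta^\star+\Delta\|_{1,\mathrm{off}}\le\|\Delta_{\gB}\|_1-\|\Delta_{\gB^c}\|_1+2\|(\Theta^\star)_{\gB^c}\|_1$. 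If $\gB$ is the exact support this last term vanishes, but it is kept in the statement. I then use $\lambda\ge2\|S-\Sigma^\star\|_\infty$ to absorb the $\gB^c$ part of the linear term and bound $\|\Delta_{\gB\cup\mathrm{diag}}\|_1\le\sqrt{|\gB|+p}\,\|\Delta\|_F$.

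I expect the diagonal to be the main obstacle, since it is unpenalized and would contribute a $\sqrt{p}$ factor that the stated bound does not contain. My first attempt is to treat diagonal entries as contained in $\gB$, following the convention implicit in Theorem~\ref{thm:total}, so that $|\gB|$ absorbs them. After that, I solve the quadratic inequality $a x^2\le bx+c$ with $a=1/(2M^2)$, $b=\tfrac32\lambda\sqrt{|\gB|}$ and $c=2\lambda\|(\Theta^\star)_{\gB^c}\|_1$, using $x\le b/a+\sqrt{c/a}$. I then bound $\sqrt{c/a}$ by AM–GM as $\lambda\sqrt{|\gB|}M^2+\|(\Theta^\star)_{\gB^c}\|_1/\sqrt{|\gB|}$, up to constants. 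This gives $\|\Delta\|_F\le M^2\big(3\lambda\sqrt{|\gB|}+4\|(\Theta^\star)_{\gB^c}\|_1/\sqrt{|\gB|}\big)$, where $M\ge1$ WLOG (else replace $M$ by $\max(M,1)$) so the $\|(\Theta^\star)_{\gB^c}\|_1/\sqrt{|\gB|}$ term may carry the $M^2$ factor as stated. The constant bookkeeping that lands exactly on $3$ and $4$ is the delicate part.

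The early-stopping corollary then follows by substitution. With $\lambda\asymp\sqrt{\log p/n}$, which on the event of Theorem~\ref{thm:entry} satisfies the requirement on $\lambda$, and with $\varepsilon^{\mathrm{GL}}_{\mathrm{opt}}\lesssim|\gB|\log p/n$, the optimization term is of order $M\sqrt{|\gB|\log p/n}\le M^2\sqrt{|\gB|\log p/n}$ (using $M\ge1$ WLOG).
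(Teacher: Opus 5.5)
You follow the paper's proof essentially line by line: the triangle inequality; $\langle\Theta^{-1}U\Theta^{-1},U\rangle\ge M^{-2}\|U\|_F^2$ on the band; the optimization term from strong convexity at the constrained minimizer, which you handle more explicitly than the paper's one line; and the basic inequality, H\"older, decomposability and the quadratic. The gap is the one you flagged yourself, and your patch does not close it.

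H\"older leaves a term $\tfrac{\lambda}{2}\|\Delta_{\mathrm{diag}}\|_1$. Nothing on the right cancels it, because the diagonal is unpenalized. So what your argument actually proves is $\|\widehat\Theta^\star-\Theta^\star\|_F\le 3M^2\lambda\sqrt{|\gB|+p}$, plus the approximate-sparsity term. Declaring the diagonal to be part of $\gB$ is not a convention inherited from Theorem~\ref{thm:total}. There $\gA$ is also purely off-diagonal, and that proof has the same defect. The move simply replaces $|\gB|$ by $|\gB|+p$, so it proves a different, weaker bound. The paper's own proof has exactly this hole. It rewrites $\|\Delta\|_1$ as $\|\Delta_{\gB}\|_1+\|\Delta_{\gB^c}\|_1$ and then applies off-diagonal decomposability to $\Delta_{\gB^c}$. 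That is consistent only if the diagonal of $\Delta$ has been silently discarded.

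No argument can remove the $p$, because the inequality as stated is false. Here is a counterexample.
\begin{itemize}
\item[(i)] Take $\Theta^\star$ equal to the identity except $\theta^\star_{12}=\theta^\star_{21}=0.2$, so $|\gB|=2$.
\item[(ii)] Take $\lambda=2\|S-\Sigma^\star\|_\infty$, which is $\lesssim\sqrt{\log p/n}$ on the event of Theorem~\ref{thm:entry}, and take $\Theta^{(k)}=\widehat\Theta^\star$.
\item[(iii)] For every $i\ge3$ and $j\ne i$ we have $\sigma^\star_{ij}=0$, hence $|S_{ij}|\le\lambda/2$.
\item[(iv)] So the graphical-lasso KKT conditions are met by a block-diagonal solution with $\widehat\theta_{ii}=1/S_{ii}$ for $i\ge3$. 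The constraint $\Theta\succeq\epsilon I$ is inactive, and the band condition holds with, say, $M=2$ once $n\gg\log p$.
\item[(v)] Then $\|\widehat\Theta^\star-\Theta^\star\|_F^2\approx\sum_{i\ge3}(S_{ii}-1)^2\approx 2p/n$. The claimed bound, by contrast, is $12\sqrt2\,\lambda\lesssim\sqrt{\log p/n}$.
\end{itemize}
The provable result is therefore the standard graphical-lasso rate with $|\gB|+p$ in place of $|\gB|$. State it that way rather than redefining $|\gB|$.

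Two minor points. First, since $\gB$ is by definition the exact off-diagonal support, $\|(\Theta^\star)_{\gB^c}\|_1=0$, and the quadratic gives the constant $3$ directly. Your AM--GM step would give $4$; in general, $x\le b/a+c/b$ yields $3M^2\lambda\sqrt{|\gB|}+\tfrac43\|(\Theta^\star)_{\gB^c}\|_1/\sqrt{|\gB|}$. Second, ``$M\ge1$ WLOG'' is not without loss of generality, since enlarging $M$ weakens the conclusion. For the early-stopping corollary, use instead $M\ge\max_j\theta^\star_{jj}\ge\max_j 1/\sigma^\star_{jj}\gtrsim K^{-2}$ under the sub-Gaussian assumption. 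This gives $M\lesssim M^2$ with constants depending on $K$.
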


\subsection{Approximation analysis of proximal operators}

In this section, we discuss the approximation property of parameterized neural networks $\eta,\zeta$.
We consider the case where $\eta,\zeta$ are parameterized by narrow one-hidden-layer ReLU networks. We first analyze the property of target proximal operators.
\begin{proposition}\label{prop:lipschitz_continuity}
Suppose that $F:\mathbb{R}^{p\times p}\rightarrow{\mathbb{R}^{p\times p}}$ is a proper, closed, convex function and $w\in{\mathbb{R}^{p\times p}}$ satisfies that $c_1 \le w_{i,j} \le c_2,\;\forall\, 1\le i,j\le p $, where $c_1, c_2$ are constants which may depend on $\alpha^*,\beta^*$. 
Then $\operatorname{prox}_{w,F}$ is Lipschitz continuous with
\[
\big\|\operatorname{prox}_{w,F}(M_1) - \operatorname{prox}_{w,F}(M_2)\big\|_F
\;\le\;
\sqrt{\frac{c_2}{c_1}}\;\|M_1 - M_2\|_F,
\qquad\forall\,M_1,M_2\in\mathbb{R}^{p\times p}.
\]
\end{proposition}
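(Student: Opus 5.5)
Since the weights $w_{ij}$ are fixed and strictly positive, I will interpret $\operatorname{prox}_{w,F}$ as the weighted proximal map from the preceding subsection:
\[
\operatorname{prox}_{w,F}(M)=\arg\min_X\Big\{F(X)+\tfrac12\big\|\tfrac{1}{\sqrt{w}}\circ(X-M)\big\|_F^2\Big\}.
\]
The idea is to recognise this as an ordinary proximal map in a diagonal metric, use firm nonexpansiveness in that metric, and then convert back to the Frobenius norm. Define the inner product $\langle A,B\rangle_{w^{-1}}:=\langle w^{-1}\circ A,B\rangle$ with norm $\|A\|_{w^{-1}}^2=\sum_{ij}A_{ij}^2/w_{ij}$. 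The bounds $c_1\le w_{ij}\le c_2$ give the norm equivalence
\[
c_2^{-1}\|A\|_F^2\le\|A\|_{w^{-1}}^2\le c_1^{-1}\|A\|_F^2 .
\]
Since $F$ is proper, closed and convex and the quadratic term is strongly convex, the minimiser exists and is unique, so the map is well defined.

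\textbf{Step 1 (optimality conditions).} Write $X_i=\operatorname{prox}_{w,F}(M_i)$ for $i=1,2$. The first-order condition gives
\[
w^{-1}\circ(M_i-X_i)\in\partial F(X_i).
\]

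\textbf{Step 2 (firm nonexpansiveness).} Monotonicity of $\partial F$ yields
\[
\big\langle w^{-1}\circ\big((M_1-M_2)-(X_1-X_2)\big),\,X_1-X_2\big\rangle\ge 0,
\]
that is, $\|X_1-X_2\|_{w^{-1}}^2\le\langle M_1-M_2,X_1-X_2\rangle_{w^{-1}}$. Applying Cauchy--Schwarz in this inner product gives
\[
\|X_1-X_2\|_{w^{-1}}\le\|M_1-M_2\|_{w^{-1}} .
\]

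\textbf{Step 3 (back to Frobenius).} Combining Step 2 with the norm equivalence,
\[
c_2^{-1/2}\|X_1-X_2\|_F\le\|X_1-X_2\|_{w^{-1}}\le\|M_1-M_2\|_{w^{-1}}\le c_1^{-1/2}\|M_1-M_2\|_F ,
\]
which is exactly the constant $\sqrt{c_2/c_1}$ claimed in Proposition~\ref{prop:lipschitz_continuity}.

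None of these steps is hard. The only point needing care is setting up the subdifferential correctly when $F$ may be an extended-valued function, such as the PSD indicator. I would handle this by taking the subgradient inclusion from the standard Fermat rule, which applies because the quadratic term is finite and continuous everywhere, so the sum rule holds. I also note that the statement writes $F:\mathbb{R}^{p\times p}\to\mathbb{R}^{p\times p}$; the intended meaning is that $F$ is scalar-valued with values in $\mathbb{R}\cup\{+\infty\}$, and I would state that interpretation explicitly.
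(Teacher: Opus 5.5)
Your proposal is correct and matches the paper's proof essentially step for step. Both go through the optimality condition $w^{-1}\circ(M_i-X_i)\in\partial F(X_i)$, then monotonicity of $\partial F$ plus Cauchy--Schwarz to get nonexpansiveness in $\|\cdot\|_{w^{-1}}$, and finally the norm equivalence $c_2^{-1}\|Z\|_F^2\le\|Z\|_{w^{-1}}^2\le c_1^{-1}\|Z\|_F^2$ to obtain the constant $\sqrt{c_2/c_1}$. Your extra remarks are sensible clarifications that the paper leaves implicit: existence and uniqueness of the minimizer, the sum rule for extended-valued $F$, and reading $F$ as scalar-valued.
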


Based on the above result, we conclude that $\operatorname{prox}_{w,F}$ can be approximated by single layer ReLU network with
the approximation rate of $O(d^{-\frac{1}{p^2}})$, where $d$ is the number of neurons.
\begin{theorem}\label{thm:nn_approximate}
Under the same conditions of Proposition \ref{prop:lipschitz_continuity}, there exists a single layer network $\mathcal{NN}:\mathbb{R}^{p^2}\rightarrow{\mathbb{R}^{p^2}}$ that exhibits the form 
\begin{equation*}
\mathcal{NN}(x) = W_2^\top\sigma\left(W_1^\top x+b_1\right)+b_2,
\end{equation*}
such that 
\begin{equation}
\sup_{ \|X\|_{2}\le M}\|\mathcal{NN}(\operatorname{vec}(X))-\operatorname{vec}(\operatorname{prox}_{\omega,F}(X))\|_F\le C(p, M)\sqrt{\frac{c_2}{c_1}}\frac{\log(d)}{d^{1/{p^2}}}.
\end{equation}
Here $W_1\in\mathbb{R}^{d\times p^2},b_1\in\mathbb{R}^d,W_2\in\mathbb{R}^{p^2\times d},b_2\in\mathbb{R}^{p^2}$, $\sigma(\cdot)=(\cdot)_{+}$ is ReLU and the operator $\operatorname{vec}(\cdot)$  maps a matrix to a vector by stacking its columns.
\end{theorem}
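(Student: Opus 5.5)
The plan is to combine the Lipschitz bound of Proposition~\ref{prop:lipschitz_continuity} with a standard approximation result for Lipschitz functions by shallow ReLU networks, applied one output coordinate at a time. Set $D:=p^2$ and $g:=\operatorname{vec}\circ\operatorname{prox}_{\omega,F}\circ\operatorname{vec}^{-1}:\mathbb{R}^{D}\to\mathbb{R}^{D}$. Since $\operatorname{vec}$ is an isometry between the Frobenius and Euclidean norms, Proposition~\ref{prop:lipschitz_continuity} shows that $g$ is $L$-Lipschitz with $L=\sqrt{c_2/c_1}$. In particular each coordinate $g_\ell$ is $L$-Lipschitz.

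The domain $\{\|X\|_2\le M\}$ sits inside the Euclidean ball of radius $\sqrt{p}M$ in $\mathbb{R}^D$. We enclose this ball in a cube $Q=[-\sqrt{p}M,\sqrt{p}M]^D$.

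We then reduce to a fixed domain. Rescaling by $x\mapsto x/(2\sqrt p M)+\tfrac12$ maps $Q$ onto $[0,1]^D$. The rescaled coordinate $\tilde g_\ell$ is Lipschitz with constant $2\sqrt p M L$. We also subtract $g_\ell(0)$; the bias $b_2$ will absorb this constant, and the affine rescaling is absorbed into $W_1$ and $b_1$.

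Next we invoke a known rate for one-hidden-layer ReLU approximation of Lipschitz functions on $[0,1]^D$. Either the continuous piecewise-linear interpolation on a uniform grid, or the Yarotsky / Shen--Yang--Zhang type estimates, give a network with $d_0$ neurons and sup-error at most $C(D)\,\mathrm{Lip}\cdot d_0^{-1/D}\log d_0$. To realize this concretely, I would take a grid of mesh $h\asymp d_0^{-1/D}$, use the piecewise-linear interpolant on a Kuhn triangulation, and write it as a sum of ReLU units. Because width is the only resource, I would cite a width-$d$ depth-one result; the $\log d$ factor leaves room for a less efficient construction.

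Finally we stack coordinates. Build one such network for each of the $D$ coordinates, with $d/D$ neurons each, and concatenate them into a single hidden layer, giving matrices $W_1\in\mathbb{R}^{d\times D}$ and $W_2\in\mathbb{R}^{D\times d}$. The Euclidean output error is then at most $\sqrt D$ times the coordinatewise sup error. We may collect $\sqrt D$, $(D)^{1/D}$ from the neuron split, $2\sqrt p M$, and $C(D)$ into a single constant $C(p,M)$. This yields the bound $C(p,M)\sqrt{c_2/c_1}\,\log(d)/d^{1/p^2}$.

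The main obstacle is the shallow-network rate itself. Depth-one ReLU networks cannot express a compactly supported hat function in dimension $D\ge2$ exactly, so the clean interpolation argument is not immediately a single hidden layer. I would first try to cite an existing depth-one rate for Lipschitz functions, for instance via ridge-function or Barron-type results combined with smoothing. The $\log d$ slack is what accommodates the loss from approximating the hat functions by sums of ridge ReLUs. If that citation fails, the fallback is to approximate a mollified $g_\ell$ and balance the mollification error against the network error, which should produce the same $d^{-1/D}$ order up to logarithmic factors.
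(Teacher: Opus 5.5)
Your outer structure matches the paper's. Proposition~\ref{prop:lipschitz_continuity} makes $g=\operatorname{vec}\circ\operatorname{prox}_{\omega,F}\circ\operatorname{vec}^{-1}$ a $\sqrt{c_2/c_1}$-Lipschitz map on $\mathbb{R}^{p^2}$, and everything else is a shallow-network approximation rate. Your bookkeeping is more explicit than the paper's one-line proof, which applies scalar-output results to a vector-valued map without comment: you enclose $\{\|X\|_2\le M\}$ in the Frobenius ball of radius $\sqrt{p}M$, move $g(0)$ into $b_2$, and stack $p^2$ scalar networks with $d/p^2$ units each.

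The gap is that the step carrying all the content is never supplied. That step is a \emph{one-hidden-layer}, sup-norm bound of order $C(D)\,L\,k^{-1/D}\log k$ for $L$-Lipschitz functions on a ball in $\mathbb{R}^D$ with $k$ ReLU units. The results you name do not give it. Yarotsky's and Shen--Yang--Zhang's constructions are deep. As you observe, the triangulation interpolant is built from compactly supported hat functions, which no depth-one ReLU network represents when $D\ge 2$.

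Your fallback does not reach the exponent $1/D$ either. After mollifying at scale $\varepsilon$, the available bound on the ReLU spectral (Barron) norm of the mollified function is of order $\varepsilon^{-1-D/2}$. Balancing $L\varepsilon$ against $L\varepsilon^{-1-D/2}k^{-1/2}$ then yields only $k^{-1/(D+4)}$. These losses are polynomial in $k$, so the $\log d$ factor is not slack that can absorb them.

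The missing ingredient is exactly what the paper cites: the two results of \citet{Bach2017jmlr} (Propositions 1 and 6 there).
\begin{itemize}
\item The first says: if $|g|\le\eta$ and $g$ is $(\eta/R)$-Lipschitz on the ball of radius $R$, there is $f$ in the ReLU variation-norm space with variation norm at most $\delta$ and $\sup|f-g|\le C(D)\,\eta\,(\delta/\eta)^{-2/(D+1)}\log(\delta/\eta)$.
\item The second, via approximation of zonoids by zonotopes, says such an $f$ is matched uniformly by $k$ ReLU units up to $C(D)\,\delta\,k^{-(D+3)/(2D)}$. This beats Maurey's $k^{-1/2}$.
\end{itemize}
Both sharp exponents are needed: with Maurey's rate in the second step you would get only $k^{-1/(D+3)}$.

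Taking $\delta/\eta=k^{(D+1)/(2D)}$ balances the two terms at $C(D)\,\eta\,k^{-1/D}\log k$. Set $\eta=\sqrt{c_2/c_1}\,\sqrt{p}\,M$ (after subtracting $g(0)$), $D=p^2$ and $k=d/p^2$; your stacking step then gives the stated bound. Both your argument and the paper's also need $d$ larger than a constant depending on $p$, since the right-hand side vanishes at $d=1$.
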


\section{Simulation Studies} \label{sec5}
In this section, we apply the proposed LBO algorithm to high-dimensional matrix estimation problems with diverse structures and compare it against a wide array of existing methods to validate its superior performance. As we aim to compare computational times and the reliability of results across these methods, it is essential to first report the computational environment. All experiments were conducted on a workstation running Ubuntu 22.04.2 LTS (Linux kernel 5.15, x86\_64). The machine is equipped with dual Intel Haswell processors (4 logical CPUs), 19 GB RAM, and 2 GB swap space. We used a single NVIDIA GeForce RTX 3090 GPU (24 GB memory) with driver version 550.163.01 and CUDA 12.4. The software environment is based on Python 3.10.

\subsection{High-dimensional covariance matrix} \label{secHighcovariance}
Our objective is to estimate the covariance matrix $\Sigma^{\star} = (\sigma^\star_{ij})_{1\leq i,j \leq p}$ based on an observation matrix with \(n\) samples and \(p\) features sampled from the multivariate normal distribution $\mathcal{N}(0, \Sigma^{\star})$. In this section, we consider four distinct representative structures for the covariance matrix. The first structure considered is the Toeplitz covariance matrix, defined as 
\[
\sigma^\star_{ij} = \varrho^{|i-j|},
\]
for $i,j = 1, \dots, p$, where $|\varrho| < 1$. Here, $\varrho$ represents the correlation decay coefficient, with larger values corresponding to stronger long-range dependencies. We examine the following values of $\varrho$: 0.1, 0.3, 0.5, 0.7, and 0.9. Figure \ref{toeplitzvis} illustrates visualizations of the Toeplitz covariance matrices for $p=50$ under various values of $\rho$.

\begin{figure}[H]
\centering
\includegraphics[scale=0.53]{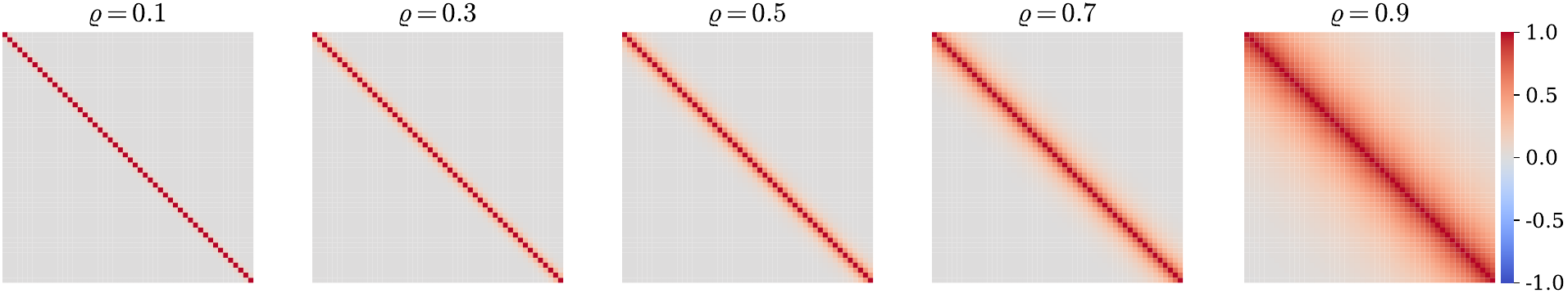}
\caption{Toeplitz covariance matrices under different correlation decay coefficients \(\varrho\).}
\label{toeplitzvis}
\end{figure}

The second structure is the factor model (low-rank plus diagonal noise), in which the factor loading matrix $B \in \mathbb{R}^{p \times m}$ is first generated entrywise independently from $\mathcal{N}(0, \sigma_B)$, and the covariance matrix is then defined as 
\[
d = BB^\top + \sigma_N I,
\]
where $m$ denotes the number of factors satisfying $1 \leq m \ll p$, which controls the dimensionality of the low-rank component; $\sqrt{\sigma_N}$ is the noise standard deviation, governing the strength of the diagonal noise; and $\sigma_B$ scales the magnitude of $B$, thereby determining the signal-to-noise ratio. In essence, larger values of $m$ and $\sigma_B$ yield more prominent principal components and a sharper spectrum for $\Sigma^\star$. In the experiments, we fix the factor strength $\sigma_B = 1$ and the noise variance $\sigma_N = 0.04$, while varying the number of factors $m$ to take values 3, 5, 7, 9, and 10. Figure \ref{factorvis} illustrates visualizations of the covariance matrices for dimension $p=50$ under different numbers of factors.

\begin{figure}[H]
\centering
\includegraphics[scale=0.53]{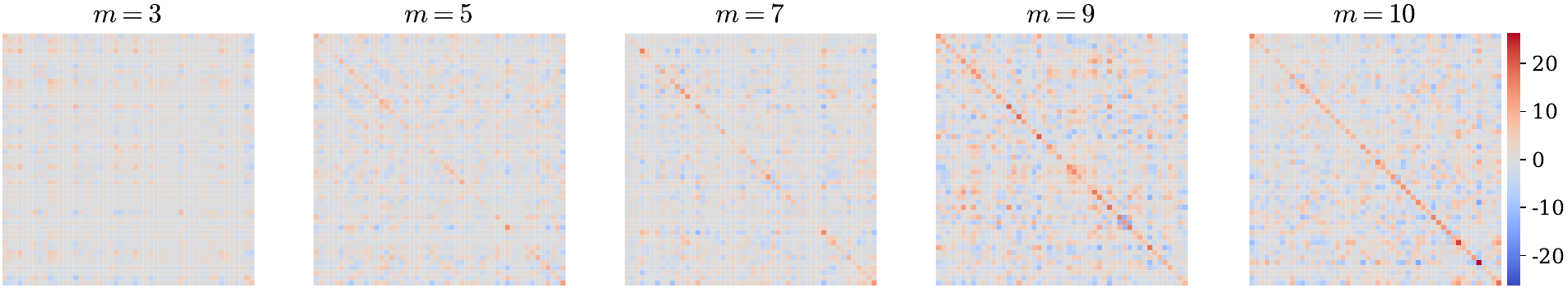}
\caption{Factor model under different numbers of low-rank components \(m\).}
\label{factorvis}
\end{figure}

The third structure is the sparse covariance matrix, characterized by random sparse off-diagonals and reinforced diagonal elements. The construction proceeds as follows: the diagonal elements are independently uniformly sampled, while the off-diagonal elements are set to non-zero with probability $q$ (with magnitudes uniformly sampled and randomly signed), followed by symmetrization:
\[
\sigma^\star_{ii} \sim \mathcal{U}(a, b),\ 
\sigma^\star_{ij} = \sigma^\star_{ji} = \begin{cases}
\pm \ \mathcal{U}(c,d), & \text{with probability } 1-q, \\ 0, & \text{with probability } q, 
\end{cases} i \neq j,
\]
where $q$ denotes the sparsity level, which controls the proportion of non-zero off-diagonals; $(a, b)$ is the diagonal interval; and $(c,d)$ is the non-zero intensity interval, governing the magnitude and upper bound of the non-zero correlations. In the experiments, we fix $(a,b) = (0.5, 2.0)$ and $(c,d) = (0.1, 0.8)$, while primarily varying the sparsity level $q \in \{0.1, 0.3, 0.5, 0.7, 0.9\}$. Figure \ref{sparsevis}  presents visualizations of the covariance matrices for dimension $p=50$ under these different sparsity levels.
\begin{figure}[H]
\centering
\includegraphics[scale=0.53]{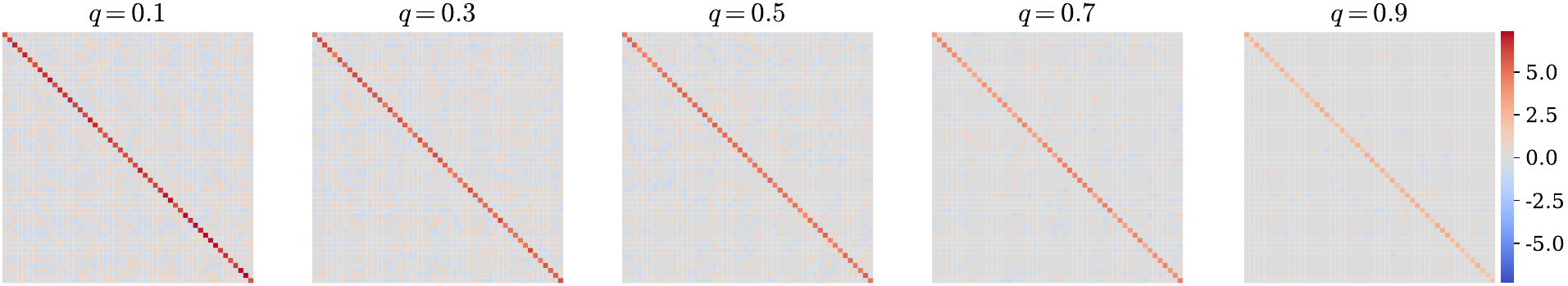}
\caption{Covariance matrices under different sparsity levels $q$.}
\label{sparsevis}
\end{figure}

The fourth structure is the block-diagonal covariance matrix, featuring strong correlations within clusters and sparse inter-block connections. The specific construction proceeds as follows: first, a block size is selected, and the $p$ variables are partitioned into blocks; within each block, an equicorrelation structure is adopted, given by the matrix with $1$s on the diagonal and $\varrho_w$ off the diagonal, i.e.,
\[
\begin{pmatrix}
1 & \varrho_w & \cdots & \varrho_w \\
\varrho_w & 1 & \cdots & \varrho_w \\
\vdots & \vdots & \ddots & \vdots \\
\varrho_w & \varrho_w & \cdots & 1
\end{pmatrix},
\]
where $\varrho_w$ denotes the within-block equicorrelation strength. For inter-block connections, with probability $\pi_b$, a weak edge of strength $\varrho_b$ is added to a pair of random positions across blocks, followed by symmetrization. Thus, $\varrho_b$ can be interpreted as the between-block weak correlation strength, while $\pi_b$ balances cluster independence against global connectivity. In the experiments, we fix $\varrho_w = 0.7$, $\varrho_b = 0.1$, and $\pi_b = 0.3$, while varying the block size to take values in ${10, 20, 25, 40, 50}$. Figure \ref{blockvis} visualizes the resulting block covariance matrices for dimension $p=90$ (solely to illustrate their structure).
\begin{figure}[H]
\centering
\includegraphics[scale=0.53]{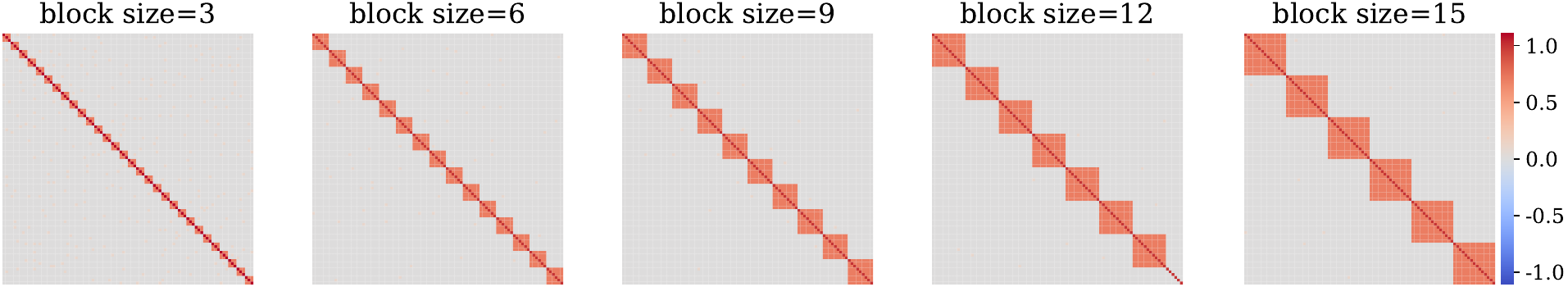}
\caption{Covariance matrices under different block size.}
\label{blockvis}
\end{figure}

In addition to the aforementioned ADMM and LADMM methods, we compare our proposed approach against five other representative optimization algorithms: the three-operator splitting algorithm (TOSA) \citep{davis2017three}, the proximal forward-backward splitting algorithm (PFBS) \citep{combettes2005signal}, the fast iterative shrinkage-thresholding algorithm (FISTA) \citep{beck2009fast}.  In fact, at the initial stage of our simulations, we also considered a semidefinite programming solver implemented in CVXPY \citep{diamond2016cvxpy} and the majorize–minimize algorithm (MMA) \citep{hunter2004tutorial}. However, neither method demonstrated any advantage over the proposed approach or the competing baselines (see \ref{CVXPYMMAsparse}). We therefore omit these two methods from the subsequent simulation studies.

Given that the comparative methods are computationally intensive in high dimensions and may encounter ill-conditioning issues, we fix the sample size at $n=500$ and vary the dimension $p \in \{1000, 2000, 3000, 4000\}$. We evaluate the estimation performance using the convergence time, Frobenius norm, nuclear norm, and duality gap as metrics. The experimental results of all methods are summarized in Tables~\ref{Toeplitztable}--\ref{Blocktable}. 
The visualization results of LBO on the four covariance structures are presented in Figures \ref{LBO_toeplitzexample}--\ref{LBO_blockexample}.

\begin{landscape}
\begin{table}[H]  
\footnotesize
\centering
\begin{threeparttable}
\caption{Experimental results of different methods on the Toeplitz covariance structure}\label{Toeplitztable}
\renewcommand{\arraystretch}{1.2}
\setlength{\tabcolsep}{1.45mm}{
\begin{tabular}{c|c|cccccc|cccccc|cccccc}
\hline
\multirow{2}{*}{\tabincell{c}{$\varrho$}} & \multirow{2}{*}{\tabincell{c}{Dimension}}& \multicolumn{6}{c|}{Time (s)} & \multicolumn{6}{c|}{Frobenius norm} & \multicolumn{6}{c}{Nuclear norm}\\
\cline{3-20}
& & \multicolumn{6}{c|}{LBO/ADMM/LADMM/TOSA/PFBS/FISTA} & \multicolumn{6}{c|}{LBO/ADMM/LADMM/TOSA/PFBS/FISTA} &\multicolumn{6}{c}{LBO/ADMM/LADMM/TOSA/PFBS/FISTA}\\
\hline
\multirow{4}{*}{\tabincell{c}{\(\varrho=0.1\)}} 
& \(p=\)1000 & \similar{$6.348^{1}$}  & $5.131^{1}$ & $1.757^{1}$ & $4.813^{1}$ & $7.757^{0}$ & $1.790^{1}$
& \farbetter{$2.089^{0}$} & $3.194^{1}$ & $5.515^{0}$ & $5.594^{0}$ & $5.617^{0}$ & $5.552^{0}$ 
& \farbetter{$6.575^{1}$}  & $10.00^{2}$ & $1.484^{2}$ & $1.507^{2}$ & $1.507^{2}$ & $1.493^{2}$ \\
& \(p=\)2000 & \farbetter{$1.162^{1}$} & $3.997^{2}$ & $7.069^{1}$ & $1.961^{2}$ & $3.013^{1}$ & $7.686^{1}$ 
& \farbetter{$9.074^{-2}$} & $4.517^{1}$ & $9.198^{0}$ & $9.237^{0}$ & $9.163^{0}$ & $9.202^{0}$ 
& \farbetter{$2.391^{0}$} & $2.000^{3}$ & $3.476^{2}$ & $3.486^{2}$ & $3.460^{2}$ & $3.474^{2}$ \\
& \(p=\)3000 & \farbetter{$1.529^{1}$} & $1.517^{3}$ & $1.750^{2}$ & $4.935^{2}$ & $7.599^{1}$ & $2.008^{2}$ 
& \farbetter{$1.879^{-1}$} & $5.532^{1}$ & $1.275^{1}$ & $1.271^{1}$ & $1.281^{1}$ & $1.271^{1}$ 
& \farbetter{$8.285^{0}$}  & $3.000^{3}$ & $5.868^{2}$ & $5.852^{2}$ & $5.904^{2}$ & $5.858^{2}$ \\
& \(p=\)4000 & \farbetter{$2.312^{1}$} & $4.029^{3}$ & $3.610^{2}$ & $9.581^{2}$ & $1.498^{2}$ & $3.892^{2}$ 
& \farbetter{$1.346^{0}$} & $6.388^{1}$ & $1.629^{1}$ & $1.630^{1}$ & $1.625^{1}$ & $1.620^{1}$ 
& \farbetter{$8.468^{1}$} & $4.000^{3}$ & $8.637^{2}$ & $8.633^{2}$ & $8.625^{2}$ & $8.592^{2}$ \\ 
\hline
\multirow{4}{*}{\tabincell{c}{\(\varrho=0.3\)}} 
& \(p=\)1000 & \similar{$5.812^{1}$} & $4.954^{1}$ & $1.650^{1}$ & $4.619^{1}$ & $7.281^{0}$ & $1.763^{1}$ 
& \farbetter{$2.183^{0}$} & $3.461^{1}$ & $7.292^{0}$ & $7.346^{0}$ & $7.409^{0}$ & $7.332^{0}$ 
& \farbetter{$6.876^{1}$}  & $10.00^{2}$ & $1.852^{2}$ & $1.863^{2}$ & $1.876^{2}$ & $1.867^{2}$ \\
& \(p=\)2000 & \similar{$5.527^{1}$} & $4.017^{2}$ & $6.739^{1}$ & $1.946^{2}$ & $3.007^{1}$ & $7.558^{1}$ 
& \farbetter{$3.052^{-1}$} & $4.894^{1}$ & $1.156^{1}$ & $1.150^{1}$ & $1.150^{1}$ & $1.146^{1}$ 
& \farbetter{$1.272^{1}$} & $2.000^{3}$ & $4.186^{2}$ & $4.165^{2}$ & $4.175^{2}$ & $4.151^{2}$ \\
& \(p=\)3000 & \farbetter{$5.174^{1}$} & $1.538^{3}$ & $1.812^{2}$ & $4.890^{2}$ & $7.493^{1}$ & $2.015^{2}$ 
& \farbetter{$5.462^{-1}$} & $5.994^{1}$ & $1.528^{1}$ & $1.519^{1}$ & $1.524^{1}$ & $1.522^{1}$ 
& \farbetter{$2.799^{1}$}  & $3.000^{3}$ & $6.830^{2}$ & $6.794^{2}$ & $6.816^{2}$ & $6.802^{2}$ \\
& \(p=\)4000 & \farbetter{$6.721^{1}$} & $4.056^{3}$ & $3.567^{2}$ & $9.682^{2}$ & $1.501^{2}$ & $3.961^{2}$ 
& \farbetter{$1.265^{0}$} & $6.922^{1}$ & $1.887^{1}$ & $1.896^{1}$ & $1.895^{1}$ & $1.892^{1}$
& \farbetter{$7.659^{1}$}  & $4.000^{3}$ & $9.780^{2}$ & $9.839^{2}$ & $9.831^{2}$ & $9.817^{2}$ \\
\hline
\multirow{4}{*}{\tabincell{c}{\(\varrho=0.5\)}} 
& \(p=\)1000 & \similar{$5.054^{1}$} & $4.698^{1}$ & $1.576^{1}$ & $4.424^{1}$ & $7.309^{0}$ & $1.841^{1}$ 
& \farbetter{$2.591^{0}$} & $4.081^{1}$ & $9.508^{0}$ & $9.631^{0}$ & $9.753^{0}$ & $9.620^{0}$ 
& \farbetter{$8.173^{1}$}  & $10.00^{2}$ & $2.129^{2}$ & $2.148^{2}$ & $2.167^{2}$ & $2.148^{2}$ \\
& \(p=\)2000 & \similar{$5.882^{1}$} & $3.946^{2}$ & $6.903^{1}$ & $1.970^{2}$ & $3.033^{1}$ & $7.814^{1}$ 
& \farbetter{$5.933^{-1}$} & $5.773^{1}$ & $1.448^{1}$ & $1.441^{1}$ & $1.442^{1}$ & $1.441^{1}$ 
& \farbetter{$1.875^{1}$}  & $2.000^{3}$ & $4.700^{2}$ & $4.693^{2}$ & $4.685^{2}$ & $4.668^{2}$ \\
& \(p=\)3000 & \similar{$1.175^{2}$} & $1.494^{3}$ & $1.747^{2}$ & $4.897^{2}$ & $7.439^{1}$ & $1.988^{2}$ 
& \farbetter{$9.657^{-1}$} & $7.070^{1}$ & $1.874^{1}$ & $1.865^{1}$ & $1.865^{1}$ & $1.865^{1}$ 
& \farbetter{$4.285^{1}$}  & $3.000^{3}$ & $7.602^{2}$ & $7.563^{2}$ & $7.549^{2}$ & $7.557^{2}$ \\
& \(p=\)4000 & \similar{$2.152^{2}$} & $4.040^{3}$ & $3.626^{2}$ & $9.694^{2}$ & $1.520^{2}$ & $3.928^{2}$ 
& \farbetter{$1.816^{0}$} & $8.164^{1}$ & $2.269^{1}$ & $2.272^{1}$ & $2.259^{1}$ & $2.271^{1}$ 
& \farbetter{$1.070^{2}$}  & $4.000^{3}$ & $1.080^{3}$ & $1.079^{3}$ & $1.078^{3}$ & $1.081^{3}$ \\
\hline
\multirow{4}{*}{\tabincell{c}{\(\varrho=0.7\)}} 
& \(p=\)1000 & \similar{$5.919^{1}$} & $5.142^{1}$ & $1.536^{1}$ & $4.573^{1}$ & $6.969^{0}$ & $1.743^{1}$ 
& \farbetter{$6.057^{-1}$} & $5.401^{1}$ & $1.352^{1}$ & $1.365^{1}$ & $1.305^{1}$ & $1.310^{1}$ 
& \farbetter{$1.752^{1}$}  & $10.00^{2}$ & $2.391^{2}$ & $2.428^{2}$ & $2.367^{2}$ & $2.381^{2}$ \\
& \(p=\)2000 & \similar{$5.484^{1}$}& $4.032^{2}$ & $6.977^{1}$ & $1.945^{2}$ & $2.989^{1}$ & $7.839^{1}$ 
& \farbetter{$8.071^{-1}$} & $7.642^{1}$ & $1.974^{1}$ & $1.948^{1}$ & $1.942^{1}$ & $1.958^{1}$ 
& \farbetter{$2.160^{1}$}  & $2.000^{3}$ & $5.187^{2}$ & $5.110^{2}$ & $5.121^{2}$ & $5.160^{2}$ \\
& \(p=\)3000 & \similar{$1.145^{2}$} & $1.516^{3}$ & $1.749^{2}$ & $4.829^{2}$ & $7.804^{1}$ & $2.002^{2}$ 
& \farbetter{$1.846^{0}$} & $9.360^{1}$ & $2.459^{1}$ & $2.464^{1}$ & $2.453^{1}$ & $2.514^{1}$ 
& \farbetter{$9.119^{1}$}  & $3.000^{3}$ & $8.183^{2}$ & $8.183^{2}$ & $8.206^{2}$ & $8.260^{2}$ \\
& \(p=\)4000 & \similar{$2.165^{2}$} & $4.157^{3}$ & $3.673^{2}$ & $1.008^{3}$ & $1.632^{2}$ & $3.905^{2}$ 
& \farbetter{$2.578^{0}$} & $1.081^{2}$ & $2.968^{1}$ & $2.943^{1}$ & $2.970^{1}$ & $2.932^{1}$ 
& \farbetter{$1.474^{2}$}  & $4.000^{3}$ & $1.157^{3}$ & $1.154^{3}$ & $1.158^{3}$ & $1.150^{3}$ \\
\hline
\multirow{4}{*}{\tabincell{c}{\(\varrho=0.9\)}} 
& \(p=\)1000 & \similar{$5.738^{1}$} & $4.612^{1}$ & $1.606^{1}$ & $4.624^{1}$ & $7.659^{0}$ & $1.807^{1}$ 
& \farbetter{$1.165^{0}$} & $9.553^{1}$ & $2.491^{1}$ & $2.419^{1}$ & $2.372^{1}$ & $2.454^{1}$ 
& \farbetter{$2.190^{1}$}  & $9.889^{2}$ & $2.729^{2}$ & $2.697^{2}$ & $2.656^{2}$ & $2.704^{2}$ \\
& \(p=\)2000 & \similar{$6.656^{1}$} & $3.796^{2}$ & $6.354^{1}$ & $1.918^{2}$ & $3.043^{1}$ & $7.621^{1}$ 
& \farbetter{$1.641^{0}$} & $1.377^{2}$ & $3.542^{1}$ & $3.484^{1}$ & $3.525^{1}$ & $3.573^{1}$ 
& \farbetter{$3.438^{1}$}  & $1.998^{3}$ & $5.625^{2}$ & $5.675^{2}$ & $5.635^{2}$ & $5.637^{2}$ \\
& \(p=\)3000 & \similar{$9.785^{1}$} & $1.521^{3}$ & $1.767^{2}$ & $5.143^{2}$ & $8.116^{1}$ & $1.998^{2}$ 
& \farbetter{$2.089^{0}$} & $1.689^{2}$ & $4.372^{1}$ & $4.335^{1}$ & $4.394^{1}$ & $4.356^{1}$ 
& \farbetter{$5.919^{1}$}  & $3.000^{3}$ & $8.733^{2}$ & $8.778^{2}$ & $8.822^{2}$ & $8.789^{2}$ \\
& \(p=\)4000 & \similar{$1.856^{2}$} & $4.159^{3}$ & $3.498^{2}$ & $9.877^{2}$ & $1.570^{2}$ & $3.849^{2}$ 
& \farbetter{$2.851^{0}$} & $1.951^{2}$ & $5.088^{1}$ & $5.005^{1}$ & $5.005^{1}$ & $5.037^{1}$ 
& \farbetter{$1.177^{2}$}  & $4.000^{3}$ & $1.210^{3}$ & $1.209^{3}$ & $1.209^{3}$ & $1.204^{3}$ \\
\hline
\end{tabular}
}
\begin{tablenotes}
\footnotesize
\item[*] Scientific notation is used for the numbers reported in the tables. For example, $1.177^{2}$ denotes $1.177 \times 10^{2}$. The same notation applies throughout the following tables.
\item[*] We use a color code to indicate relative performance: 
\legendbox{green!18} far better, 
\legendbox{gray!12} comparable, 
and \legendbox{red!15} worse (relative to the competing methods). 
The same color scheme is used in the following tables.
\end{tablenotes}
\end{threeparttable}
\end{table}
\end{landscape}

\begin{landscape}
\begin{table}[H]
\footnotesize
\centering
\caption{Experimental results of different methods on the Factor covariance structure}\label{Factortable}
\renewcommand{\arraystretch}{1.2}
\setlength{\tabcolsep}{1.45mm}{
\begin{tabular}{c|c|cccccc|cccccc|cccccc}
\hline
\multirow{2}{*}{\tabincell{c}{Factor \\ number}} & \multirow{2}{*}{\tabincell{c}{Dimension}}& \multicolumn{6}{c|}{Time (s)} & \multicolumn{6}{c|}{Frobenius norm} & \multicolumn{6}{c}{Nuclear norm}\\
\cline{3-20}
& & \multicolumn{6}{c|}{LBO/ADMM/LADMM/TOSA/PFBS/FISTA} & \multicolumn{6}{c|}{LBO/ADMM/LADMM/TOSA/PFBS/FISTA} &\multicolumn{6}{c}{LBO/ADMM/LADMM/TOSA/PFBS/FISTA}\\
\hline
\multirow{4}{*}{\tabincell{c}{\(m=3\)}} 
& \(p=\)1000 & \farbetter{$6.605^{1}$} & $9.947^{1}$ & $7.239^{1}$ & $1.780^{2}$ & $8.210^{1}$ & $1.221^{3}$ 
& \farbetter{$3.218^{1}$} & $1.465^{2}$ & $1.465^{2}$ & $1.469^{2}$ & $1.463^{2}$ & $1.465^{2}$ 
& \similar{$8.149^{2}$} & $6.546^{2}$ & $6.546^{2}$ & $7.388^{2}$ & $6.406^{2}$ & $6.546^{2}$ \\
& \(p=\)2000 & \farbetter{$4.229^{1}$} & $6.151^{2}$ & $6.249^{2}$ & $7.954^{2}$ & $2.664^{2}$ & $5.530^{3}$ 
& \farbetter{$4.357^{1}$} & $4.465^{2}$ & $4.465^{2}$ & $4.471^{2}$ & $4.463^{2}$ & $4.465^{2}$ 
& \farbetter{$1.483^{2}$} & $1.586^{3}$ & $1.586^{3}$ & $1.812^{3}$ & $1.543^{3}$ & $1.586^{3}$ \\
& \(p=\)3000 & \farbetter{$8.342^{1}$} & $1.542^{3}$ & $1.697^{3}$ & $1.705^{3}$ & $3.798^{2}$ & $1.171^{4}$ 
& \farbetter{$6.528^{1}$} & $4.385^{2}$ & $4.385^{2}$ & $4.400^{2}$ & $4.380^{2}$ & $4.385^{2}$ 
& \farbetter{$2.377^{2}$} & $2.211^{3}$ & $2.211^{3}$ & $2.591^{3}$ & $2.135^{3}$ & $2.211^{3}$ \\
& \(p=\)4000 & \farbetter{$2.877^{1}$} & $1.040^{4}$ & $1.437^{4}$ & $2.114^{4}$ & $5.231^{3}$ & $1.690^{5}$ 
& \farbetter{$8.746^{1}$} & $5.919^{2}$ & $5.919^{2}$ & $5.940^{2}$ & $5.912^{2}$ & $5.919^{2}$ 
& \farbetter{$3.018^{2}$} & $3.148^{3}$ & $3.148^{3}$ & $3.708^{3}$ & $3.032^{3}$ & $3.148^{3}$ \\
\hline
\multirow{4}{*}{\tabincell{c}{\(m=5\)}} 
& \(p=\)1000 & \farbetter{$1.189^{1}$} & $2.089^{3}$ & $5.206^{2}$ & $3.077^{3}$ & $1.079^{3}$ & $1.183^{4}$ 
& \farbetter{$3.193^{1}$} & $1.974^{2}$ & $1.974^{2}$ & $1.977^{2}$ & $1.973^{2}$ & $1.974^{2}$ 
& \farbetter{$6.097^{2}$} & $1.068^{3}$ & $1.068^{3}$ & $1.235^{3}$ & $1.057^{3}$ & $1.068^{3}$ \\
& \(p=\)2000 & \farbetter{$4.068^{1}$} & $5.176^{3}$ & $3.495^{3}$ & $6.752^{3}$ & $2.179^{3}$ & $3.032^{4}$ 
& \farbetter{$5.683^{1}$} & $6.391^{2}$ & $6.391^{2}$ & $6.396^{2}$ & $6.390^{2}$ & $6.391^{2}$ 
& \farbetter{$4.793^{2}$} & $3.018^{3}$ & $3.018^{3}$ & $3.504^{3}$ & $2.981^{3}$ & $3.018^{3}$ \\
& \(p=\)3000 & \farbetter{$8.566^{1}$} & $1.709^{4}$ & $6.982^{3}$ & $1.425^{4}$ & $5.011^{3}$ & $7.197^{4}$ 
& \farbetter{$8.451^{1}$} & $6.360^{2}$ & $6.360^{2}$ & $6.372^{2}$ & $6.358^{2}$ & $6.360^{2}$ 
& \farbetter{$3.811^{2}$} & $4.288^{3}$ & $4.288^{3}$ & $5.191^{3}$ & $4.218^{3}$ & $4.288^{3}$ \\
& \(p=\)4000 & \farbetter{$3.368^{1}$} & $2.104^{4}$ & $1.839^{4}$ & $4.719^{4}$ & $1.016^{4}$ & $4.030^{4}$ 
& \farbetter{$1.129^{2}$} & $1.211^{3}$ & $1.211^{3}$ & $1.212^{3}$ & $1.210^{3}$ & $1.211^{3}$ 
& \farbetter{$5.058^{2}$} & $6.669^{3}$ & $6.669^{3}$ & $8.038^{3}$ & $6.557^{3}$ & $6.669^{3}$ \\
\hline
\multirow{4}{*}{\tabincell{c}{\(m=7\)}} 
& \(p=\)1000 & \farbetter{$1.747^{1}$} & $1.782^{2}$ & $1.010^{2}$ & $2.332^{2}$ & $8.350^{1}$ & $1.097^{3}$ 
& \farbetter{$3.345^{1}$} & $3.691^{2}$ & $3.691^{2}$ & $3.693^{2}$ & $3.691^{2}$ & $3.691^{2}$ 
& \farbetter{$1.777^{2}$} & $1.589^{3}$ & $1.589^{3}$ & $1.784^{3}$ & $1.581^{3}$ & $1.589^{3}$ \\
& \(p=\)2000 & \farbetter{$4.095^{1}$} & $7.901^{2}$ & $4.753^{2}$ & $1.099^{3}$ & $3.798^{2}$ & $4.760^{3}$ 
& \farbetter{$6.671^{1}$} & $6.626^{2}$ & $6.626^{2}$ & $6.631^{2}$ & $6.625^{2}$ & $6.626^{2}$ 
& \farbetter{$3.598^{2}$} & $3.646^{3}$ & $3.646^{3}$ & $4.239^{3}$ & $3.622^{3}$ & $3.646^{3}$ \\
& \(p=\)3000 & \farbetter{$8.113^{1}$} & $1.772^{3}$ & $1.215^{3}$ & $2.461^{3}$ & $8.408^{2}$ & $1.042^{4}$ 
& \farbetter{$1.001^{2}$} & $1.085^{3}$ & $1.085^{3}$ & $1.086^{3}$ & $1.085^{3}$ & $1.085^{3}$ 
& \farbetter{$5.365^{2}$} & $5.933^{3}$ & $5.933^{3}$ & $7.041^{3}$ & $5.886^{3}$ & $5.933^{3}$ \\
& \(p=\)4000 & \farbetter{$2.731^{1}$} & $3.320^{3}$ & $1.965^{3}$ & $2.125^{3}$ & $7.795^{2}$ & $9.776^{3}$ 
& \farbetter{$1.332^{2}$} & $1.282^{3}$ & $1.282^{3}$ & $1.283^{3}$ & $1.282^{3}$ & $1.282^{3}$ 
& \farbetter{$6.846^{2}$} & $8.324^{3}$ & $8.324^{3}$ & $1.005^{4}$ & $8.255^{3}$ & $8.324^{3}$ \\
\hline
\multirow{4}{*}{\tabincell{c}{\(m=9\)}} 
& \(p=\)1000 & \farbetter{$2.339^{1}$} & $1.665^{2}$ & $9.556^{1}$ & $2.466^{2}$ & $8.351^{1}$ & $8.821^{2}$ 
& \farbetter{$3.769^{1}$} & $4.074^{2}$ & $4.074^{2}$ & $4.076^{2}$ & $4.074^{2}$ & $4.074^{2}$ 
& \farbetter{$2.251^{2}$} & $1.873^{3}$ & $1.873^{3}$ & $2.078^{3}$ & $1.867^{3}$ & $1.873^{3}$ \\
& \(p=\)2000 & \farbetter{$4.018^{1}$} & $5.576^{2}$ & $3.443^{2}$ & $7.893^{2}$ & $2.796^{2}$ & $3.591^{3}$ 
& \farbetter{$7.550^{1}$} & $8.869^{2}$ & $8.869^{2}$ & $8.873^{2}$ & $8.868^{2}$ & $8.869^{2}$ 
& \farbetter{$5.526^{2}$} & $4.509^{3}$ & $4.509^{3}$ & $5.146^{3}$ & $4.490^{3}$ & $4.509^{3}$ \\
& \(p=\)3000 & \farbetter{$8.941^{1}$} & $1.404^{3}$ & $8.694^{2}$ & $1.962^{3}$ & $7.131^{2}$ & $8.492^{3}$ 
& \farbetter{$1.133^{2}$} & $1.291^{3}$ & $1.291^{3}$ & $1.292^{3}$ & $1.291^{3}$ & $1.291^{3}$ 
& \farbetter{$6.700^{2}$} & $7.354^{3}$ & $7.354^{3}$ & $8.556^{3}$ & $7.320^{3}$ & $7.354^{3}$ \\
& \(p=\)4000 & \farbetter{$3.551^{1}$} & $1.015^{4}$ & $7.950^{3}$ & $1.875^{4}$ & $9.517^{3}$ & $8.776^{4}$ 
& \farbetter{$1.512^{2}$} & $1.667^{3}$ & $1.667^{3}$ & $1.668^{3}$ & $1.667^{3}$ & $1.667^{3}$ 
& \farbetter{$8.821^{2}$} & $1.016^{4}$ & $1.016^{4}$ & $1.204^{4}$ & $1.011^{4}$ & $1.016^{4}$ \\
\hline
\multirow{4}{*}{\tabincell{c}{\(m=10\)}} 
& \(p=\)1000 & \farbetter{$1.200^{1}$} & $1.518^{3}$ & $6.111^{2}$ & $2.019^{3}$ & $7.805^{2}$ & $8.644^{3}$ 
& \farbetter{$5.424^{1}$} & $3.955^{2}$ & $3.955^{2}$ & $3.957^{2}$ & $3.955^{2}$ & $3.955^{2}$ 
& \farbetter{$1.386^{3}$} & $1.942^{3}$ & $1.942^{3}$ & $2.150^{3}$ & $1.936^{3}$ & $1.942^{3}$ \\
& \(p=\)2000 & \farbetter{$4.085^{1}$} & $4.138^{3}$ & $2.237^{3}$ & $5.862^{3}$ & $1.989^{3}$ & $2.580^{4}$ 
& \farbetter{$8.209^{1}$} & $9.161^{2}$ & $9.161^{2}$ & $9.165^{2}$ & $9.161^{2}$ & $9.161^{2}$ 
& \farbetter{$1.347^{3}$} & $4.658^{3}$ & $4.658^{3}$ & $5.305^{3}$ & $4.641^{3}$ & $4.658^{3}$ \\
& \(p=\)3000 & \farbetter{$8.636^{1}$} & $7.705^{3}$ & $4.750^{3}$ & $1.093^{4}$ & $2.926^{3}$ & $7.969^{4}$ 
& \farbetter{$1.196^{2}$} & $1.385^{3}$ & $1.385^{3}$ & $1.386^{3}$ & $1.385^{3}$ & $1.385^{3}$ 
& \farbetter{$7.412^{2}$} & $7.687^{3}$ & $7.687^{3}$ & $8.923^{3}$ & $7.656^{3}$ & $7.687^{3}$ \\
& \(p=\)4000 & \farbetter{$2.647^{1}$} & $2.740^{4}$ & $1.582^{4}$ & $3.222^{4}$ & $1.256^{4}$ & $1.423^{5}$ 
& \farbetter{$1.595^{2}$} & $1.828^{3}$ & $1.828^{3}$ & $1.828^{3}$ & $1.827^{3}$ & $1.828^{3}$ 
& \farbetter{$9.894^{2}$} & $1.101^{4}$ & $1.101^{4}$ & $1.294^{4}$ & $1.096^{4}$ & $1.101^{4}$ \\
\hline
\end{tabular}
}
\end{table}
\end{landscape}

\begin{landscape}
\begin{table}[H]  
\footnotesize
\centering
\caption{Experimental results of different methods on the sparse covariance structure}\label{Sparsetable}
\renewcommand{\arraystretch}{1.2}
\setlength{\tabcolsep}{1.45mm}{
\begin{tabular}{c|c|cccccc|cccccc|cccccc}
\hline
\multirow{2}{*}{\tabincell{c}{Sparsity \\ level}} & \multirow{2}{*}{\tabincell{c}{Dimension}}& \multicolumn{6}{c|}{Time (s)} & \multicolumn{6}{c|}{Frobenius norm} & \multicolumn{6}{c}{Nuclear norm}\\
\cline{3-20}
& & \multicolumn{6}{c|}{LBO/ADMM/LADMM/TOSA/PFBS/FISTA} & \multicolumn{6}{c|}{LBO/ADMM/LADMM/TOSA/PFBS/FISTA} &\multicolumn{6}{c}{LBO/ADMM/LADMM/TOSA/PFBS/FISTA}\\
\hline
\multirow{4}{*}{\tabincell{c}{\(q=0.1\)}} 
& \(p=\)1000 & \farbetter{$1.791^{1}$} & $1.696^{2}$ & $1.110^{2}$ & $2.662^{2}$ & $4.011^{1}$ & $2.374^{2}$ 
& \farbetter{$1.704^{2}$} & $1.252^{3}$ & $1.252^{3}$ & $1.251^{3}$ & $1.252^{3}$ & $1.252^{3}$ 
& \farbetter{$5.369^{3}$} & $3.002^{4}$ & $3.002^{4}$ & $3.000^{4}$ & $3.002^{4}$ & $3.002^{4}$ \\
& \(p=\)2000 & \farbetter{$4.180^{1}$} & $6.248^{2}$ & $3.582^{2}$ & $9.583^{2}$ & $1.582^{2}$ & $1.082^{3}$ 
& \farbetter{$1.001^{3}$} & $3.582^{3}$ & $3.582^{3}$ & $3.577^{3}$ & $3.582^{3}$ & $3.582^{3}$ 
& \farbetter{$4.470^{4}$} & $1.138^{5}$ & $1.138^{5}$ & $1.133^{5}$ & $1.138^{5}$ & $1.138^{5}$ \\
& \(p=\)3000 & \farbetter{$1.709^{1}$} & $1.452^{3}$ & $8.957^{2}$ & $2.432^{3}$ & $3.849^{2}$ & $3.093^{3}$ 
& \farbetter{$1.259^{3}$} & $6.649^{3}$ & $6.649^{3}$ & $6.639^{3}$ & $6.649^{3}$ & $6.649^{3}$ 
& \farbetter{$6.878^{4}$} & $2.374^{5}$ & $2.374^{5}$ & $2.360^{5}$ & $2.374^{5}$ & $2.374^{5}$ \\
& \(p=\)4000 & \farbetter{$2.527^{1}$} & $9.748^{3}$ & $5.041^{3}$ & $1.859^{4}$ & $3.254^{3}$ & $2.711^{4}$ 
& \farbetter{$7.315^{3}$} & $1.024^{4}$ & $1.024^{4}$ & $1.023^{4}$ & $1.024^{4}$ & $1.024^{4}$ 
& $4.623^{5}$ & $3.869^{5}$ & $3.869^{5}$ & $3.843^{5}$ & $3.869^{5}$ & $3.869^{5}$ \\
\hline
\multirow{4}{*}{\tabincell{c}{\(q=0.3\)}} 
& \(p=\)1000 & \farbetter{$1.649^{1}$} & $7.722^{2}$ & $3.542^{2}$ & $1.063^{3}$ & $3.105^{2}$ & $1.134^{3}$ 
& \farbetter{$6.421^{0}$} & $1.089^{3}$ & $1.089^{3}$ & $1.088^{3}$ & $1.089^{3}$ & $1.089^{3}$ 
& \farbetter{$1.702^{2}$} & $2.616^{4}$ & $2.616^{4}$ & $2.614^{4}$ & $2.616^{4}$ & $2.616^{4}$ \\
& \(p=\)2000 & \farbetter{$9.777^{0}$} & $2.427^{3}$ & $1.220^{3}$ & $4.435^{3}$ & $6.604^{2}$ & $4.640^{3}$ 
& \farbetter{$3.174^{1}$} & $3.180^{3}$ & $3.180^{3}$ & $3.175^{3}$ & $3.180^{3}$ & $3.180^{3}$ 
& \farbetter{$7.241^{2}$} & $1.011^{5}$ & $1.011^{5}$ & $1.007^{5}$ & $1.011^{5}$ & $1.011^{5}$ \\
& \(p=\)3000 & \farbetter{$1.019^{1}$} & $6.433^{3}$ & $4.159^{3}$ & $1.113^{4}$ & $2.108^{3}$ & $1.372^{4}$ 
& \farbetter{$2.635^{1}$} & $5.800^{3}$ & $5.800^{3}$ & $5.790^{3}$ & $5.800^{3}$ & $5.800^{3}$ 
& \farbetter{$1.009^{3}$} & $2.070^{5}$ & $2.070^{5}$ & $2.056^{5}$ & $2.070^{5}$ & $2.070^{5}$ \\
& \(p=\)4000 & \farbetter{$9.667^{1}$} & $1.749^{4}$ & $1.283^{4}$ & $2.776^{4}$ & $4.261^{3}$ & $3.263^{4}$ 
& \farbetter{$3.891^{1}$} & $9.005^{3}$ & $9.005^{3}$ & $8.991^{3}$ & $9.006^{3}$ & $9.005^{3}$ 
& \farbetter{$1.728^{3}$} & $3.397^{5}$ & $3.397^{5}$ & $3.372^{5}$ & $3.398^{5}$ & $3.397^{5}$ \\
\hline
\multirow{4}{*}{\tabincell{c}{\(q=0.5\)}} 
& \(p=\)1000 & \farbetter{$5.211^{1}$} & $1.080^{3}$ & $6.405^{2}$ & $1.785^{3}$ & $2.660^{2}$ & $2.568^{3}$ 
& \farbetter{$2.323^{1}$} & $9.207^{2}$ & $9.207^{2}$ & $9.202^{2}$ & $9.207^{2}$ & $9.207^{2}$ 
& \farbetter{$6.123^{2}$} & $2.212^{4}$ & $2.212^{4}$ & $2.209^{4}$ & $2.212^{4}$ & $2.212^{4}$ \\
& \(p=\)2000 & \farbetter{$1.619^{1}$} & $7.065^{3}$ & $3.424^{3}$ & $8.235^{3}$ & $1.410^{3}$ & $1.008^{4}$ 
& \farbetter{$9.973^{2}$} & $2.632^{3}$ & $2.632^{3}$ & $2.627^{3}$ & $2.632^{3}$ & $2.632^{3}$ 
& \farbetter{$4.438^{4}$} & $8.369^{4}$ & $8.369^{4}$ & $8.326^{4}$ & $8.371^{4}$ & $8.369^{4}$ \\
& \(p=\)3000 & \farbetter{$8.791^{0}$} & $1.115^{4}$ & $6.821^{3}$ & $1.626^{4}$ & $2.164^{3}$ & $2.534^{4}$ 
& \farbetter{$4.207^{2}$} & $4.899^{3}$ & $4.899^{3}$ & $4.889^{3}$ & $4.899^{3}$ & $4.899^{3}$ 
& \farbetter{$2.268^{4}$} & $1.745^{5}$ & $1.745^{5}$ & $1.732^{5}$ & $1.745^{5}$ & $1.745^{5}$ \\
& \(p=\)4000 & \farbetter{$7.747^{1}$} & $1.866^{4}$ & $1.307^{4}$ & $3.439^{4}$ & $4.378^{3}$ & $3.304^{4}$ 
& \farbetter{$6.246^{1}$} & $7.588^{3}$ & $7.588^{3}$ & $7.574^{3}$ & $7.588^{3}$ & $7.588^{3}$ 
& \farbetter{$2.996^{3}$} & $2.861^{5}$ & $2.861^{5}$ & $2.835^{5}$ & $2.861^{5}$ & $2.861^{5}$ \\
\hline
\multirow{4}{*}{\tabincell{c}{\(q=0.7\)}} 
& \(p=\)1000 & \farbetter{$2.716^{1}$} & $1.054^{3}$ & $6.755^{2}$ & $1.732^{3}$ & $3.837^{2}$ & $1.776^{3}$ 
& \farbetter{$1.865^{1}$} & $6.913^{2}$ & $6.913^{2}$ & $6.908^{2}$ & $6.913^{2}$ & $6.913^{2}$ 
& \farbetter{$4.845^{2}$} & $1.660^{4}$ & $1.660^{4}$ & $1.658^{4}$ & $1.661^{4}$ & $1.660^{4}$ \\
& \(p=\)2000 & \farbetter{$9.958^{0}$} & $4.075^{3}$ & $3.201^{3}$ & $1.096^{4}$ & $1.084^{3}$ & $8.237^{3}$ 
& \farbetter{$1.706^{2}$} & $1.996^{3}$ & $1.996^{3}$ & $1.991^{3}$ & $1.996^{3}$ & $1.996^{3}$ 
& \farbetter{$7.298^{3}$} & $6.332^{4}$ & $6.332^{4}$ & $6.289^{4}$ & $6.334^{4}$ & $6.332^{4}$ \\
& \(p=\)3000 & \farbetter{$1.500^{1}$} & $1.943^{4}$ & $1.320^{4}$ & $1.551^{4}$ & $1.557^{3}$ & $1.613^{4}$ 
& \farbetter{$5.924^{2}$} & $3.724^{3}$ & $3.724^{3}$ & $3.714^{3}$ & $3.724^{3}$ & $3.724^{3}$ 
& \farbetter{$3.223^{4}$} & $1.324^{5}$ & $1.324^{5}$ & $1.312^{5}$ & $1.325^{5}$ & $1.324^{5}$ \\
& \(p=\)4000 & \farbetter{$1.185^{2}$} & $7.080^{3}$ & $1.965^{3}$ & $4.827^{3}$ & $7.751^{2}$ & $6.137^{3}$
& \worse{$1.919^{4}$} & $5.779^{3}$ & $5.779^{3}$ & $5.765^{3}$ & $5.779^{3}$ & $5.779^{3}$ 
& \worse{$1.212^{6}$} & $2.172^{5}$ & $2.172^{5}$ & $2.147^{5}$ & $2.173^{5}$ & $2.172^{5}$ \\
\hline
\multirow{4}{*}{\tabincell{c}{\(q=0.9\)}} 
& \(p=\)1000 & \farbetter{$2.910^{1}$} & $1.812^{2}$ & $9.909^{1}$ & $2.852^{2}$ & $4.342^{1}$ & $3.124^{2}$ 
& \farbetter{$1.476^{2}$} & $3.703^{2}$ & $3.703^{2}$ & $3.698^{2}$ & $3.703^{2}$ & $3.703^{2}$ 
& \farbetter{$4.574^{3}$} & $8.909^{3}$ & $8.909^{3}$ & $8.890^{3}$ & $8.912^{3}$ & $8.909^{3}$ \\
& \(p=\)2000 & \farbetter{$8.030^{0}$} & $7.273^{2}$ & $3.326^{2}$ & $9.573^{2}$ & $1.438^{2}$ & $1.319^{3}$ 
& \farbetter{$3.762^{2}$} & $1.111^{3}$ & $1.111^{3}$ & $1.106^{3}$ & $1.111^{3}$ & $1.111^{3}$ 
& \farbetter{$1.648^{4}$} & $3.519^{4}$ & $3.519^{4}$ & $3.480^{4}$ & $3.522^{4}$ & $3.519^{4}$ \\
& \(p=\)3000 & \farbetter{$1.779^{1}$} & $2.080^{3}$ & $9.098^{2}$ & $2.577^{3}$ & $3.219^{2}$ & $3.562^{3}$ 
& \farbetter{$6.054^{2}$} & $2.067^{3}$ & $2.067^{3}$ & $2.058^{3}$ & $2.067^{3}$ & $2.067^{3}$ 
& \farbetter{$3.277^{4}$} & $7.302^{4}$ & $7.302^{4}$ & $7.183^{4}$ & $7.308^{4}$ & $7.302^{4}$ \\
& \(p=\)4000 & \farbetter{$5.866^{1}$} & $3.513^{3}$ & $1.486^{3}$ & $4.449^{3}$ & $7.098^{2}$ & $6.150^{3}$ 
& \worse{$8.990^{4}$} & $3.218^{3}$ & $3.218^{3}$ & $3.205^{3}$ & $3.219^{3}$ & $3.218^{3}$ 
& \worse{$5.683^{6}$} & $1.202^{5}$ & $1.202^{5}$ & $1.178^{5}$ & $1.203^{5}$ & $1.202^{5}$ \\
\hline
\end{tabular}
}
\end{table}

\end{landscape}

\begin{landscape}
\begin{table}[H] 
\footnotesize
\centering
\caption{Experimental results of different methods on the block covariance structure}\label{Blocktable}
\renewcommand{\arraystretch}{1.2}
\setlength{\tabcolsep}{1.45mm}{
\begin{tabular}{c|c|cccccc|cccccc|cccccc}
\hline
\multirow{2}{*}{\tabincell{c}{Block \\ size}} & \multirow{2}{*}{\tabincell{c}{Dimension}}& \multicolumn{6}{c|}{Time (s)} & \multicolumn{6}{c|}{Frobenius norm} & \multicolumn{6}{c}{Nuclear norm}\\
\cline{3-20}
& & \multicolumn{6}{c|}{LBO/ADMM/LADMM/TOSA/PFBS/FISTA} & \multicolumn{6}{c|}{LBO/ADMM/LADMM/TOSA/PFBS/FISTA} &\multicolumn{6}{c}{LBO/ADMM/LADMM/TOSA/PFBS/FISTA}\\
\hline
\multirow{4}{*}{\tabincell{c}{\(10\)}} 
& \(p=\)1000 & \similar{$6.510^{1}$} & $6.161^{1}$ & $3.976^{1}$ & $2.071^{2}$ & $3.660^{1}$ & $8.718^{1}$ 
& \farbetter{$9.091^{0}$} & $1.398^{1}$ & $1.398^{1}$ & $1.398^{1}$ & $1.398^{1}$ & $1.398^{1}$ 
& \farbetter{$2.754^{2}$} & $2.898^{2}$ & $2.898^{2}$ & $2.898^{2}$ & $2.898^{2}$ & $2.898^{2}$ \\
& \(p=\)2000 & \farbetter{$3.115^{1}$} & $5.935^{2}$ & $3.615^{4}$ & $1.016^{4}$ & $1.122^{2}$ & $2.695^{2}$ 
& \farbetter{$1.747^{1}$} & $2.667^{1}$ & $2.667^{1}$ & $2.667^{1}$ & $2.667^{1}$ & $2.667^{1}$ 
& \farbetter{$7.655^{2}$} & $8.791^{2}$ & $8.791^{2}$ & $8.791^{2}$ & $8.791^{2}$ & $8.791^{2}$ \\
& \(p=\)3000 & \farbetter{$7.809^{1}$} & $1.501^{3}$ & $1.240^{4}$ & $1.927^{3}$ & $2.875^{2}$ & $1.030^{4}$ 
& \farbetter{$2.510^{1}$} & $4.230^{1}$ & $4.230^{1}$ & $4.201^{1}$ & $4.273^{1}$ & $4.230^{1}$ 
& \farbetter{$1.356^{3}$} & $1.771^{3}$ & $1.771^{3}$ & $1.750^{3}$ & $1.791^{3}$ & $1.771^{3}$ \\
& \(p=\)4000 & \farbetter{$1.110^{2}$} & $2.822^{3}$ & $2.972^{3}$ & $4.434^{3}$ & $7.198^{2}$ & $1.337^{4}$ 
& \farbetter{$3.961^{1}$} & $6.118^{1}$ & $6.118^{1}$ & $6.069^{1}$ & $6.291^{1}$ & $6.118^{1}$ 
& \farbetter{$2.480^{3}$} & $2.962^{3}$ & $2.962^{3}$ & $2.917^{3}$ & $3.051^{3}$ & $2.962^{3}$ \\
\hline
\multirow{4}{*}{\tabincell{c}{\(20\)}} 
& \(p=\)1000 & \similar{$1.463^{2}$} & $1.086^{2}$ & $6.271^{1}$ & $2.053^{2}$ & $2.297^{1}$ & $7.434^{1}$ 
& \farbetter{$1.026^{1}$} & $1.659^{1}$ & $1.659^{1}$ & $1.659^{1}$ & $1.659^{1}$ & $1.659^{1}$ 
& \similar{$2.875^{2}$} & $2.281^{2}$ & $2.281^{2}$ & $2.281^{2}$ & $2.281^{2}$ & $2.281^{2}$ \\
& \(p=\)2000 & \farbetter{$1.083^{2}$} & $4.595^{2}$ & $2.816^{2}$ & $8.456^{2}$ & $1.250^{2}$ & $3.036^{2}$ 
& \similar{$2.958^{1}$} & $2.731^{1}$ & $2.731^{1}$ & $2.731^{1}$ & $2.731^{1}$ & $2.731^{1}$ 
& \worse{$1.277^{3}$} & $6.107^{2}$ & $6.107^{2}$ & $6.107^{2}$ & $6.107^{2}$ & $6.107^{2}$ \\
& \(p=\)3000 & \farbetter{$1.378^{2}$} & $1.810^{3}$ & $9.195^{2}$ & $3.200^{3}$ & $3.506^{2}$ & $8.730^{2}$ 
& \farbetter{$2.417^{1}$} & $3.432^{1}$ & $3.432^{1}$ & $3.432^{1}$ & $3.432^{1}$ & $3.432^{1}$ 
& \similar{$1.273^{3}$} & $1.082^{3}$ & $1.082^{3}$ & $1.082^{3}$ & $1.082^{3}$ & $1.082^{3}$ \\
& \(p=\)4000 & \farbetter{$2.131^{2}$} & $5.439^{3}$ & $1.088^{4}$ & $3.391^{3}$ & $5.844^{2}$ & $9.225^{3}$ 
& \farbetter{$2.883^{1}$} & $4.526^{1}$ & $4.526^{1}$ & $4.512^{1}$ & $4.554^{1}$ & $4.526^{1}$ 
& \farbetter{$1.760^{3}$} & $1.795^{3}$ & $1.795^{3}$ & $1.778^{3}$ & $1.814^{3}$ & $1.795^{3}$ \\
\hline
\multirow{4}{*}{\tabincell{c}{\(25\)}} 
& \(p=\)1000 & \similar{$9.749^{1}$} & $5.469^{1}$ & $3.902^{1}$ & $1.182^{2}$ & $2.031^{1}$ & $4.207^{1}$ 
& \farbetter{$5.476^{0}$} & $1.803^{1}$ & $1.803^{1}$ & $1.803^{1}$ & $1.803^{1}$ & $1.803^{1}$ 
& \farbetter{$1.513^{2}$} & $2.246^{2}$ & $2.246^{2}$ & $2.246^{2}$ & $2.246^{2}$ & $2.246^{2}$ \\
& \(p=\)2000 & \farbetter{$5.059^{1}$} & $2.774^{2}$ & $1.618^{2}$ & $4.594^{2}$ & $6.818^{1}$ & $1.559^{2}$ 
& \farbetter{$1.959^{1}$} & $2.541^{1}$ & $2.541^{1}$ & $2.541^{1}$ & $2.541^{1}$ & $2.541^{1}$ 
& \worse{$8.293^{2}$} & $4.896^{2}$ & $4.896^{2}$ & $4.896^{2}$ & $4.896^{2}$ & $4.896^{2}$ \\
& \(p=\)3000 & \farbetter{$1.031^{2}$} & $6.965^{2}$ & $4.540^{2}$ & $3.223^{4}$ & $9.475^{3}$ & $2.300^{4}$ 
& \farbetter{$3.132^{1}$} & $3.610^{1}$ & $3.610^{1}$ & $3.610^{1}$ & $3.610^{1}$ & $3.610^{1}$ 
& \worse{$1.659^{3}$} & $1.001^{3}$ & $1.001^{3}$ & $1.001^{3}$ & $1.001^{3}$ & $1.001^{3}$ \\
& \(p=\)4000 & \farbetter{$2.002^{2}$} & $7.551^{4}$ & $1.606^{4}$ & $2.168^{3}$ & $3.224^{2}$ & $7.248^{3}$ 
& \farbetter{$3.152^{1}$} & $4.276^{1}$ & $4.276^{1}$ & $4.273^{1}$ & $4.279^{1}$ & $4.276^{1}$ 
& \similar{$1.894^{3}$} & $1.475^{3}$ & $1.475^{3}$ & $1.472^{3}$ & $1.478^{3}$ & $1.475^{3}$ \\
\hline
\multirow{4}{*}{\tabincell{c}{\(40\)}} 
& \(p=\)1000 & \similar{$6.530^{1}$} & $1.542^{2}$ & $9.000^{1}$ & $2.263^{2}$ & $3.570^{1}$ & $8.461^{1}$ 
& \farbetter{$8.135^{0}$} & $2.085^{1}$ & $2.085^{1}$ & $2.085^{1}$ & $2.085^{1}$ & $2.085^{1}$ 
& \farbetter{$2.087^{2}$} & $2.176^{2}$ & $2.176^{2}$ & $2.176^{2}$ & $2.176^{2}$ & $2.176^{2}$ \\
& \(p=\)2000 & \farbetter{$4.518^{1}$} & $3.075^{4}$ & $1.531^{4}$ & $8.015^{2}$ & $1.164^{2}$ & $2.636^{2}$ 
& \farbetter{$2.149^{1}$} & $3.009^{1}$ & $3.009^{1}$ & $3.009^{1}$ & $3.009^{1}$ & $3.009^{1}$ 
& \worse{$8.180^{2}$} & $4.781^{2}$ & $4.781^{2}$ & $4.781^{2}$ & $4.781^{2}$ & $4.781^{2}$ \\
& \(p=\)3000 & \farbetter{$9.057^{1}$} & $1.387^{3}$ & $7.799^{2}$ & $2.261^{3}$ & $3.378^{2}$ & $8.672^{2}$ 
& \similar{$5.749^{1}$} & $3.979^{1}$ & $3.979^{1}$ & $3.979^{1}$ & $3.979^{1}$ & $3.979^{1}$ 
& \worse{$3.062^{3}$} & $7.840^{2}$ & $7.840^{2}$ & $7.840^{2}$ & $7.840^{2}$ & $7.840^{2}$ \\
& \(p=\)4000 & \farbetter{$2.496^{2}$} & $1.417^{4}$ & $3.208^{3}$ & $8.529^{3}$ & $6.988^{2}$ & $1.353^{4}$ 
& \similar{$8.680^{1}$} & $4.754^{1}$ & $4.754^{1}$ & $4.754^{1}$ & $4.754^{1}$ & $4.754^{1}$ 
& \worse{$5.411^{3}$} & $1.104^{3}$ & $1.104^{3}$ & $1.104^{3}$ & $1.104^{3}$ & $1.104^{3}$ \\
\hline 
\multirow{4}{*}{\tabincell{c}{\(50\)}} 
& \(p=\)1000 & \similar{$6.172^{1}$} & $9.923^{1}$ & $6.181^{1}$ & $1.823^{2}$ & $2.731^{1}$ & $6.458^{1}$ 
& \farbetter{$1.018^{1}$} & $2.493^{1}$ & $2.493^{1}$ & $2.493^{1}$ & $2.493^{1}$ & $2.493^{1}$ 
& \similar{$2.688^{2}$} & $2.278^{2}$ & $2.278^{2}$ & $2.278^{2}$ & $2.278^{2}$ & $2.278^{2}$ \\
& \(p=\)2000 & \farbetter{$4.625^{1}$} & $3.972^{2}$ & $2.225^{2}$ & $7.073^{2}$ & $1.049^{2}$ & $2.211^{2}$ 
& \farbetter{$1.426^{1}$} & $3.751^{1}$ & $3.751^{1}$ & $3.751^{1}$ & $3.751^{1}$ & $3.751^{1}$ 
& \farbetter{$3.403^{2}$} & $4.930^{2}$ & $4.930^{2}$ & $4.930^{2}$ & $4.930^{2}$ & $4.930^{2}$ \\
& \(p=\)3000 & \farbetter{$8.846^{1}$} & $1.138^{3}$ & $7.045^{2}$ & $2.066^{3}$ & $3.131^{2}$ & $8.167^{2}$ 
& \farbetter{$4.141^{1}$} & $4.414^{1}$ & $4.414^{1}$ & $4.414^{1}$ & $4.414^{1}$ & $4.414^{1}$ 
& \worse{$2.061^{3}$} & $7.819^{2}$ & $7.819^{2}$ & $7.819^{2}$ & $7.819^{2}$ & $7.819^{2}$ \\
& \(p=\)4000 & \farbetter{$5.232^{1}$} & $4.255^{3}$ & $2.890^{3}$ & $3.315^{3}$ & $1.212^{3}$ & $2.626^{4}$ 
& \farbetter{$4.669^{1}$} & $5.317^{1}$ & $5.317^{1}$ & $5.317^{1}$ & $5.317^{1}$ & $5.317^{1}$ 
& \similar{$2.667^{3}$} & $1.117^{3}$ & $1.117^{3}$ & $1.117^{3}$ & $1.118^{3}$ & $1.117^{3}$ \\
\hline
\end{tabular}
}
\end{table}

\end{landscape}

\begin{figure}[H]
\centering 
\begin{minipage}[t]{0.95\textwidth}
\centering
\includegraphics[scale=0.39]{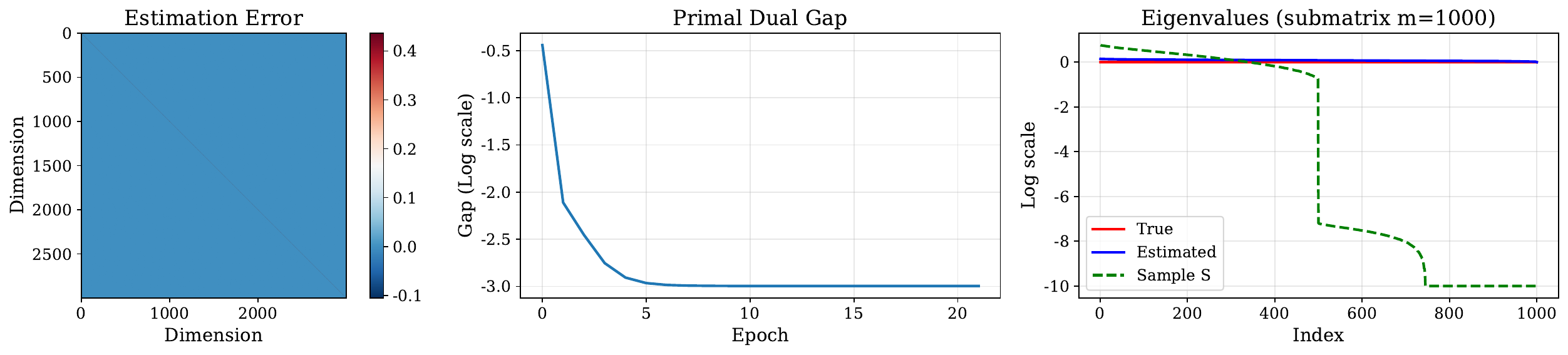}
\end{minipage}
	
\begin{minipage}[t]{0.95\textwidth}
\centering
\includegraphics[scale=0.39]{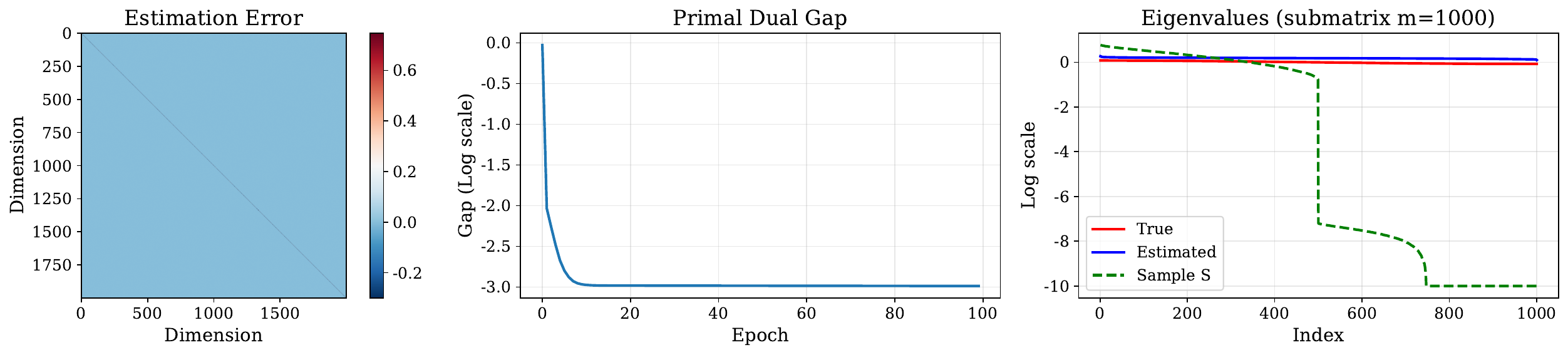}
\end{minipage}
\caption{Visualization of LBO for Toeplitz covariance matrix estimation. The three columns display (from left to right) the visualization of the error matrix, the trajectory of the primal dual gap versus epochs, and the leading 1000 eigenvalues. The first row corresponds to $\varrho=0.1$ and $p=3000$, while the second row corresponds to $\varrho=0.3$ and $p=2000$.}\label{LBO_toeplitzexample} 
\end{figure}

Table~\ref{Toeplitztable} shows that LBO provides a uniformly superior accuracy--efficiency trade-off on Toeplitz covariance estimation.
In terms of runtime, LBO is typically the fastest or among the fastest methods in moderate to large dimensions: for example, at $\varrho=0.1$ and $p=3000$, LBO finishes in $1.529^{1}$ seconds, while ADMM and LADMM require $1.517^{3}$ and $1.750^{2}$ seconds, respectively; at $\varrho=0.1$ and $p=4000$, LBO takes $2.312^{1}$ seconds compared with $4.029^{3}$ (ADMM) and $3.610^{2}$ (LADMM).
More importantly, LBO achieves substantially better solution quality in both metrics: at $\varrho=0.1$ and $p=2000$, LBO attains a Frobenius norm of $9.074^{-2}$ versus $\approx 9.2^{0}$ for TOSA/PFBS/FISTA (about two orders of magnitude smaller), and a nuclear norm of $2.391^{0}$ versus $\approx 3.47^{2}$ (over $10^{2}$ times smaller); similarly, at $\varrho=0.1$ and $p=3000$, LBO yields $1.879^{-1}$ in Frobenius norm compared with $\approx 1.27^{1}$ for TOSA/PFBS/FISTA, and $8.285^{0}$ in nuclear norm compared with $\approx 5.85^{2}$.

\begin{figure}[H]
\centering 
\begin{minipage}[t]{0.95\textwidth}
\centering
\includegraphics[scale=0.39]{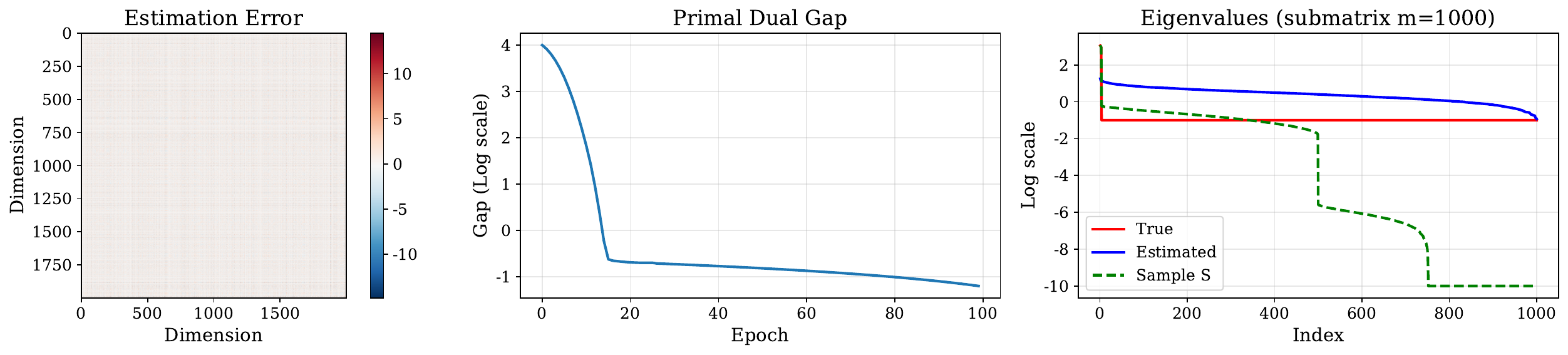}
\end{minipage}

\begin{minipage}[t]{0.95\textwidth}
\centering
\includegraphics[scale=0.39]{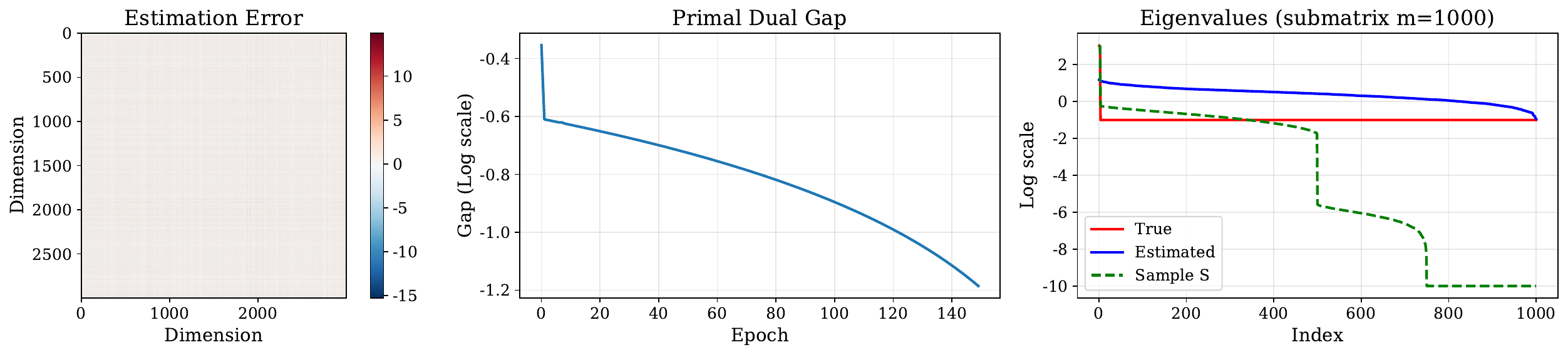}
\end{minipage}
\caption{Visualization of LBO for factor covariance matrix estimation. The three columns display (from left to right) the visualization of the error matrix, the trajectory of the primal--dual gap versus epochs, and the leading 1000 eigenvalues. The first row corresponds to $m=3$ and $p=2000$, while the second row corresponds to $m=3$ and $p=3000$.}\label{LBO_factorexample}
\end{figure}

Table~\ref{Factortable} demonstrates that LBO dominates all baselines on the factor covariance model across factor numbers $m\in\{3,5,7,9,10\}$ and dimensions $p\in\{1000,2000,3000,4000\}$.
The runtime advantage is striking in large-scale regimes: for instance, when $m=3$ and $p=4000$, LBO needs only $2.877^{1}$ seconds, whereas ADMM, LADMM, TOSA, PFBS, and FISTA take $1.040^{4}$, $1.437^{4}$, $2.114^{4}$, $5.231^{3}$, and $1.690^{5}$ seconds, respectively.
At the same time, LBO achieves markedly smaller errors: in the same setting $(m,p)=(3,4000)$, LBO reduces the Frobenius norm to $8.746^{1}$ compared with $5.919^{2}$ (ADMM/LADMM) and $\approx 5.93^{2}$ (TOSA/PFBS/FISTA), and reduces the nuclear norm to $3.018^{2}$ compared with $3.148^{3}$ (ADMM/LADMM) and $3.708^{3}$ (TOSA).
A similar gap persists for larger factor numbers, e.g., at $m=10$ and $p=3000$, LBO achieves $1.196^{2}$ (Frobenius) and $7.412^{2}$ (nuclear) versus $1.385^{3}$ and $7.687^{3}$ for ADMM/LADMM, while remaining orders of magnitude faster than the competing solvers.

\begin{figure}[H]
\centering 
\begin{minipage}[t]{0.95\textwidth}
\centering
\includegraphics[scale=0.39]{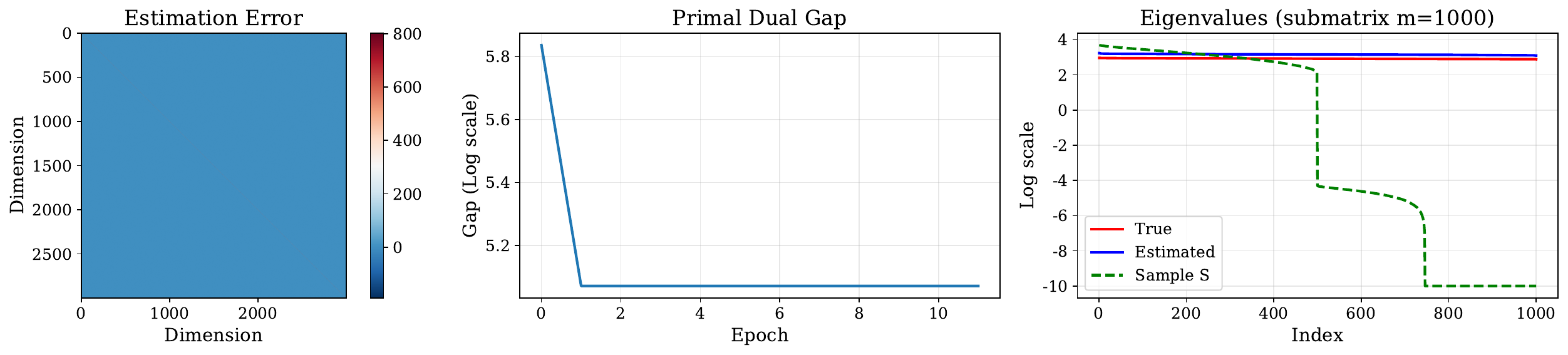}
\end{minipage}

\begin{minipage}[t]{0.95\textwidth}
\centering
\includegraphics[scale=0.39]{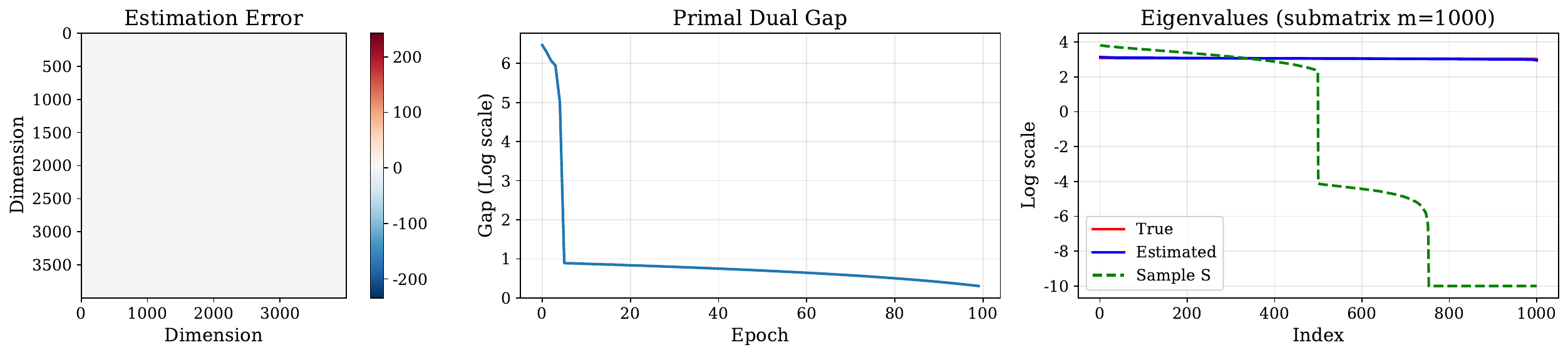}
\end{minipage}
\caption{Visualization of LBO for sparse covariance matrix estimation. The three columns display (from left to right) the visualization of the error matrix, the trajectory of the primal dual gap versus epochs, and the leading 1000 eigenvalues. The first row corresponds to $q=0.5$ and $p=3000$, while the second row corresponds to $q=0.5$ and $p=4000$.}\label{LBO_sparseexample}
\end{figure}

Table~\ref{Sparsetable} reports results under sparse covariance models with varying sparsity levels $q\in\{0.1,0.3,0.5,0.7,0.9\}$ and dimensions $p\in\{1000,2000,3000,4000\}$.
For moderately sparse regimes ($q\le 0.5$), LBO consistently dominates all competing methods in both efficiency and accuracy.
For example, at $q=0.1$ and $p=4000$, LBO terminates in $2.527^{1}$ seconds, whereas ADMM, LADMM, and TOSA require $9.748^{3}$, $5.041^{3}$, and $1.859^{4}$ seconds, respectively; meanwhile, LBO attains a Frobenius norm of $7.315^{3}$ versus $\approx 1.024^{4}$ for ADMM/LADMM/TOSA/PFBS/FISTA, and a nuclear norm of $6.878^{4}$ at $p=3000$ compared with $\approx 2.374^{5}$ for the competing methods.
A similar advantage persists at $q=0.3$ and $p=3000$, where LBO runs in $1.019^{1}$ seconds (versus $6.433^{3}$ for ADMM and $1.113^{4}$ for TOSA) and achieves markedly smaller Frobenius and nuclear norms ($2.635^{1}$ and $1.009^{3}$) than the baselines (on the order of $10^{3}$ and $10^{5}$, respectively).
As the sparsity level increases to dense regimes ($q\ge 0.7$), LBO remains substantially faster than all baselines, but its accuracy advantage becomes less uniform and can even deteriorate in the most challenging dense settings.
In particular, at $q=0.7$ and $p=4000$, LBO is still orders of magnitude faster ($1.185^{2}$ seconds versus $7.080^{3}$ for ADMM and $4.827^{3}$ for TOSA), yet its Frobenius and nuclear norms increase to $1.919^{4}$ and $1.212^{6}$, compared with $\approx 5.779^{3}$ and $\approx 2.172^{5}$ for the competing methods.
A similar phenomenon is observed at $q=0.9$ and $p=4000$, where LBO attains $8.990^{4}$ (Frobenius) and $5.683^{6}$ (nuclear), while the baselines remain around $3.2^{3}$ and $1.2^{5}$.

\begin{figure}[H]
\centering 
\begin{minipage}[t]{0.95\textwidth}
\centering
\includegraphics[scale=0.39]{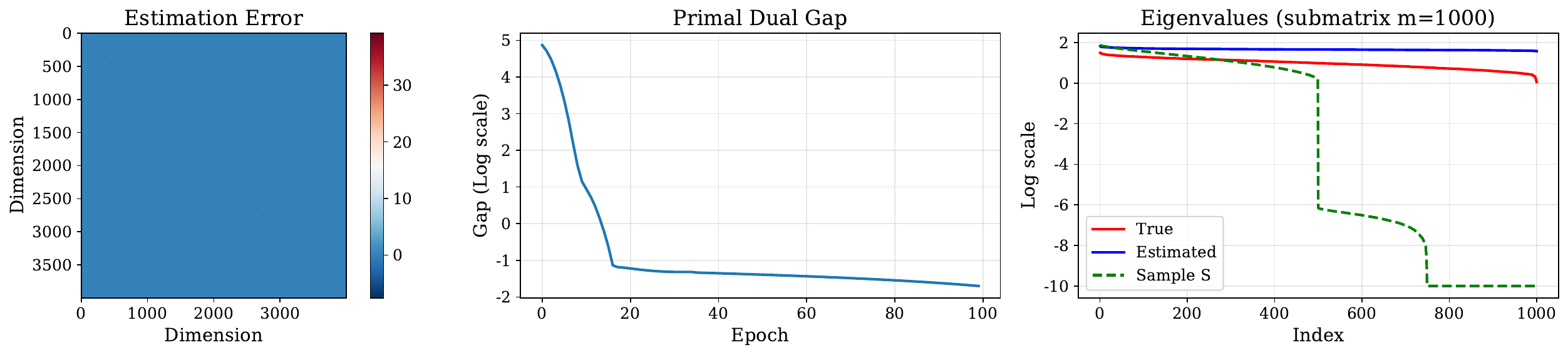}
\end{minipage}

\begin{minipage}[t]{0.95\textwidth}
\centering
\includegraphics[scale=0.39]{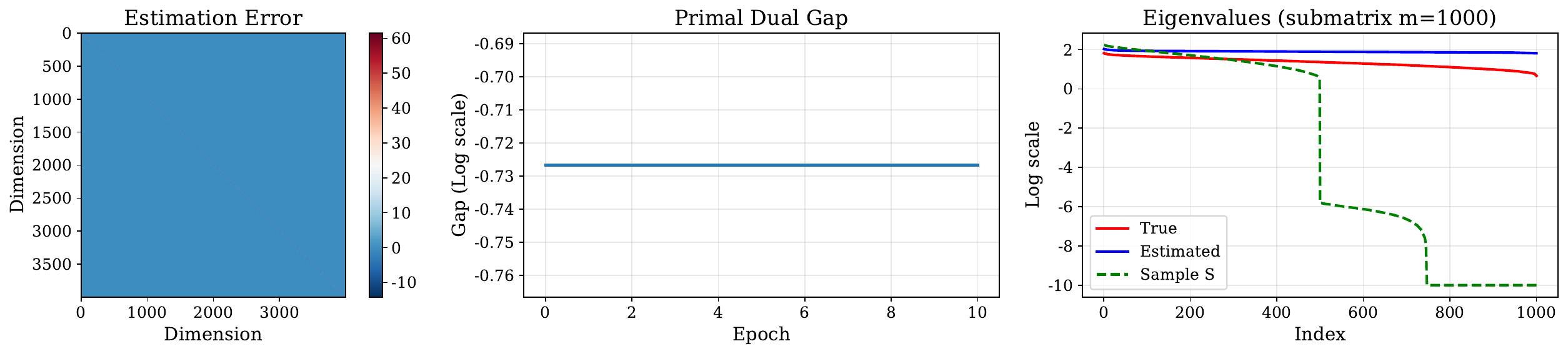}
\end{minipage}
\caption{Visualization of LBO for block covariance matrix estimation. The three columns display (from left to right) the visualization of the error matrix, the trajectory of the primal--dual gap versus epochs, and the leading 1000 eigenvalues. The first row corresponds to block size $20$ and $p=4000$, while the second row corresponds to block size $50$ and $p=4000$.}\label{LBO_blockexample}
\end{figure}

Table~\ref{Blocktable} reports results under block covariance models with varying block sizes and dimensions $p\in\{1000,2000,3000,4000\}$.
Overall, LBO remains highly competitive in runtime and achieves consistently smaller Frobenius norms across essentially all settings, indicating a robust advantage in estimation accuracy.
For instance, when the block size is $10$ and $p=4000$, LBO completes in $1.110^{2}$ seconds, whereas ADMM/LADMM/TOSA require $2.822^{3}$/$2.972^{3}$/$4.434^{3}$ seconds; meanwhile, LBO attains a Frobenius norm of $3.961^{1}$ compared with $\approx 6.1^{1}$ for the competing methods.
A similar pattern holds at block size $20$ and $p=4000$, where LBO runs in $2.131^{2}$ seconds (versus $5.439^{3}$ for ADMM and $1.088^{4}$ for LADMM) and yields a smaller Frobenius norm ($2.883^{1}$ versus $\approx 4.53^{1}$).

In terms of the nuclear norm, LBO performs well for small to moderate block sizes, often achieving the best or comparable results.
For example, at block size $10$ and $p=3000$, LBO reduces the nuclear norm to $1.356^{3}$ versus $\approx 1.77^{3}$ for ADMM/LADMM; at block size $10$ and $p=4000$, LBO attains $2.480^{3}$ compared with $\approx 2.96^{3}$.
However, as the block size increases, the nuclear-norm advantage becomes less uniform and can even reverse: for block size $40$ and $p=4000$, LBO attains a nuclear norm of $5.411^{3}$ whereas the competing methods are around $1.104^{3}$; similarly, for block size $50$ and $p=3000$, LBO yields $2.061^{3}$ compared with $\approx 7.819^{2}$.
Notably, even in these challenging regimes where the nuclear norm is not the smallest, LBO still preserves a substantial speed advantage (e.g., block size $40$, $p=4000$: $2.496^{2}$ seconds for LBO versus $1.417^{4}$ for ADMM) and remains highly competitive in Frobenius accuracy.

Across the four covariance structures, LBO consistently demonstrates a clear advantage in large-scale covariance estimation.
For Toeplitz and Factor models, LBO achieves the best overall performance: it is typically the fastest (or among the fastest) while simultaneously attaining substantially smaller Frobenius and nuclear norms, often by one to two orders of magnitude relative to ADMM/LADMM and the first-order baselines (TOSA/PFBS/FISTA).
For sparse models, LBO remains highly effective in sparse-to-moderately-sparse regimes ($q\le 0.5$), where it markedly improves both metrics with dramatically reduced runtime (e.g., at $q=0.1$, $p=4000$, LBO runs in $2.527^{1}$ seconds versus $9.748^{3}$ for ADMM and achieves a Frobenius norm of $7.315^{3}$ versus $\approx 1.024^{4}$), whereas in dense regimes ($q\ge 0.7$) its main advantage is speed and the accuracy benefit becomes less uniform.
For block models, LBO yields consistently smaller Frobenius norms and substantial runtime reductions across all block sizes (e.g., block size $10$, $p=4000$: $1.110^{2}$ seconds for LBO versus $2.822^{3}$ for ADMM), while its nuclear-norm advantage is strongest for small-to-moderate blocks and may diminish for very large blocks.
Overall, these results indicate that LBO offers superior scalability and robustness across diverse structural settings, delivering the most favorable balance between computational efficiency and estimation quality among all compared methods.

\subsection{High-dimensional precision matrix}
In this section, we apply the proposed LBO algorithm to high-dimensional precision matrix estimation. We first describe the precision matrix structures employed in the experiments, followed by an overview of the optimization algorithms used for comparison.

Our objective is to estimate the high-dimensional precision matrix $\Theta^\star = (\theta^\star_{ij})_{1\leq i,j\leq p}$ based on an observation matrix with \(n\) samples and \(p\) features sampled from the normal distribution $\mathcal{N}(0, (\Theta^\star)^{-1})$. To this end, we consider three representative structures for the precision matrix. Banded 1: A banded precision matrix where \(\theta^\star_{ii}=1\) and \(\theta^\star_{ij}=0.2\) for \(1 \leq |i-j| \leq 2\), and \(\theta^\star_{ij}=0\) otherwise. Banded 2: A banded precision matrix where \(\theta^\star_{ii}=1\) and \(\theta^\star_{ij}=0.2\) for \(1 \le |i-j| \le 4\), and \(\theta^\star_{ij}=0\) otherwise. Grid: Let $p$ be a perfect square, and set $\sqrt{p} \in \mathbb{N}$. The indices $i=1,\dots,p$ are arranged in row-major order into an $\sqrt{p} \times \sqrt{p}$ grid. The true precision matrix $\Theta^\star \in \mathbb{R}^{p \times p}$ is defined as
\[
\theta^\star_{ij} =
\begin{cases}
1, & i=j, \\
0.2, & \big(j=i+1 \ \text{and}\ \mathrm{mod}(i,\sqrt{p})\neq 0\big) \ \text{(horizontal neighbor)}, \\
0.2, & \big(j=i+\sqrt{p} \ \text{and}\ i\le p-\sqrt{p}\big) \ \text{(vertical neighbor)}, \\
0, & \text{otherwise},
\end{cases}
\]
where $\mathrm{mod}(i,\sqrt{p})\neq 0$ indicates that $i$ is not a multiple of $\sqrt{p}$ (i.e., not on the right boundary of each row). This construction requires $\sqrt{p}$ to be an integer. Since each node has at most four neighbors and each off-diagonal entry is $0.2$, the matrix is strictly row-diagonally dominant (with interior rows summing to at most $0.8$ off the diagonal), and thus $\Theta^\star$ is symmetric positive definite. This is the grid model from \citet{ravikumar2011high}. As an illustration, Figure \ref{precisionvis} visualizes the precision matrices for these three structures when $p=100$.
\begin{figure}[H]
\centering
\includegraphics[scale=0.38]{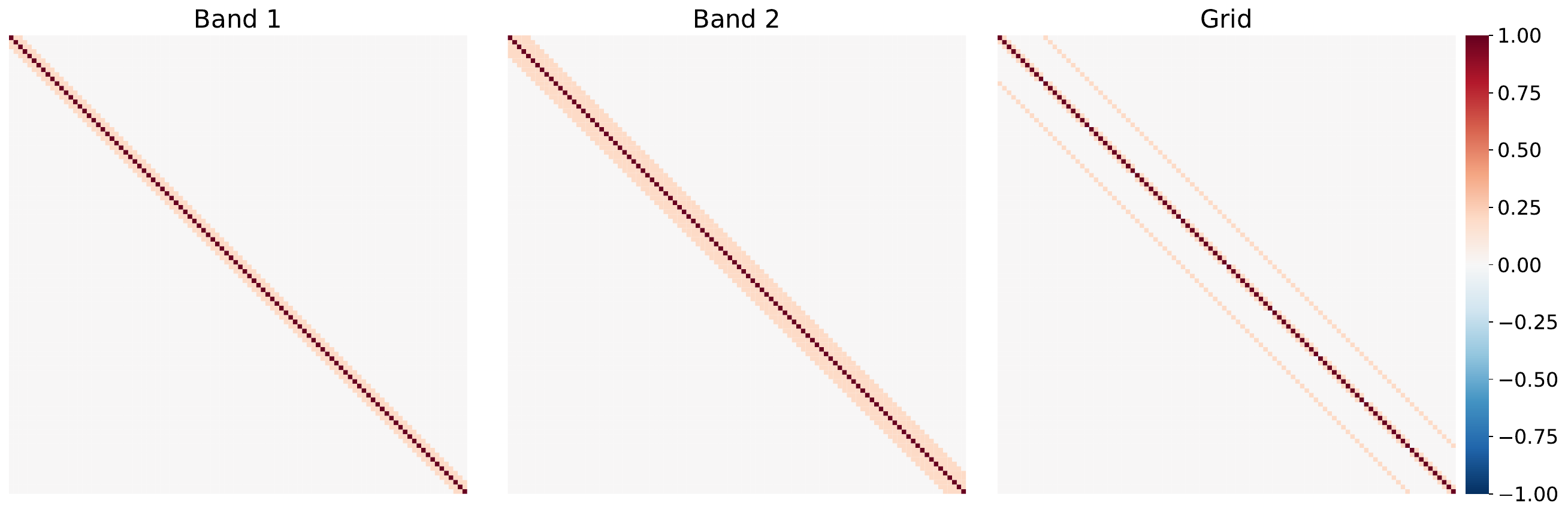}
\caption{Visualization of the precision matrices for the three structures at dimension 100.}
\label{precisionvis}
\end{figure}

Similar to Section \ref{secHighcovariance}, we fix the sample size at $n=500$ and vary the dimension $p \in \{1024, 2025, 3025, 3969\}$, where each $p$ is chosen to be a perfect square to meet the experimental setup requirements. We evaluate all methods using wall-clock convergence time, Frobenius norm, nuclear norm, and the duality gap. In addition to the ADMM and LADMM solvers considered above, we further include three representative baselines: a proximal gradient (ProxGrad) method \citep{grishchenko2021proximal} equipped with backtracking line \citep{pedregosa2020linearly} search to guarantee descent of the objective, which is simple to implement and enjoys standard convergence guarantees for general-purpose use on moderately sized problems; QUIC \citep{hsieh2013big, hsieh2014quic}, an approximate second-order approach that updates along (quasi-)Newton directions with line search, typically achieving substantially faster convergence in practice and being particularly effective when rapid convergence is needed for medium-scale instances; and the spectral projected gradient (SPG) method \citep{birgin2014spectral}, which combines Barzilai--Borwein step sizes \citep{tan2016barzilai} with a nonmonotone acceptance criterion, providing adaptive step-size selection that often leads to fast convergence and good scalability in large-scale settings. Table~\ref{precisiontable} reports the performance of LBO against ADMM, LADMM, ProxGrad, QUIC, and SPG on three representative precision-matrix structures (two banded cases and a grid case) across increasing dimensions. The visualization results of LBO on the four covariance structures are presented in Figures~\ref{LBO_band1example}--\ref{LBO_gridexample}.

\begin{landscape}
\begin{table}[H] 
\footnotesize
\centering
\caption{Experimental results of different methods on the three precision matrix structures}\label{precisiontable}
\renewcommand{\arraystretch}{1.2}
\setlength{\tabcolsep}{1.45mm}{
\begin{tabular}{c|c|cccccc|cccccc|cccccc}
\hline
\multirow{2}{*}{\tabincell{c}{Matrix \\ structures}} & \multirow{2}{*}{\tabincell{c}{Dimension}}& \multicolumn{6}{c|}{Time (s)} & \multicolumn{6}{c|}{Frobenius norm} & \multicolumn{6}{c}{Nuclear norm}\\
\cline{3-20}
& & \multicolumn{6}{c|}{LBO/ADMM/LADMM/ProxGrad/QUIC/SPG} & \multicolumn{6}{c|}{LBO/ADMM/LADMM/ProxGrad/QUIC/SPG} &\multicolumn{6}{c}{LBO/ADMM/LADMM/ProxGrad/QUIC/SPG}\\
\hline
\multirow{4}{*}{\tabincell{c}{Banded 1}} 
& \(p=\)1024 & \similar{$1.704^{1}$} & $1.879^{1}$ & $3.668^{1}$ & $1.209^{1}$ & $1.241^{1}$ & $8.694^{0}$ 
& \farbetter{$7.887^{0}$} & $9.874^{0}$ & $9.872^{0}$ & $9.875^{0}$ & $9.209^{0}$ & $9.875^{0}$ 
& \farbetter{$2.207^{2}$} & $2.535^{2}$ & $2.535^{2}$ & $2.535^{2}$ & $2.449^{2}$ & $2.535^{2}$ \\
& \(p=\)2025 & \farbetter{$4.268^{1}$} & $2.045^{3}$ & $4.592^{3}$ & $3.423^{2}$ & $4.649^{2}$ & $4.244^{2}$ 
& \farbetter{$1.129^{1}$} & $1.491^{1}$ & $1.490^{1}$ & $1.491^{1}$ & $1.581^{1}$ & $1.491^{1}$ 
& \farbetter{$4.444^{2}$} & $5.533^{2}$ & $5.532^{2}$ & $5.533^{2}$ & $5.663^{2}$ & $5.533^{2}$ \\
& \(p=\)3025 & \farbetter{$8.675^{1}$} & $1.958^{3}$ & $3.164^{3}$ & $9.287^{2}$ & $9.456^{2}$ & $8.344^{2}$ 
& \farbetter{$1.412^{1}$} & $1.936^{1}$ & $1.935^{1}$ & $1.936^{1}$ & $2.006^{1}$ & $1.936^{1}$ 
& \farbetter{$3.452^{2}$} & $8.902^{2}$ & $8.907^{2}$ & $8.902^{2}$ & $8.876^{2}$ & $8.902^{2}$ \\
& \(p=\)3969 & \farbetter{$1.491^{2}$} & $1.968^{3}$ & $4.395^{3}$ & $1.974^{3}$ & $1.261^{3}$ & $8.140^{2}$ 
& \farbetter{$1.632^{1}$} & $2.324^{1}$ & $2.330^{1}$ & $2.324^{1}$ & $2.371^{1}$ & $2.324^{1}$ 
& \farbetter{$4.340^{2}$} & $1.232^{3}$ & $1.238^{3}$ & $1.232^{3}$ & $1.209^{3}$ & $1.232^{3}$ \\
\hline
\multirow{4}{*}{\tabincell{c}{Banded 2}} 
& \(p=\)1024 & \farbetter{$1.783^{1}$} & $5.804^{2}$ & $2.281^{3}$ & $7.400^{2}$ & $7.344^{2}$ & $6.306^{2}$ 
& \farbetter{$1.266^{1}$} & $1.588^{1}$ & $1.588^{1}$ & $1.588^{1}$ & $1.636^{1}$ & $1.588^{1}$ 
& \similar{$3.408^{2}$} & $3.404^{2}$ & $3.404^{2}$ & $3.404^{2}$ & $3.412^{2}$ & $3.404^{2}$ \\
& \(p=\)2025 & \farbetter{$4.371^{1}$} & $1.452^{3}$ & $1.682^{3}$ & $3.213^{2}$ & $7.639^{2}$ & $4.266^{2}$ 
& \farbetter{$1.793^{1}$} & $2.299^{1}$ & $2.294^{1}$ & $2.299^{1}$ & $2.363^{1}$ & $2.299^{1}$ 
& \farbetter{$6.936^{2}$} & $7.343^{2}$ & $7.359^{2}$ & $7.343^{2}$ & $7.220^{2}$ & $7.343^{2}$ \\
& \(p=\)3025 & \farbetter{$8.431^{1}$} & $3.283^{3}$ & $2.984^{3}$ & $1.376^{3}$ & $1.217^{3}$ & $8.575^{2}$ 
& \farbetter{$2.182^{1}$} & $2.882^{1}$ & $2.877^{1}$ & $2.882^{1}$ & $2.897^{1}$ & $2.882^{1}$ 
& \farbetter{$5.114^{2}$} & $1.165^{3}$ & $1.186^{3}$ & $1.165^{3}$ & $1.101^{3}$ & $1.165^{3}$ \\
& \(p=\)3969 & \farbetter{$1.477^{2}$} & $2.609^{3}$ & $4.258^{3}$ & $1.623^{3}$ & $3.997^{2}$ & $8.492^{2}$ 
& \farbetter{$2.475^{1}$} & $3.373^{1}$ & $3.404^{1}$ & $3.373^{1}$ & $3.103^{1}$ & $4.079^{3}$ 
& \farbetter{$5.435^{2}$} & $1.599^{3}$ & $1.675^{3}$ & $1.599^{3}$ & $1.483^{3}$ & $2.252^{5}$ \\
\hline
\multirow{4}{*}{\tabincell{c}{Grid}} 
& \(p=\)1024 & \farbetter{$1.518^{1}$} & $1.457^{3}$ & $2.081^{3}$ & $2.303^{3}$ & $8.478^{1}$ & $4.549^{2}$ 
& \similar{$1.265^{1}$ } & $7.322^{0}$ & $7.283^{0}$ & $7.322^{0}$ & $6.836^{0}$ & $6.012^{1}$ 
& \worse{$3.291^{2}$} & $1.943^{2}$ & $1.938^{2}$ & $1.944^{2}$ & $1.832^{2}$ & $2.536^{2}$ \\
& \(p=\)2025 & \farbetter{$8.985^{1}$} & $1.874^{3}$ & $1.702^{3}$ & $1.490^{3}$ & $3.130^{2}$ & $8.899^{2}$ 
& \similar{$1.793^{1}$} & $1.172^{1}$ & $1.197^{1}$ & $1.172^{1}$ & $1.389^{1}$ & $1.172^{1}$ 
& \worse{$6.564^{2}$} & $4.420^{2}$ & $4.541^{2}$ & $4.420^{2}$ & $5.070^{2}$ & $4.420^{2}$ \\
& \(p=\)3025 & \farbetter{$8.658^{1}$} & $2.989^{3}$ & $2.434^{3}$ & $4.191^{3}$ & $1.090^{3}$ & $1.309^{3}$ 
& \similar{$2.192^{1}$} & $1.582^{1}$ & $1.714^{1}$ & $1.582^{1}$ & $1.741^{1}$ & $1.582^{1}$ 
& \farbetter{$4.836^{2}$} & $7.326^{2}$ & $7.960^{2}$ & $7.326^{2}$ & $7.804^{2}$ & $7.326^{2}$ \\
& \(p=\)3969 & \farbetter{$1.466^{2}$} & $2.121^{3}$ & $9.863^{2}$ & $1.495^{3}$ & $2.344^{2}$ & $3.247^{2}$ 
& \similar{$2.576^{1}$} & $1.942^{1}$ & $2.216^{1}$ & $1.942^{1}$ & $2.037^{1}$ & $1.942^{1}$ 
& \farbetter{$6.828^{2}$} & $1.031^{3}$ & $1.176^{3}$ & $1.031^{3}$ & $1.049^{3}$ & $1.031^{3}$ \\
\hline
\end{tabular}
}
\end{table}

\end{landscape}

\begin{figure}[H]
\centering 
\begin{minipage}[t]{0.95\textwidth}
\centering
\includegraphics[scale=0.39]{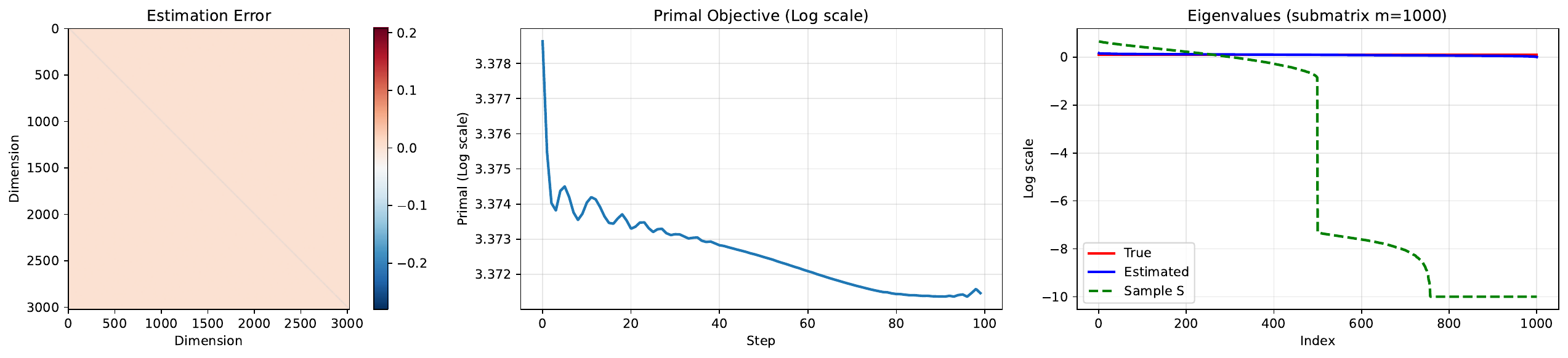}
\end{minipage}

\begin{minipage}[t]{0.95\textwidth}
\centering
\includegraphics[scale=0.39]{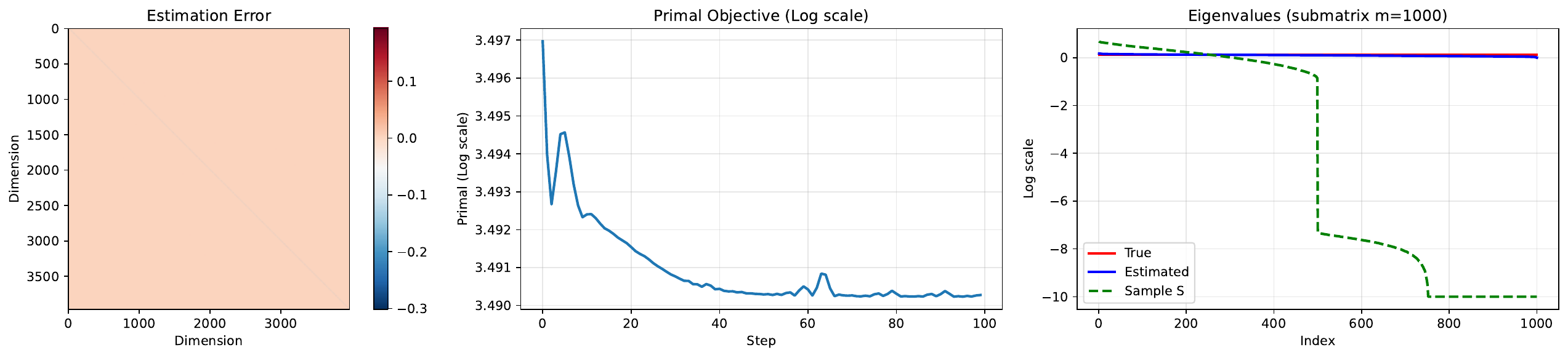}
\end{minipage}
\caption{Visualization of LBO for precision matrix estimation under the Banded 1 structure. The three columns display (from left to right) the visualization of the error matrix, the trajectory of the primal--dual gap versus epochs, and the leading 1000 eigenvalues. The first row corresponds to $p=3025$, while the second row corresponds to $p=3969$.}\label{LBO_band1example}

\end{figure}

For Banded 1, LBO is consistently the fastest method once $p$ becomes moderately large: for example, at $p=2025$ LBO takes $4.268^{1}$ seconds while ADMM and LADMM require $2.045^{3}$ and $4.592^{3}$ seconds, and at $p=3969$ LBO takes $1.491^{2}$ seconds compared with $1.968^{3}$ (ADMM) and $4.395^{3}$ (LADMM).
At the same time, LBO attains the smallest Frobenius and nuclear norms across all dimensions, e.g., at $p=3025$ the Frobenius norm is $1.412^{1}$ for LBO versus $1.936^{1}$ for the remaining methods, and the nuclear norm is $3.452^{2}$ for LBO versus $\approx 8.90^{2}$ for the baselines.

\begin{figure}[H]
\centering 
\begin{minipage}[t]{0.95\textwidth}
\centering
\includegraphics[scale=0.39]{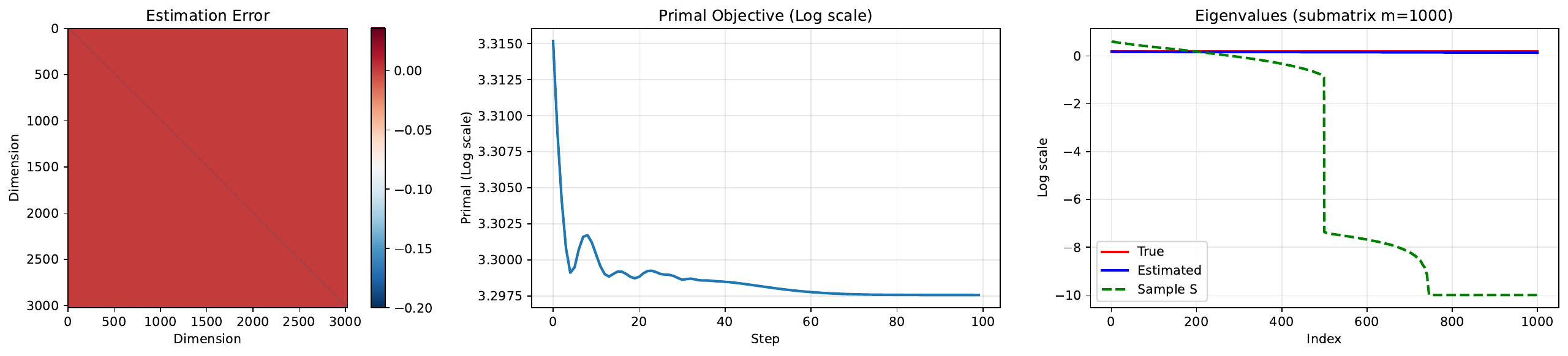}
\end{minipage}

\begin{minipage}[t]{0.95\textwidth}
\centering
\includegraphics[scale=0.39]{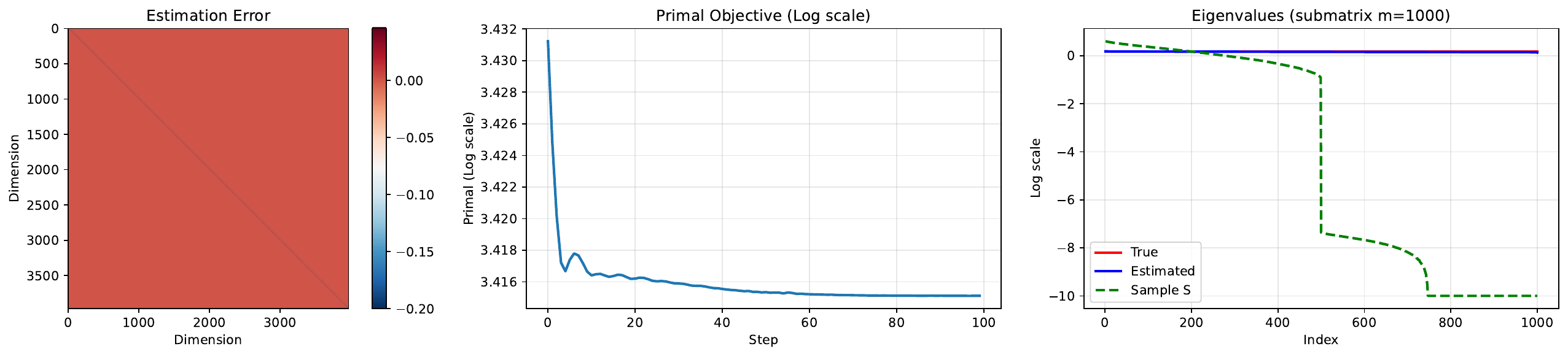}
\end{minipage}
\caption{Visualization of LBO for precision matrix estimation under the Banded 2 structure. The three columns display (from left to right) the visualization of the error matrix, the trajectory of the primal--dual gap versus epochs, and the leading 1000 eigenvalues. The first row corresponds to $p=3025$, while the second row corresponds to $p=3969$.}\label{LBO_band4example}

\end{figure}

For Banded~2, LBO again provides the best overall performance: it is orders of magnitude faster in every tested dimension (e.g., $p=1024$: $1.783^{1}$ for LBO versus $5.804^{2}$ for ADMM and $2.281^{3}$ for LADMM), while also achieving the smallest Frobenius norm throughout (e.g., $p=3025$: $2.182^{1}$ for LBO versus $\approx 2.882^{1}$ for the baselines).
In terms of nuclear norm, LBO is comparable at $p=1024$ ($3.408^{2}$ versus $3.404^{2}$), and becomes clearly favorable as $p$ increases (e.g., $p=3969$: $5.435^{2}$ for LBO versus $\approx 1.60^{3}$ for ADMM/ProxGrad and $\approx 1.48^{3}$ for QUIC).

\begin{figure}[H]
\centering 
\begin{minipage}[t]{0.95\textwidth}
\centering
\includegraphics[scale=0.39]{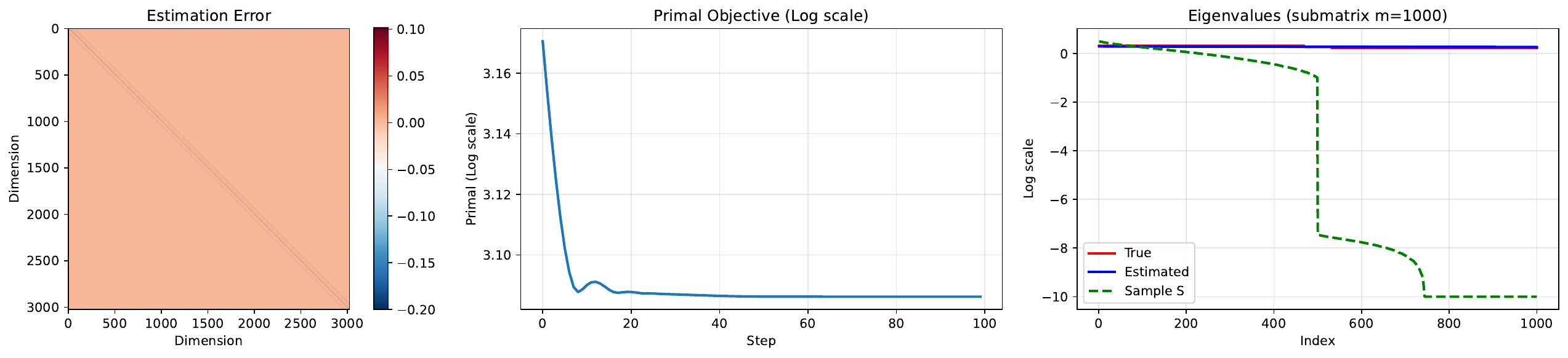}
\end{minipage}

\begin{minipage}[t]{0.95\textwidth}
\centering
\includegraphics[scale=0.39]{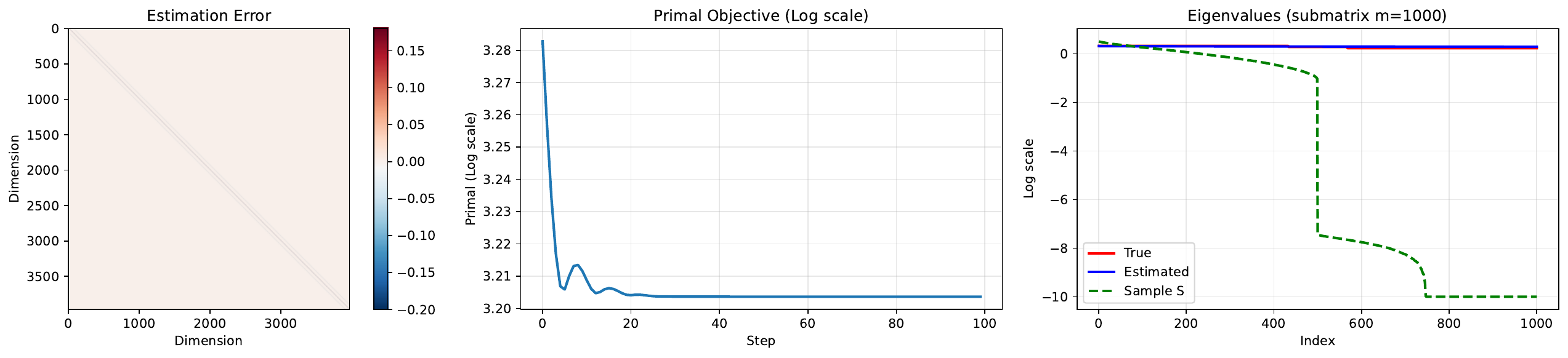}
\end{minipage}
\caption{Visualization of LBO for precision matrix estimation under the Grid structure. The three columns display (from left to right) the visualization of the error matrix, the trajectory of the primal--dual gap versus epochs, and the leading 1000 eigenvalues. The first row corresponds to $p=3025$, while the second row corresponds to $p=3969$.}\label{LBO_gridexample}
\end{figure}

For the grid precision matrices, LBO remains consistently faster than all competitors, particularly in larger dimensions (e.g., $p=1024$: $1.518^{1}$ for LBO versus $1.457^{3}$ for ADMM; $p=3969$: $1.466^{2}$ for LBO versus $2.121^{3}$ for ADMM).
Regarding accuracy, LBO achieves the best or near-best Frobenius norms across the tested dimensions (e.g., $p=1024$: $1.265^{1}$ for LBO compared with $7.322^{0}$ for ADMM), while its nuclear-norm behavior is structure-dependent: LBO is slightly worse at smaller sizes (e.g., $p=1024$: $3.291^{2}$ for LBO versus $1.943^{2}$ for ADMM), but becomes advantageous at larger dimensions (e.g., $p=3969$: $6.828^{2}$ for LBO versus $1.031^{3}$ for ADMM/ProxGrad and $1.049^{3}$ for QUIC).

Across the three precision-matrix structures (two banded cases and the grid case) in Table~\ref{precisiontable}, LBO consistently exhibits superior scalability and competitive estimation quality.
In the two banded settings, LBO is markedly faster as the dimension increases and simultaneously achieves the smallest Frobenius and nuclear norms in most cases. 
For the grid structure, LBO maintains a substantial runtime advantage throughout, while its estimation accuracy remains comparable in Frobenius norm and becomes more favorable in nuclear norm for larger dimensions.
Overall, these results suggest that LBO provides an effective and scalable solver for large-scale precision matrix estimation across diverse sparsity/structure patterns.

\section{Conclusion} \label{sec6}
In this paper, we investigated learning-assisted optimization for high-dimensional matrix estimation by integrating data-driven structures into a principled LADMM framework.
Starting from a baseline LADMM procedure, we introduced learnable parameters and implemented the resulting proximal operators via neural networks, which improved estimation accuracy and accelerated convergence in practice.
On the theoretical side, we established the convergence of LADMM and further proved the convergence, convergence rates, and monotonicity of the reparameterized scheme, showing that the reparameterized LADMM enjoyed a faster convergence rate.
Methodologically, the proposed reparameterization was general and applied to both covariance and precision matrix estimation.

Extensive experiments corroborated these theoretical findings.
Across multiple covariance structures (Toeplitz, factor, sparse, and block) and a range of dimensions, the proposed LBO approach consistently achieved a favorable accuracy--efficiency trade-off relative to classical solvers.
In particular, LBO delivered markedly smaller estimation errors (in Frobenius and, in many regimes, nuclear norms) while substantially reducing runtime, especially in moderate-to-large dimensions where several baselines became computationally expensive.
For precision matrix estimation under banded and grid structures, LBO similarly exhibited strong scalability and competitive accuracy when compared with ADMM-type methods and specialized baselines, supporting the broad applicability of our framework.

In the future, we will extend our framework to joint estimation of multiple matrices (e.g., multi-task or multi-group settings) to better exploit shared structures across related problems.
In addition, we will generalize our methodology from vector-valued observations to matrix-valued and tensor-valued data, developing learning-assisted estimators for structured matrix estimation and higher-order tensor estimation with theoretical guarantees and scalable implementations.


\newpage 
\begin{appendix}
\section{Proofs for Results}\label{appendix}

\subsection{Proof of Theorem \ref{convergence_of_LADMM}}

To proof Theorem \ref{convergence_of_LADMM}, we first show the following Lemma:
\begin{lemma}\label{lem_LADMM}
Denote $(X^{*},Y^{*},V^{*})$ is any KKT point of problem \eqref{unified_opt}, $\phi_1,\phi_2>1$, then we have following assertions\\[1ex]
(1). $\{(\phi_1-1)\|X^{(k)}-X^{*}\|^2+\|Y^{(k)}-Y^{*}\|^2+\rho^{-2}\|V^{(k)}-V^{*}\|^2\}$ is non-increasing. \\[1ex]
(2). $\|X^{(k+1)}-X^{(k)}\|\rightarrow{0}, \|Y^{(k+1)}-Y^{(k)}\|\rightarrow{0}, \|V^{(k+1)}-V^{(k)}\|\rightarrow{0}$.

\end{lemma}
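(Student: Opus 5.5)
The plan is to derive a one-step Fejér-type inequality from the optimality conditions of the two linearized subproblems, combined with monotonicity of $\partial F$ and $\partial G$. Assertion (1) then follows by dropping nonpositive terms, and assertion (2) by telescoping. Concretely, the first-order conditions of the $X$- and $Y$-updates in \eqref{unified_LADMM} read
\[
0\in \partial F(X^{(k+1)})+\rho\phi_1(X^{(k+1)}-X^{(k)})+\rho(X^{(k)}-Y^{(k)})+V^{(k)},
\]
\[
0\in \partial G(Y^{(k+1)})+\rho\phi_2(Y^{(k+1)}-Y^{(k)})+\rho(Y^{(k)}-X^{(k+1)})-V^{(k)}.
\]
A KKT point satisfies $-V^*\in\partial F(X^*)$, $V^*\in\partial G(Y^*)$ and $X^*=Y^*$.

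First I pair each update condition with the corresponding KKT inclusion through monotonicity of the subdifferential. This gives $\langle -V^{(k)}-\rho\phi_1(X^{(k+1)}-X^{(k)})-\rho(X^{(k)}-Y^{(k)})+V^*,\,X^{(k+1)}-X^*\rangle\ge0$, and an analogous inequality for $Y$. Next I eliminate $V^{(k)}$ using the dual update $V^{(k)}=V^{(k+1)}-\rho(X^{(k+1)}-Y^{(k+1)})$. I rewrite $X^{(k)}-Y^{(k)}$ as $(X^{(k+1)}-Y^{(k+1)})-(X^{(k+1)}-X^{(k)})+(Y^{(k+1)}-Y^{(k)})$, and use $X^*=Y^*$ so that $X^{(k+1)}-Y^{(k+1)}=(X^{(k+1)}-X^*)-(Y^{(k+1)}-Y^*)=\rho^{-1}(V^{(k+1)}-V^{(k)})$.

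Adding the two inequalities and dividing by $\rho$, the dual terms combine into $\rho^{-2}\langle V^{(k+1)}-V^*,V^{(k)}-V^{(k+1)}\rangle$. The primal terms become inner products of the form $\langle X^{(k)}-X^{(k+1)},X^{(k+1)}-X^*\rangle$ weighted by $\phi_1-1$, and a corresponding $Y$ term. There is also a cross term coupling $Y^{(k+1)}-Y^{(k)}$ with $\rho^{-1}(V^{(k+1)}-V^{(k)})$. Each of these I convert with the polarization identity $2\langle a-b,b-c\rangle=\|a-c\|^2-\|a-b\|^2-\|b-c\|^2$. After collecting terms I aim to arrive at the inequality displayed after Theorem~\ref{convergence_of_LADMM}:
\[
\mathcal E_{k+1}\le \mathcal E_k-(\phi_1-1)\|X^{(k+1)}-X^{(k)}\|^2-(\phi_2-1)\|Y^{(k+1)}-Y^{(k)}\|^2-\|\rho^{-1}(V^{(k+1)}-V^{(k)})+Y^{(k+1)}-Y^{(k)}\|^2 .
\]

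The main obstacle is this bookkeeping. The linearization leaves terms of the form $\langle Y^{(k+1)}-Y^{(k)},\,X^{(k+1)}-X^*\rangle$ that do not immediately telescope. My first attempt will be to rewrite these via $X^{(k+1)}-X^*=(Y^{(k+1)}-Y^*)+\rho^{-1}(V^{(k+1)}-V^{(k)})$, which should turn them into the completed square $\|\rho^{-1}(V^{(k+1)}-V^{(k)})+Y^{(k+1)}-Y^{(k)}\|^2$. The leftover $(\phi_2-1)$-weighted terms should then be absorbable using $\phi_2>1$. If the weights produced by this computation do not exactly match those in $\mathcal E_k$ (for instance, if the $Y$-component acquires the weight $\phi_2-1$ or $\phi_2$ instead of $1$), I will redefine the potential with whatever weights the computation produces. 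Any positive weights suffice for both assertions, because $\phi_1,\phi_2>1$ keeps every coefficient positive.

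Given the one-step inequality, assertion (1) is immediate since all subtracted terms are nonnegative. For (2), I sum the inequality from $k=0$ to $N$. This bounds $\sum_k\big[(\phi_1-1)\|X^{(k+1)}-X^{(k)}\|^2+(\phi_2-1)\|Y^{(k+1)}-Y^{(k)}\|^2+\|\rho^{-1}(V^{(k+1)}-V^{(k)})+Y^{(k+1)}-Y^{(k)}\|^2\big]$ by $\mathcal E_0<\infty$, so each summand tends to $0$. Hence $\|X^{(k+1)}-X^{(k)}\|\to0$ and $\|Y^{(k+1)}-Y^{(k)}\|\to0$. Finally, the triangle inequality gives $\rho^{-1}\|V^{(k+1)}-V^{(k)}\|\le\|\rho^{-1}(V^{(k+1)}-V^{(k)})+Y^{(k+1)}-Y^{(k)}\|+\|Y^{(k+1)}-Y^{(k)}\|\to0$.
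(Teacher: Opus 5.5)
Your route is the same as the paper's: subgradient inclusions of the two prox steps, monotonicity against $-V^*\in\partial F(X^*)$ and $V^*\in\partial G(Y^*)$, then polarization. Your derivation of (2) from a summable decrement is fine. The problem is the target. The inequality you copied from the display after Theorem~\ref{convergence_of_LADMM} is false, and so is assertion (1) as stated; it is also the identity the paper's own proof opens with.

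Write $x=X^{(k+1)}-X^*$, $x_0=X^{(k)}-X^*$, and similarly $y,y_0,v,v_0$. The $Y$-monotonicity term contains $-\phi_2\langle y,\,y-y_0\rangle$, and polarizing it puts weight $\phi_2$, not $1$, on $\|Y-Y^*\|^2$. The cross term $-\langle x-y,\,y-y_0\rangle=-\rho^{-1}\langle v-v_0,\,y-y_0\rangle$ only feeds the completed square. The exact identity is
\[
\mathcal E'_{k+1}=\mathcal E'_k-(\phi_1-1)\|x-x_0\|^2-(\phi_2-1)\|y-y_0\|^2-\big\|\rho^{-1}(v-v_0)+y-y_0\big\|^2-2\rho^{-1}A_k,
\qquad
\mathcal E'_k:=(\phi_1-1)\|x_0\|^2+\phi_2\|y_0\|^2+\rho^{-2}\|v_0\|^2,
\]
where $A_k\ge 0$ is the sum of your two monotonicity inner products. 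The paper's version is off by $(\phi_2-1)(\|y_0\|^2-\|y\|^2)$.

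No regrouping of terms can fix this. Your remark that ``any positive weights suffice for both assertions'' is wrong for (1), which is a claim about one specific potential. Here is a concrete failure.
\begin{itemize}
\item Take the covariance problem with $p=1$ (so $G\equiv 0$), $S=1>\epsilon$, $\rho=1$, $\phi_1=2$, $\phi_2=10$. The unique KKT point is $(1,1,0)$.
\item Start from $(X^{(0)},Y^{(0)},V^{(0)})=(1,2,0)$. One step of \eqref{unified_LADMM} gives $(4/3,\,29/15,\,-3/5)$.
\item The potential in (1) increases from $1$ to $302/225$, while $\mathcal E'$ decreases from $10$ to $2066/225$.
\item Diagonal $S$ with diagonal initialization reproduces the same failure for any $p$, since the iterates stay diagonal and decouple.
\end{itemize}

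So your fallback of reweighting the potential is exactly what must be done. But it proves a corrected assertion (1), with $\phi_2\|Y^{(k)}-Y^*\|^2$ in place of $\|Y^{(k)}-Y^*\|^2$, not the stated one. Assertion (2) and the proof of Theorem~\ref{convergence_of_LADMM} only need some positively weighted Fej\'er potential, so they go through unchanged.
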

\begin{proof}
We notice that
\begin{equation}\label{eq:lem1_Lyapunov_decrease}
\begin{aligned}
&(\phi_1-1)\|X^{(k+1)}-X^{*}\|^2+\|Y^{(k+1)}-Y^{*}\|^2+\rho^{-2}\|V^{(k+1)}-V^{*}\|^2 \\
= & (\phi_1-1)\|X^{(k)}-X^{*}\|^2+\|Y^{(k)}-Y^{*}\|^2+\rho^{-2}\|V^{(k)}-V^{*}\|^2- (\phi_1-1)\|X^{(k+1)}-X^{(k)}\|^2 \\
&-\left(\rho^{-2}\|V^{(k+1)}-V^{(k)}\|^2+\phi_2\|Y^{(k+1)}-Y^{(k)}\|^2+2\rho^{-1}\left\langle V^{(k+1)}-V^{(k)}, Y^{(k+1)}-Y^{(k)}\right\rangle\right) \\
&- 2\rho^{-1}\left\langle X^{(k+1)}-X^{*}, -\rho(X^{(k)}-Y^{(k)})-V^{(k)}+\rho\phi_1(X^{(k)}-X^{(k+1)})+V^*\right\rangle \\
&- 2\rho^{-1}\left\langle Y^{(k+1)}-Y^{*}, \rho(X^{(k+1)}-Y^{(k)})+V^{(k)}+\rho\phi_2(Y^{(k)}-Y^{(k+1)})-V^*\right\rangle.
\end{aligned}
\end{equation}

Since the optimal conditions of proximal operators in \eqref{unified_LADMM} give
\begin{align*}
&-\rho(X^{(k)}-Y^{(k)})-V^{(k)}+\rho\phi_1(X^{(k)}-X^{(k+1)})\in\partial F(X^{(k+1)}), \quad -V^*\in\partial F(X^*), \\
&\rho(X^{(k+1)}-Y^{(k)})+V^{(k)}+\rho\phi_2(Y^{(k)}-Y^{(k+1)})\in\partial G(Y^{(k+1)}), \quad V^*\in\partial G(Y^*),
\end{align*}
thus by the monotonicity of subgradients, we have 
\begin{align*}
\left\langle X^{(k+1)}-X^{*}, -\rho(X^{(k)}-Y^{(k)})-V^{(k)}+\rho\phi_1(X^{(k)}-X^{(k+1)})+V^*\right\rangle\ge 0, \\
\left\langle Y^{(k+1)}-Y^{*}, \rho(X^{(k+1)}-Y^{(k)})+V^{(k)}+\rho\phi_2(Y^{(k)}-Y^{(k+1)})-V^*\right\rangle\ge0.
\end{align*}
Moreover, $\phi_2>1$ implies that
\begin{align*}
&\rho^{-2}\|V^{(k+1)}-V^{(k)}\|^2+\phi_2\|Y^{(k+1)}-Y^{(k)}\|^2+2\rho^{-1}\left\langle V^{(k+1)}-V^{(k)}, Y^{(k+1)}-Y^{(k)}\right\rangle \\
=& \|\rho^{-1}(V^{(k+1)}-V^{(k)})+Y^{(k+1)}-Y^{(k)}\|^2 + (\phi_2-1)\|Y^{(k+1)}-Y^{(k)}\|^2\ge0,
\end{align*}
hence the first assertion holds. As $\{(\phi_1-1)\|X^{(k)}-X^{*}\|^2+\|Y^{(k)}-Y^{*}\|^2+\rho^{-2}\|V^{(k)}-V^{*}\|^2\}$ is non-increasing and non-negative, it has a finite limit, which implies that the nonnegative sum on the right-hand side of \eqref{eq:lem1_Lyapunov_decrease} that constitutes the one-step decrease converges to zero as $k\to\infty$. 
\begin{align*}
&(\phi_1-1)\|X^{(k+1)}-X^{(k)}\|^2\rightarrow{0}, \quad (\phi_2-1)\|Y^{(k+1)}-Y^{(k)}\|^2\rightarrow{0}, \\
&\|\rho^{-1}(V^{(k+1)}-V^{(k)})+Y^{(k+1)}-Y^{(k)}\|^2\rightarrow{0}, 
\end{align*}
thus the second assertion holds.

\end{proof}

\begin{proof}[Proof of Theorem \ref{convergence_of_LADMM}]
By Lemma \ref{lem_LADMM} (1), the sequence $\{(X^{(k)},Y^{(k)},V^{(k)})\}$ is bounded, thus there exists at least one accumulation point, denoted as $\{(X^{(\infty)},Y^{(\infty)},V^{(\infty)})\}$. Let the subsequence $\{(X^{(k_j)},Y^{(k_j)},V^{(k_j)})\}$ satisfy $(X^{(k_j)},Y^{(k_j)},V^{(k_j)})\rightarrow{(X^{(\infty)},Y^{(\infty)},V^{(\infty)})}$. First, we have \\$\lim_{k\rightarrow{\infty}} \left(X^{(k+1)} - Y^{(k+1)}\right) = \frac{1}{\rho}\lim_{k\rightarrow{\infty}}(V^{(k+1)} - V^{(k)}) = 0$, thus $X^{(\infty)}=Y^{(\infty)}$, i.e., $(X^{(\infty)},Y^{(\infty)},V^{(\infty)})$ is feasible. 

Second, by the definition of subgradient, we obtain
\begin{equation*}
F(X)\ge F(X^{(k_j)}) +
\left\langle X-X^{(k_j)}, -\rho(X^{(k_j-1)}-Y^{(k_j-1)})-V^{(k_j-1)}+\rho\phi_1(X^{(k_j-1)}-X^{(k_j)})\right\rangle.
\end{equation*}
Notice that $\lim_{j\rightarrow{\infty}}(X^{k_j-1},Y^{k_j-1},V^{k_j-1})=(X^{(\infty)},Y^{(\infty)},V^{(\infty)})$ holds, thus 
\begin{equation*}
F(X)\ge F(X^{(\infty)}) +
\left\langle X-X^{(\infty)}, -V^{(\infty)}\right\rangle,
\end{equation*}
which demonstrate that $-V^{(\infty)}\in\partial F(X^{(\infty)})$. Analogously, $V^{(\infty)}\in\partial G(X^{(\infty)})$, and thus
\\ $(X^{(\infty)},Y^{(\infty)},V^{(\infty)})$ is a KKT point of the problem \eqref{unified_opt}.
Thereby, $\{(\phi_1-1)\|X^{(k)}-X^{(\infty)}\|^2+\|Y^{(k)}-Y^{(\infty)}\|+\rho^{-2}\|V^{(k)}-V^{(\infty)}\|\}$ is non-increasing and has a limit, which must be zero since it has a subsequence whose limit is zero. Consequently, we have shown that $(X^{(k)},Y^{(k)},V^{(k)})\rightarrow{(X^{(\infty)},Y^{(\infty)},V^{(\infty)})}$.

\end{proof}

\subsection{Proof of Theorem \ref{convergence_of_neural_LADMM}}
\begin{lemma}
For any KKT point $\omega^*$, there exists proper $(\alpha_k,\beta_k)\in\mathcal{S}$ such that the sequence $\{\omega_k\}$ generated by \eqref{unified_neural_LADMM2} satisfy
\begin{equation}
\langle \omega_{k+1}-\omega^*, H_k(\omega_k-\omega_{k+1})\rangle\ge0, \quad\forall k\ge0.
\end{equation}
Thus we have
\begin{equation}\label{contraction_ineq}
\|\omega_{k}-\omega^*\|_{H_k}^2\ge\|\omega_{k+1}-\omega^*\|_{H_k}^2+\|\omega_{k}-\omega_{k+1}\|_{H_k}^2.
\end{equation}
\end{lemma}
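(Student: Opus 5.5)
The plan is to derive the variational inequality directly from the first-order optimality conditions of the two weighted proximal steps in \eqref{unified_neural_LADMM2}. I will then combine these conditions with the KKT conditions $-V^*\in\partial F(X^*)$, $V^*\in\partial G(Y^*)$, $X^*=Y^*$ through monotonicity of $\partial F$ and $\partial G$. The inequality \eqref{contraction_ineq} then follows from a polarization identity in the $H_k$-inner product. All products and divisions by $\alpha_k,\beta_k$ are entrywise; I drop the index $k$ on the parameters when it is harmless.

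\textbf{Step 1 (optimality conditions).} The $Y$-step gives $0\in\partial G(Y^{(k+1)})+\tfrac{1}{\beta}\circ(Y^{(k+1)}-X^{(k+1)})-V^{(k)}$. By the dual update this reads $V^{(k+1)}\in\partial G(Y^{(k+1)})$.

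The $X$-step gives
\[
0\in\partial F(X^{(k+1)})+\tfrac{1}{\alpha}\circ(X^{(k+1)}-X^{(k)})+V^{(k)}+\tfrac{1}{\beta}\circ(X^{(k)}-Y^{(k)}).
\]
I substitute $V^{(k)}=V^{(k+1)}-\tfrac1\beta\circ(X^{(k+1)}-Y^{(k+1)})$ and regroup. This yields
\[
-V^{(k+1)}+\big(\tfrac1\alpha-\tfrac1\beta\big)\circ(X^{(k)}-X^{(k+1)})+\tfrac1\beta\circ(Y^{(k)}-Y^{(k+1)})\in\partial F(X^{(k+1)}).
\]

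\textbf{Step 2 (monotonicity).} I pair each condition with the corresponding KKT inclusion and add the two resulting monotonicity inequalities. In the dual terms I use $X^*=Y^*$ and $X^{(k+1)}-Y^{(k+1)}=\beta\circ(V^{(k+1)}-V^{(k)})$. Together these convert $-\langle X^{(k+1)}-Y^{(k+1)},V^{(k+1)}-V^*\rangle$ into $\langle V^{(k+1)}-V^*,\beta\circ(V^{(k)}-V^{(k+1)})\rangle$, which is exactly the $V$-block of $H_k$.

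The $X$-block with weight $\tfrac1\alpha-\tfrac1\beta$ appears directly. The only mismatch is that the term $\tfrac1\beta\circ(Y^{(k)}-Y^{(k+1)})$ is paired with $X^{(k+1)}-X^*$ rather than with $Y^{(k+1)}-Y^*$. The discrepancy is
\[
\langle X^{(k+1)}-Y^{(k+1)},\tfrac1\beta\circ(Y^{(k)}-Y^{(k+1)})\rangle=-\langle V^{(k+1)}-V^{(k)},Y^{(k+1)}-Y^{(k)}\rangle .
\]

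\textbf{Step 3 (the main obstacle).} The delicate point is controlling this cross term, and I would handle it with the standard ADMM trick. Step 1 applied at iteration $k-1$ gives $V^{(k)}\in\partial G(Y^{(k)})$. Monotonicity of $\partial G$ then gives $\langle V^{(k+1)}-V^{(k)},Y^{(k+1)}-Y^{(k)}\rangle\ge 0$, so replacing $X^{(k+1)}-X^*$ by $Y^{(k+1)}-Y^*$ in that term only increases the left-hand side. Combining,
\[
\langle \omega_{k+1}-\omega^*,H_k(\omega_k-\omega_{k+1})\rangle\;\ge\;\langle V^{(k+1)}-V^{(k)},Y^{(k+1)}-Y^{(k)}\rangle\;\ge0 .
\]

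For $k=0$ the inclusion $V^{(0)}\in\partial G(Y^{(0)})$ must be imposed. This is where the "proper choice" enters: I would choose the initialization or first step accordingly, for instance by taking $Y^{(0)}$ as a prox output, or by starting the claim at $k\ge1$. I would also choose $(\alpha_k,\beta_k)\in\mathcal S$ so that $0<\alpha_k<\beta_k$ entrywise. This makes $H_k$ a self-adjoint, entrywise positive diagonal operator, which Step 4 needs.

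\textbf{Step 4 (the contraction inequality).} Since $H_k$ is self-adjoint and positive, the identity $2\langle a-b,H_k(c-a)\rangle=\|c-b\|_{H_k}^2-\|a-b\|_{H_k}^2-\|c-a\|_{H_k}^2$ holds. Applying it with $a=\omega_{k+1}$, $b=\omega^*$, $c=\omega_k$ and using nonnegativity from Step 3 gives \eqref{contraction_ineq} immediately.
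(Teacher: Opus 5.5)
Your proposal is correct and follows essentially the same route as the paper: the optimality conditions of the two weighted prox steps are combined with the KKT inclusions by convexity, the cross term $\langle V^{(k+1)}-V^{(k)},Y^{(k+1)}-Y^{(k)}\rangle\ge 0$ comes from monotonicity of $\partial G$, and the polarization identity in the $H_k$ inner product finishes the argument; the only difference is bookkeeping (you apply monotonicity of $\partial F,\partial G$ directly, while the paper sums function-value subgradient inequalities and cancels terms through the skew map $\phi$). Your observation that the cross-term bound needs $V^{(k)}\in\partial G(Y^{(k)})$, which holds automatically only for $k\ge 1$, is correct: the $k=0$ case needs a suitable initialization or must be excluded, a point the paper's proof passes over silently.
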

\begin{proof}
The optimality conditions in \eqref{unified_neural_LADMM2} give
\begin{align*}
&\frac{1}{\alpha_k}\circ(X^{(k)}-X^{(k+1)})-\frac{1}{\beta_k}\circ(X^{(k)}-Y^{(k)})-V^{(k)}\in\partial F(X^{(k+1)}), \\
&V^{(k)}-\frac{1}{\beta_k}\circ(Y^{(k+1)}-X^{(k+1)})\in\partial G(Y^{(k+1)}),
\end{align*}
hence by the definition of subgradients,
\begin{equation}\label{lem2_1}
\begin{aligned}
&F(X)-F(X^{(k+1)})+\left\langle X-X^{(k+1)},\frac{1}{\alpha_k}\circ(X^{(k+1)}-X^{(k)})+V^{(k)}+\frac{1}{\beta_k}\circ(X^{(k)}-Y^{(k)})\right\rangle\ge0,\\
& G(Y)-G(Y^{(k+1)})+\left\langle Y-Y^{(k+1)},\frac{1}{\beta_k}\circ(Y^{(k+1)}-X^{(k+1)})-V^{(k)}\right\rangle\ge0.   
\end{aligned}
\end{equation}

Note that $V^{(k+1)}=V^{(k)}+\frac{1}{\beta_k}\circ (X^{(k+1)}-Y^{(k+1)})$, thus by summing up the two inequalities above, we obtain 
\begin{equation}\label{lem2_2}
\begin{aligned}
\langle \omega_{k+1}-\omega, H_k(\omega_k-\omega_{k+1})\rangle\ge & F(X^{(k+1)})+G(Y^{(k+1)})-F(X)-G(Y)+\langle\omega_{k+1}-\omega,\phi(\omega_{k+1})\rangle\\
& + \langle X^{(k+1)}-X-Y^{(k+1)}+Y, \frac{1}{\beta_k}\circ(Y^{(k+1)}-Y^{(k)})\rangle.   
\end{aligned}
\end{equation}

We take $\omega=\omega^*$, then $X^*=Y^*$. Since $G(\cdot)$ is convex and $V^{(k+1)}\in\partial G(Y^{(k+1)})$, we have 
\begin{equation}\label{lem2_3}
\langle X^{(k+1)}-Y^{(k+1)}, \frac{1}{\beta_k}\circ(Y^{(k+1)}-Y^{(k)})\rangle = \langle V^{(k+1)}-V^{(k)}, Y^{(k+1)}-Y^{(k)}\rangle \ge 0.
\end{equation}
On the other hand, 
\begin{equation}\label{lem2_4}
\begin{aligned}
&F(X^{(k+1)})+G(Y^{(k+1)})-F(X^*)-G(Y^*)+\langle\omega_{k+1}-\omega^*,\phi(\omega_{k+1})\rangle \\
&=F(X^{(k+1)})+G(Y^{(k+1)})-F(X^*)-G(Y^*)+\langle\omega_{k+1}-\omega^*,\phi(\omega^*)\rangle\ge0,
\end{aligned}
\end{equation}
here the first equality holds since $\langle\omega_{k+1}-\omega^*,\phi(\omega^*)-\phi(\omega_{k+1})\rangle=0.$ and the second inequality holds by the optimal condition of KKT point.

Combining \eqref{lem2_2}, \eqref{lem2_3} and \eqref{lem2_4}, we can conclude that 
\begin{equation*}
\langle \omega_{k+1}-\omega^*, H_k(\omega_k-\omega_{k+1})\rangle\ge0.
\end{equation*}
Consequently, we obtain
\begin{align*}
\|\omega_{k}-\omega^*\|_{H_k}^2&=\|\omega_{k}-\omega_{k+1}+\omega_{k+1}-\omega^*\|_{H_k}^2\\
&=\|\omega_{k+1}-\omega^*\|_{H_k}^2+\|\omega_{k}-\omega_{k+1}\|_{H_k}^2 +2\langle \omega_{k+1}-\omega^*, H_k(\omega_k-\omega_{k+1})\rangle \\
&\ge\|\omega_{k+1}-\omega^*\|_{H_k}^2+\|\omega_{k}-\omega_{k+1}\|_{H_k}^2. 
\end{align*} 
\end{proof}

\begin{proof}[Proof of Theorem \ref{convergence_of_neural_LADMM}]
Given some $\omega^*\in\Omega^*$, by the inequality \eqref{contraction_ineq}, we have
\begin{equation*}
\|\omega_{k}-\omega^*\|_{H_k}^2\ge\|\omega_{k+1}-\omega^*\|_{H_k}^2+\|\omega_{k}-\omega_{k+1}\|_{H_k}^2\ge \|\omega_{k+1}-\omega^*\|_{H_{k+1}}^2+\|\omega_{k+1}-\omega^*\|_{(H_k-H_{k+1})}^2.
\end{equation*}
We can take the appropriate parameters $(\alpha_k,\beta_k)\rightarrow{(\alpha^*,\beta^*)}$ such that  and $\|H_{k+1}-H_k\|\le c_0/(k+1)^2\|H_{k+1}\|$, where $c_0\ge0$ is small enough. Then $\exists c_1,c_2 >0,c_1\le\|H_k\|\le c_2$, and
\begin{equation*}
\|\omega_{k}-\omega^*\|_{H_k}^2\ge \left(1-\frac{c_0}{(k+1)^2}\right)\|\omega_{k+1}-\omega^*\|_{H_{k+1}}^2.
\end{equation*}
Thus for any $k\ge0$, we obtain
\begin{equation*}
\|\omega_{0}-\omega^*\|_{H_0}^2\ge \left(\prod_{u=0}^{k}\left(1-\frac{c_0}{(u+1)^2}\right)\right)\|\omega_{k+1}-\omega^*\|_{H_{k+1}}^2.    
\end{equation*}
Take small enough $c_0$, such that $\prod_{u=0}^{\infty}\left(1-\frac{c_0}{(u+1)^2}\right)=\exp\left(\sum_{u=0}^\infty\log\left(1-\frac{c_0}{(u+1)^2}\right)\right)>0$. Then we can conclude that $\{\omega_k\}$ is a bounded sequence. By the inequality \eqref{contraction_ineq}, we have
\begin{align*}
\sum_{u=0}^{k}\|\omega_{u}-\omega_{u+1}\|_{H_u}^2
&\le\sum_{u=0}^{k}(\|\omega_{u}-\omega^*\|_{H_u}^2-\|\omega_{u+1}-\omega^*\|_{H_u}^2)\\
&\le \sum_{u=0}^{k}\left(\|\omega_{u}-\omega^*\|_{H_u}^2-\|\omega_{u+1}-\omega^*\|_{H_{u+1}}^2
+\|\omega_{u+1}-\omega^*\|_{(H_{u+1}-H_u)}^2\right) \\
&=\|\omega_{0}-\omega^*\|_{H_0}^2-\|\omega_{k+1}-\omega^*\|_{H_{k+1}}^2+\sum_{u=0}^{k}\|\omega_{u+1}-\omega^*\|_{(H_{u+1}-H_u)}^2\\
&\le \|\omega_{0}-\omega^*\|_{H_0}^2+\sum_{u=0}^{k}|\|\omega_{u+1}-\omega^*\|_{(H_{u+1}-H_u)}^2|.
\end{align*}

Let $k\rightarrow{\infty}$, then $\sum_{k=0}^{\infty}\|\omega_{k}-\omega_{k+1}\|_{H_k}^2\le \|\omega_{0}-\omega^*\|_{H_0}^2 + \sum_{k=0}^{\infty} |\|\omega_{k+1}-\omega^*\|_{(H_{k+1}-H_k)}^2|.$ Since $\|H_{k+1}-H_k\|\le O(1/(k+1)^2)$, we have $\sum_{k=0}^{\infty}|\|\omega_{k+1}-\omega^*\|_{(H_{k+1}-H_k)}^2|<\infty$ and thus $\sum_{k=0}^{\infty}\|\omega_{k}-\omega_{k+1}\|_{H_k}^2<\infty$. It follows that $\|\omega_{k}-\omega_{k+1}\|_{H_k}^2\rightarrow{0}$.
Denote $\{\omega_{k_j}\}$ as the subsequence that satisfies $\omega_{k_j}\rightarrow{\omega_\infty}$. By \eqref{lem2_1}, if we take $k=k_j$ and note that $\|\omega_{k}-\omega_{k+1}\|_{H_k}^2\rightarrow{0}$ gives $\|\omega_{k}-\omega_{k+1}\|^2_{F}\rightarrow{0}$, we have
\begin{equation}
\begin{aligned}
&F(X)-F(X^{(\infty)})+\left\langle X-X^{(\infty)},V^{(\infty)}\right\rangle\ge0,\\
& G(Y)-G(Y^{(\infty)})+\left\langle Y-Y^{(\infty)},-V^{(\infty)}\right\rangle\ge0.   
\end{aligned}
\end{equation}
Thus $\omega_\infty$ is a KKT point of the problem \eqref{unified_opt}. Use the inequality \eqref{contraction_ineq} again, then $\forall j\ge 0,\forall k>k_j$,
\begin{align*}
\|\omega_{k}-\omega_\infty\|_{H_{k}}^2\le \|\omega_{k_j}-\omega_\infty\|_{H_0}^2+\sum_{u=k_j}^{k-1} \|\omega_{u+1}-\omega_\infty\|_{(H_{u+1}-H_u)}^2-\sum_{u=k_j}^{k-1} \|\omega_{u}-\omega_{u+1}\|_{H_u}^2.
\end{align*}
By the construction of $H_k$, $\sum_{u=0}^{\infty} |\|\omega_{u+1}-\omega_\infty\|_{(H_{u+1}-H_u)}^2|<\infty$ and $\sum_{u=0}^{\infty}\|\omega_{u}-\omega_{u+1}\|_{H_u}^2<\infty$, thus the canonical $\epsilon-\delta$ argument gives the desired result of $\|\omega_{k}-\omega_\infty\|_{F}\rightarrow{0}$.
\end{proof}

\subsection{Proof of Theorem \ref{monotonicity_of_neural_LADMM}}
\begin{proof}
We assume that there exists $\alpha_k,\beta_k$ such that $\omega_{k+1}\neq\omega_k$, otherwise we have completed the proof by Lemma \ref{absorbing_states}. Hence we have $\|\omega_k-\omega_{k+1}\|_{H_k}\neq0$, and $\exists \rho_k>0$, such that
\begin{equation*}
\operatorname{dist}^2_{H_k}(\omega_{k+1},\Omega^*)\le\rho_k \|\omega_k-\omega_{k+1}\|_{H_k}^2.   
\end{equation*}
Also, the inequality \eqref{contraction_ineq} gives
\begin{equation*}
\operatorname{dist}^2_{H_k}(\omega_{k+1},\Omega^*)\le \operatorname{dist}^2_{H_k}(\omega_{k},\Omega^*) -\|\omega_k-\omega_{k+1}\|_{H_k}^2.     
\end{equation*}
Thus we obtain
\begin{equation*}
\operatorname{dist}^2_{H_k}(\omega_{k+1},\Omega^*)\le (1+1/\rho_k)^{-1} \operatorname{dist}^2_{H_k}(\omega_{k},\Omega^*).   
\end{equation*}
For $\omega^*\in\Omega^*$ such that $\operatorname{dist}^2_{H_k}(\omega_{k+1},\Omega^*)=\|\omega_{k+1}-\omega^*\|^2_{H_k}$, then
\begin{equation*}
\operatorname{dist}^2_{H_{k+1}}(\omega_{k+1},\Omega^*)\le \|\omega_{k+1}-\omega^*\|^2_{H_k} + \|\omega_{k+1}-\omega^*\|^2_{H_{k+1}-H_k}. 
\end{equation*}
Choose $\alpha_k,\beta_k$ accordingly such that $\|H_{k+1}-H_k\|\le \tau_k\|H_k\|$, where $\tau_k$ is small enough, then $(1+\tau_k)(1+1/\rho_k)^{-1}<1$, and thus
\begin{equation*}
\operatorname{dist}^2_{H_{k+1}}(\omega_{k+1},\Omega^*)\le (1+\tau_k)(1+1/\rho_k)^{-1}\operatorname{dist}^2_{H_k}(\omega_{k},\Omega^*) < \operatorname{dist}^2_{H_k}(\omega_{k},\Omega^*), 
\end{equation*}
which completes the proof.
\end{proof}

\begin{lemma}[Absorbing states]\label{absorbing_states}
Given $\omega_k$ generated by \eqref{unified_neural_LADMM2}. If for all parameters $(\alpha_k,\beta_k)\in\mathcal{S}$, there  holds $\omega_{k+1}=\omega_k$, then $\forall j\ge k,\omega_j=\omega_k\in\Omega^*$.
\end{lemma}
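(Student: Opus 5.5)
The plan is to show two things: that $\omega_k$ is a fixed point of the update $\mathcal{T}(\alpha,\beta)$ for some $(\alpha,\beta)\in\mathcal{S}$, and that any such fixed point satisfies the KKT conditions of \eqref{unified_opt}. Since the hypothesis says $\mathcal{T}(\alpha,\beta)(\omega_k)=\omega_k$ for \emph{every} $(\alpha,\beta)\in\mathcal{S}$, the same parameters can be used at every later stage.

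\emph{Step 1 (fixed point gives KKT).} Set $\omega_{k+1}=\omega_k$, i.e.\ $X^{(k+1)}=X^{(k)}$, $Y^{(k+1)}=Y^{(k)}$ and $V^{(k+1)}=V^{(k)}$, in the optimality conditions of \eqref{unified_neural_LADMM2} used in the proof of the preceding lemma.

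The dual update $V^{(k+1)}=V^{(k)}+\tfrac{1}{\beta_k}\circ(X^{(k+1)}-Y^{(k+1)})$ then forces $\tfrac{1}{\beta_k}\circ(X^{(k)}-Y^{(k)})=0$. Because $\beta_k>0$ entrywise, this gives $X^{(k)}=Y^{(k)}$, so the point is primal feasible.

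Substituting into the first inclusion,
\[
\tfrac{1}{\alpha_k}\circ(X^{(k)}-X^{(k+1)})-\tfrac{1}{\beta_k}\circ(X^{(k)}-Y^{(k)})-V^{(k)}\in\partial F(X^{(k+1)}),
\]
both correction terms vanish, leaving $-V^{(k)}\in\partial F(X^{(k)})$.

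The second inclusion, $V^{(k)}-\tfrac{1}{\beta_k}\circ(Y^{(k+1)}-X^{(k+1)})\in\partial G(Y^{(k+1)})$, reduces to $V^{(k)}\in\partial G(Y^{(k)})$.

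These three relations are exactly the KKT system of \eqref{unified_opt}, so $\omega_k\in\Omega^*$.

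\emph{Step 2 (stationarity propagates).} We want the actual iterates to stay at $\omega_k$ for all $j\ge k$. Since $\omega_{k+1}=\omega_k$ for every admissible choice of parameters at stage $k$, in particular $\omega_{k+1}=\omega_k$ for the choice actually used.

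For $j>k$ we proceed by induction. Once $\omega_j=\omega_k\in\Omega^*$, it suffices to show that a KKT point is fixed by $\mathcal{T}(\alpha,\beta)$ for every $(\alpha,\beta)\in\mathcal{S}$. Take $\omega^*=(X^*,X^*,V^*)$ as input.

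\emph{$X$-step.} The $X$-subproblem minimizes $F(X)+\tfrac12\|\alpha^{-1/2}\circ(X-X^*+\alpha\circ V^*)\|_F^2$, which is strongly convex, so it has a unique minimizer. The point $X=X^*$ satisfies its optimality condition, because that condition reads $-V^*\in\partial F(X^*)$. Hence the $X$-step returns $X^*$.

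\emph{$Y$-step.} Similarly, the $Y$-subproblem has optimality condition $V^*-\tfrac1\beta\circ(Y-X^*)\in\partial G(Y)$. This is satisfied uniquely by $Y=X^*$, using $V^*\in\partial G(X^*)$.

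\emph{$V$-step.} With $X^{(j+1)}=Y^{(j+1)}=X^*$, the dual update leaves $V^*$ unchanged.

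Therefore $\omega_{j+1}=\omega_j$, and the induction gives $\omega_j=\omega_k$ for all $j\ge k$.

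\emph{Main obstacle.} The main delicacy is uniqueness in the weighted proximal steps. I would handle it by noting that the weights are entrywise bounded positive, so each subproblem is strongly convex in the Frobenius metric. The optimality conditions then characterize the minimizer uniquely, and the identification of the iterate with the verified candidate point is rigorous.
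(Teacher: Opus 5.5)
Your proposal is correct and follows the paper's own argument: the dual update forces $X^{(k)}=Y^{(k)}$, and the two optimality inclusions of \eqref{unified_neural_LADMM2} then reduce to $-V^{(k)}\in\partial F(X^{(k)})$ and $V^{(k)}\in\partial G(Y^{(k)})$. Your Step 2 supplies the induction that the paper only asserts, by using uniqueness of the strongly convex weighted proximal subproblems to show that a KKT point is fixed by $\mathcal{T}(\alpha,\beta)$ for every admissible $(\alpha,\beta)$.
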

\begin{proof}[Proof of Lemma \ref{absorbing_states}]
The update rule of $V^{(k+1)}$ directly implies that $X^{(k+1)}=Y^{(k+1)}$. Using $\omega_{k+1}=\omega_k$ and the optimal conditions in \eqref{unified_neural_LADMM2}, we have $-V^{(k)}\in\partial F(X^{(k)})$ and $V^{(k)}\in\partial G(Y^{(k)})$, thus $\omega_k\in\Omega^*$ is a KKT point. We can show that $\omega_k$ is a fixed point by induction.
\end{proof}

\subsection{Proof of Theorem \ref{convergence_rate_of_neural_LADMM}}
\begin{proof}
By Lemma \ref{absorbing_states}, we assume without loss of generality that $\forall k\ge0,\omega_{k+1}\neq\omega_k$. There exists $K_0>0$, such that for $k\ge K_0$, we have $\operatorname{dist}^2_{H^*}(\widetilde{\omega}_{k+1},\Omega^*)\le\frac{\kappa}{16}\|\widetilde{\omega}_{k+1}-\omega_k\|_{H^*}^2$. Take $\omega^*\in\Omega^*$ such that $\operatorname{dist}^2_{H^*}(\widetilde{\omega}_{k+1},\Omega^*)=\|\widetilde{\omega}_{k+1}-\omega^*\|^2_{H^*}$, then 
\begin{equation*}  
\operatorname{dist}^2_{H_{k+1}}(\widetilde{\omega}_{k+1},\Omega^*)\le \operatorname{dist}^2_{H^*}(\widetilde{\omega}_{k+1},\Omega^*) + \big|\|\widetilde{\omega}_{k+1}-\omega^*\|^2_{H_{k+1}-H^*}\big|.
\end{equation*}
There exists $(\alpha_k,\beta_k)\rightarrow{(\alpha^*,\beta^*)}$ with some decay rate, i.e., for large enough $k$, $\|H_{k+1}-H^*\|\le c_k\|H_{k}-H^*\|$, where $0\le c_k<1$ is to be determined. Also, we require that $\beta_k=\beta^*$, $\alpha_k\uparrow{\alpha^*}$, then $H_{k+1}\preceq H_k$. 
Then 
\begin{equation*} 
\operatorname{dist}^2_{H_{k+1}}(\widetilde{\omega}_{k+1},\Omega^*) < 2\operatorname{dist}^2_{H^*}(\widetilde{\omega}_{k+1},\Omega^*), \quad
\|\widetilde{\omega}_{k+1}-\omega_k\|^2_{H^*}<2\|\widetilde{\omega}_{k+1}-\omega_k\|^2_{H_k}.
\end{equation*}
Thus we obtain 
\begin{equation*} 
\operatorname{dist}^2_{H_{k+1}}(\widetilde{\omega}_{k+1},\Omega^*) < \frac{\kappa}{4} \|\widetilde{\omega}_{k+1}-\omega_k\|^2_{H_k}.
\end{equation*}
By the triangle inequality under $\|\cdot\|_{H}$, we have 
\begin{align*} 
\operatorname{dist}_{H_{k+1}}({\omega}_{k+1},\Omega^*)&\le \|\omega_{k+1}-\widetilde{\omega}_{k+1}\|_{H_{k+1}}+\operatorname{dist}_{H_{k+1}}(\widetilde{\omega}_{k+1},\Omega^*) \\ 
& < \|\omega_{k+1}-\widetilde{\omega}_{k+1}\|_{H_{k+1}}+\frac{\sqrt{\kappa}}{2}\|\widetilde{\omega}_{k+1}-\omega_k\|_{H_k} \\ 
& \le \|\omega_{k+1}-\widetilde{\omega}_{k+1}\|_{H_{k+1}}+\frac{\sqrt{\kappa}}{2}\big(\|\widetilde{\omega}_{k+1}-\omega_{k+1}\|_{H_k}+\|{\omega}_{k+1}-\omega_{k}\|_{H_k}\big).
\end{align*}
Since the operator $\mathcal{T}$ is lipschitz continuous
w.r.t. $\alpha_k,\beta_k$, thus 
\begin{equation*} 
\|\omega_{k+1}-\widetilde{\omega}_{k+1}\|_{H_{k+1}}+\frac{\sqrt{\kappa}}{2}\|\widetilde{\omega}_{k+1}-\omega_{k+1}\|_{H_k} \le C(\omega)O(\Delta_k),
\end{equation*}
where $\Delta_k= |\alpha_k-\alpha^*|+|\beta_k-\beta^*|$ and $C(\omega)$ is the upper bound of $\{\omega_k\}$. Let $c_k$ decay fast sufficiently, we obtain $C(\omega)O(\Delta_k)\le \frac{1}{2} \operatorname{dist}_{H_{k+1}}({\omega}_{k+1},\Omega^*)$ (There exists some parameters such that this holds, i.e., let $(\alpha_k,\beta_k)=(\alpha^*,\beta^*)$, which degenerates to LADMM). Thus 
\begin{equation*} 
\|\omega_{k+1}-\widetilde{\omega}_{k+1}\|_{H_{k+1}}+\frac{\sqrt{\kappa}}{2}\|\widetilde{\omega}_{k+1}-\omega_{k+1}\|_{H_k} \le \frac{1}{2} \operatorname{dist}_{H_{k+1}}({\omega}_{k+1},\Omega^*).
\end{equation*}
Hence we have for $k\ge K_0$, 
\begin{equation*} 
\operatorname{dist}^2_{H_{k+1}}({\omega}_{k+1},\Omega^*) \le \kappa\|{\omega}_{k+1}-\omega_{k}\|_{H_k}^2.
\end{equation*}
Since $\|\omega_{k+1}-\omega_k\|_{H_k} >0$ for all $k$, there exists $\widetilde{\kappa}>0$ such that $\forall k<K_0$, we have $\operatorname{dist}^2_{H_{k+1}}({\omega}_{k+1},\Omega^*) \le \widetilde{\kappa}\|{\omega}_{k+1}-\omega_{k}\|_{H_k}$. Denote $\rho_0=\max\{\kappa,\widetilde{\kappa}\}$, then for all $k$, 
\begin{equation*} 
\operatorname{dist}^2_{H_{k+1}}({\omega}_{k+1},\Omega^*) \le \rho_0\|{\omega}_{k+1}-\omega_{k}\|_{H_k}^2.
\end{equation*}
Since $H_{k+1}\preceq H_k$, the proof process of Theorem \ref{monotonicity_of_neural_LADMM} implies that 
\begin{align*} 
&\operatorname{dist}^2_{H_{k+1}}(\omega_{k+1},\Omega^*)\le \operatorname{dist}^2_{H_k}(\omega_{k+1},\Omega^*) + \|\omega_{k+1}-\omega^*\|^2_{H_{k+1}-H_k}\le \operatorname{dist}^2_{H_k}(\omega_{k+1},\Omega^*), \\ 
&\operatorname{dist}^2_{H_k}(\omega_{k+1},\Omega^*)\le \operatorname{dist}^2_{H_k}(\omega_{k},\Omega^*) -\|\omega_k-\omega_{k+1}\|_{H_k}^2.
\end{align*}
Hence we conclude that 
\begin{equation*} 
\operatorname{dist}^2_{H_{k+1}}({\omega}_{k+1},\Omega^*) \le (1+1/\rho_0)^{-1}\operatorname{dist}^2_{H_{k}}({\omega}_{k},\Omega^*).
\end{equation*}

\end{proof}

\subsection{Proof of Theorem \ref{superiority_of_neural_LADMM}}
\begin{proof}
Take $(w_1)_k=\frac{1}{\rho\phi_1}\mathbf{1},(w_2)_k=\frac{1}{\rho\phi_2}\mathbf{1},\alpha_k=\frac{1}{\rho\phi_1}\mathbf{1},\gamma_k=\rho\mathbf{1}$. Let $\widehat{\omega}_{k+1}=(\widehat{X}^{(k+1)},\widehat{Y}^{(k+1)},\widehat{V}^{(k+1)})^\top,$
$\widetilde\omega_{k+1}=(\widetilde{X}^{(k+1)},\widetilde{Y}^{(k+1)},\widetilde{V}^{(k+1)})^\top$. Then the re-parameterized LADMM update can be rewritten as
\begin{equation}
\left\{
\begin{aligned}
\widehat{X}^{(k+1)} &= \operatorname{prox}_{F/(\rho\phi_1)}\!\left(
X^{(k)} - \frac{1}{\phi_1}\big(X^{(k)} - Y^{(k)} + V^{(k)}/\rho\big)
\right),\\[6pt]
\widehat{Y}^{(k+1)} &= \operatorname{prox}_{G/(\rho\phi_2)}\!\left(Y^{(k)}+\beta_k \circ \left(V^{(k)}+\rho\big(\widehat{X}^{(k+1)} - Y^{(k)}\big)\right)
\right),\\[6pt]
\widehat{V}^{(k+1)} &= V^{(k)} + \rho\big(\widehat X^{(k+1)} - \widehat{Y}^{(k+1)}\big).
\end{aligned}
\right.
\end{equation}
Denote $\eta(\cdot)=\operatorname{prox}_{F/(\rho\phi_1)}(\cdot)$ , $\zeta(\cdot)=\operatorname{prox}_{G/(\rho\phi_2)}(\cdot)$, $\beta^*=\frac{1}{\rho\phi_2}\mathbf{1}$ and $R^{(k)}=V^{(k)}+\rho\big(\widehat{X}^{(k+1)} - Y^{(k)}\big)$. We claim that if $R^{(k)}$ satisfies that $(R^{(k)})_{ij}\neq0, \forall i,j$, then there exists $c\in(0,1)$ and $\beta_k$ such that
\begin{equation}\label{superiority_equality}
\zeta\left(Y^{(k)}+\beta_k\circ R^{(k)}\right)-\widetilde{Y}_{k+1}=-\frac{c}{1+\rho^2}\left(\widetilde{Y}_{k+1}-Y^*-\rho\left(\widetilde{V}_{k+1}-V^*\right)\right).
\end{equation}
Indeed, since $\zeta$ is bijective, the above equality can be transformed as 
\begin{equation*}
\beta_k\circ R^{(k)}=\zeta^{-1}\left(-\frac{c}{1+\rho^2}\left(\widetilde{Y}_{k+1}-Y^*-\rho\left(\widetilde{V}_{k+1}-V^*\right)\right)+\widetilde{Y}_{k+1}\right)-Y^{(k)}.
\end{equation*}
As we take a small enough $c>0$, we can conclude that there exists $\beta_k$  such that \eqref{superiority_equality} holds. Moreover, define $\Delta Y=\zeta\left(Y^{(k)}+\beta_k\circ R^{(k)}\right)-\widetilde{Y}_{k+1}$, if $\widetilde{Y}_{k+1}-Y^*-\rho\left(\widetilde{V}_{k+1}-V^*\right)\neq0$, then 
\begin{align*}
&\|\widetilde\omega_{k+1}-\omega^*\|_F^2-\|\widehat{\omega}_{k+1}-\omega^*\|_F^2\\&= \|\widetilde{Y}_{k+1}-Y^*\|_F^2-\|\widetilde{Y}_{k+1}-Y^*+\Delta Y\|_F^2+\|\widetilde{V}_{k+1}-V^*\|_F^2-\|\widetilde{V}_{k+1}-V^*-\rho\Delta Y\|_F^2 \\
&= \frac{2c-c^2}{1+\rho^2}\|\widetilde{Y}_{k+1}-Y^*-\rho\left(\widetilde{V}_{k+1}-V^*\right)\|_F^2 >0.
\end{align*}
Note that $\Omega_0=\{\omega_k:(R^{(k)})_{ij}=0\}\bigcup\{\omega_k:\widetilde{Y}_{k+1}-Y^*-\rho\left(\widetilde{V}_{k+1}-V^*\right)=0\}$ is a set of measure zero, we have completed the proof.
\end{proof}

\subsection{Proof of Theorem \ref{thm:entry}}
Before proving Theorem \ref{thm:entry}, we first present two fundamental facts in high-dimensional statistics: (i) If \(U,V\) are sub-Gaussian, then \(UV\) is sub-exponential and \(\normpsione{UV}\le C\,\normpsitwo{U}\,\normpsitwo{V}\); hence \(\normpsione{UV-\E(UV)}\le C\,\normpsitwo{U}\,\normpsitwo{V}\). (ii) Bernstein inequality for sub-exponential variables. If \(\widetilde{X}_1,\dots,\widetilde{X}_n\) are i.i.d. centered with
\(\lVert \widetilde{X}_k \rVert_{\psi_1}\le B\), then for all \(t>0\),
\[
\Pr\! \left(\left|\frac{1}{n}\sum_{k=1}^n \widetilde{X}_k\right| \ge t\right)
\;\le\; 2\exp\! \left(-c n \min \left(\frac{t^2}{B^2},\,\frac{t}{B}\right)\right).
\]
\begin{proof}
Fix indices $i,j\in\{1,\dots,p\}$ and define
\[
(S-\Sigma^\star)_{ij}
= \frac{1}{n}\sum_{k=1}^n \big(\widetilde{X}_{ki}\widetilde{X}_{kj}-\E[\widetilde{X}_i \widetilde{X}_j]\big)
= \frac{1}{n}\sum_{k=1}^n \widetilde{Z}_k^{(ij)},
\]
where $\widetilde{Z}_k^{(ij)} := \widetilde{X}_{ki}\widetilde{X}_{kj}-\E[\widetilde{X}_i \widetilde{X}_j]$ are i.i.d.\ with mean zero.
By the product-sub-exponential fact and Assumption~\ref{asmp:subg}, we have
\[
\normpsione{Z_k^{(ij)}} \;\le\; C\,\normpsitwo{X_{k,i}}\,\normpsitwo{X_{k,j}}
\;\le\; C K^2 \;=: B.
\]
Applying the sub-exponential Bernstein inequality yields, for any $t>0$,
\begin{equation}\label{eq:fixed-pair}
\Pr\!\left(\big|(S-\Sigma^\star)_{ij}\big|\ge t\right)
\;\le\; 2 \exp\! \left(-c n \min \left(\frac{t^2}{B^2},\,\frac{t}{B}\right)\right).
\end{equation}
This holds for all pairs $(i,j)$, including the diagonal $i=j$ (since $X_i^2-\E X_i^2$ is sub-exponential with the same $\psione$-norm bound). Now take the union bound over all $p^2$ entries:
\[
\Pr\!\left(\|S-\Sigma^\star\|_\infty \ge t\right)
\;\le\; \sum_{i,j=1}^p \Pr\!\left(\big|(S-\Sigma^\star)_{ij}\big|\ge t\right)
\;\le\; 2 p^2 \exp\!\Big(-c n \min\Big\{\frac{t^2}{B^2},\,\frac{t}{B}\Big\}\Big).
\]
Choose $t = C_0 \sqrt{\frac{\log p}{n}}$ with $C_0 = \widetilde C\,B$.
Provided $n\gtrsim \log p$, we are in the quadratic branch of the minimum, hence
\[
\Pr\!\left(\|S-\Sigma^\star\|_\infty \ge C_0 \sqrt{\frac{\log p}{n}}\right)
\;\le\; 2 p^2 \exp\!\Big(-c n \cdot \frac{C_0^2 \log p}{n B^2}\Big)
\;=\; 2 p^{\,2 - c\,\widetilde C^{\,2}}.
\]
Take $\widetilde C$ large enough so that $2 - c\widetilde C^2 \le -c_0$ for the desired $c_0>0$.
Equivalently,
\[
\Pr\!\left(\|S-\Sigma^\star\|_\infty \le C_0 \sqrt{\tfrac{\log p}{n}}\right)
\;\ge\; 1 - 2 p^{-c_0}.
\]
This completes the proof.
\end{proof}

\subsection{Proof of Theorem \ref{thm:total}}
\begin{proof}
Recall
\(
f_1(\Sigma)=\frac{1}{2}\|\Sigma-S\|_F^2+\lambda\|\Sigma\|_{1,\mathrm{off}}
\)
and let
\(
\widehat\Sigma_\star\in\arg\min_{\Sigma\succeq \epsilon I} f_1(\Sigma)
\).
Set \(\Delta:=\widehat\Sigma_\star-\Sigma^\star\) and denote the off–diagonal support
\(\gA:=\{(i,j):i\neq j,\ \sigma^\star_{ij}\neq 0\}\) with \(|\gA|\) elements. By optimality of \(\widehat\Sigma_\star\) for \eqref{raw_high_dimension}, for any feasible \(\Sigma\),
\[
\frac{1}{2}\|\widehat\Sigma_\star-S\|_F^2+\lambda\|\widehat\Sigma_\star\|_{1,\mathrm{off}}
\ \le\
\frac{1}{2}\|\Sigma-S\|_F^2+\lambda\|\Sigma\|_{1,\mathrm{off}}.
\]
Choosing \(\Sigma=\Sigma^\star\) and expanding
\(\|\widehat\Sigma_\star-S\|_F^2-\|\Sigma^\star-S\|_F^2
= \|\Delta\|_F^2 + 2\langle \Sigma^\star-S,\Delta\rangle\),
we obtain the basic inequality
\begin{equation}\label{eq:A-basic}
\frac{1}{2}\|\Delta\|_F^2
\ \le\
\langle S-\Sigma^\star,\Delta\rangle
+\lambda\big(\|\Sigma^\star\|_{1,\mathrm{off}}-\|\widehat\Sigma_\star\|_{1,\mathrm{off}}\big).
\end{equation}

Decompose \(\Delta=\Delta_{\gA}+\Delta_{\gA^c}\) on off–diagonals
(diagonals are unpenalized). If \(\lambda\ge 2\|S-\Sigma^\star\|_\infty\),
then by Hölder inequality,
\(
\langle S-\Sigma^\star,\Delta\rangle
\le \|S-\Sigma^\star\|_\infty \|\Delta\|_1
\le \frac{\lambda}{2}\|\Delta\|_1
\).
By decomposability of \(\|\cdot\|_{1,\mathrm{off}}\),
\(
\|\Sigma^\star\|_{1,\mathrm{off}}-\|\widehat\Sigma_\star\|_{1,\mathrm{off}}
\le \|\Delta_{\gA}\|_1-\|\Delta_{\gA^c}\|_1 + 2\|(\Sigma^\star)_{\gA^c}\|_1
\).
Plugging these into \eqref{eq:A-basic} and dropping the nonnegative
\(\frac{1}{2}\|\Delta\|_F^2\) yields the cone inequality
\begin{equation}\label{eq:A-cone}
\|\Delta_{\gA^c}\|_1 \ \le\ 3\|\Delta_{\gA}\|_1 + 4\|(\Sigma^\star)_{\gA^c}\|_1.
\end{equation}

Returning to \eqref{eq:A-basic} and using the bounds above,
\[
\begin{aligned}
\tfrac12\|\Delta\|_F^2
\ &\le\
\tfrac{\lambda}{2}\big(\|\Delta_{\gA}\|_1+\|\Delta_{\gA^c}\|_1\big)
+\lambda\big(\|\Delta_{\gA}\|_1-\|\Delta_{\gA^c}\|_1+2\|(\Sigma^\star)_{\gA^c}\|_1\big) \\
& s'd= \tfrac{3\lambda}{2}\|\Delta_{\gA}\|_1 - \tfrac{\lambda}{2}\|\Delta_{\gA^c}\|_1
+ 2\lambda\|(\Sigma^\star)_{\gA^c}\|_1.
\end{aligned}
\]
Drop the negative term and use \(\|\Delta_{\gA}\|_1\le \sqrt{|\gA|}\,\|\Delta\|_F\) to obtain
\[
\tfrac12\|\Delta\|_F^2 \ \le\ \frac{3\lambda}{2}\sqrt{|\gA|}\,\|\Delta\|_F
\ +\ 2\lambda\|(\Sigma^\star)_{\gA^c}\|_1.
\]
Solving the quadratic inequality in \(x=\|\Delta\|_F\) gives
\begin{equation}\label{eq:A-frob}
\|\widehat\Sigma_\star-\Sigma^\star\|_F
\ \le\
4\sqrt{|\gA|}\,\lambda \;+\; 4\,\frac{\|(\Sigma^\star)_{\gA^c}\|_1}{\sqrt{|\gA|}}.
\end{equation}

From the KKT conditions for \eqref{raw_high_dimension},
\(0\in (\widehat\Sigma_\star-S)+\lambda Z + \gI_{\{\Sigma\succeq \epsilon I\}}(\widehat\Sigma_\star)\),
with \(Z\in\partial \|\cdot\|_{1,\mathrm{off}}(\widehat\Sigma_\star)\).
On off–diagonals, the normal cone vanishes, hence
\[
|(\widehat\Sigma_\star-\Sigma^\star)_{ij}|
\le |(S-\Sigma^\star)_{ij}| + \lambda |Z_{ij}|
\le \tfrac{\lambda}{2} + \lambda \le 3\lambda,
\qquad (i\neq j),
\]
using \(\lambda\ge 2\|S-\Sigma^\star\|_\infty\) and \(|Z_{ij}|\le 1\). Write \(f_1(\Sigma)\) with \(\frac{1}{2}\|\Sigma-S\|_F^2\) (1–strongly convex)
and \(\lambda\|\Sigma\|_{1,\mathrm{off}}+\gI_{\{\Sigma\succeq \epsilon I\}}(\Sigma)\) (convex).
Hence \(f_1(\Sigma)\) is 1–strongly convex and, for any \(\Sigma\),
\[
f_1(\Sigma)-f_1(\widehat\Sigma_\star)\ \ge\ \frac{1}{2}\|\Sigma-\widehat\Sigma_\star\|_F^2.
\]
For the iterate \(\Sigma^{(k)}\), this yields
\begin{equation}\label{eq:A-opt-to-param}
\|\Sigma^{(k)}-\widehat\Sigma_\star\|_F \ \le\ \sqrt{2\,\varepsilon_{\mathrm{opt}}(k)}.
\end{equation}

By the triangle inequality and \eqref{eq:A-opt-to-param},
\[
\|\Sigma^{(k)}-\Sigma^\star\|_F
\ \le\
\|\Sigma^{(k)}-\widehat\Sigma_\star\|_F
+\|\widehat\Sigma_\star-\Sigma^\star\|_F
\ \le\
\sqrt{2\,\varepsilon_{\mathrm{opt}}(k)}
\;+\;
\Big(4\sqrt{|\gA|}\,\lambda + 4\,\|(\Sigma^\star)_{\gA^c}\|_1/\sqrt{|\gA|}\Big),
\]
which is exactly the claimed bound \eqref{eq:total_bound}.
Under Assumption~\ref{asmp:subg}, Theorem~\ref{thm:entry} ensures
\(\|S-\Sigma^\star\|_\infty \lesssim \sqrt{\tfrac{\log p}{n}}\) with probability at least \(1-2p^{-c_0}\),
so choosing \(\lambda \asymp \sqrt{\tfrac{\log p}{n}}\) makes the statistical term optimal (up to constants).
If we stop when \(\varepsilon_{\mathrm{opt}}(k)\le C^2\,|\gA|\,\tfrac{\log p}{n}\),
the first term matches the statistical rate, yielding
\(
\|\Sigma^{(k)}-\Sigma^\star\|_F
\lesssim \sqrt{|\gA|\,\tfrac{\log p}{n}}
+ \|(\Sigma^\star)_{\gA^c}\|_1/\sqrt{|\gA|}
\).
This completes the proof.
\end{proof}

\subsection{Proof of theorem \ref{thm:total-gl}}
\begin{proof}
The graphical Lasso objective is \(f_2(\Theta) = \Tr(S\Theta)-\log\det\Theta + \lambda\|\Theta\|_{1,\mathrm{off}}\). Let $\Delta:=\widehat\Theta^\star-\Theta^\star$ and denote the off–diagonal support $\gB=\{(i,j):i\neq j,\ \theta^\star_{ij}\neq 0\}$, with size $|\gB|$. For any $\Theta\succ0$ and direction $U$,
\[
\nabla^2(\Tr(S\Theta)-\log\det\Theta)[U,U]
=\langle \Theta^{-1}U\Theta^{-1},U\rangle
=\|\Theta^{-1/2}U\Theta^{-1/2}\|_F^2.
\]
Hence on the SPD band $\{\Theta:\ \epsilon I\preceq \Theta\preceq MI\}$,
\[
\nabla^2(\Tr(S\Theta)-\log\det\Theta)[U,U]\ \ge\ \frac{1}{M^2}\,\|U\|_F^2.
\]
By the mean–value form of the Bregman divergence,
\begin{equation}\label{eq:gl-breg-lb}
\Tr(S\widehat\Theta^\star)-\log\det \widehat\Theta^\star
\ \ge\
\Tr(S\Theta^\star)-\log\det \Theta^\star
+\big\langle S-(\Theta^\star)^{-1},\,\Delta\big\rangle
+\frac{1}{2M^2}\|\Delta\|_F^2,\ \Delta:=\widehat\Theta^\star-\Theta^\star .
\end{equation}

Optimality of $\widehat\Theta^\star$ gives
\[
0\ \ge\ f_2(\widehat\Theta^\star)-f_2(\Theta^\star)
= \Big(\Tr(S\widehat\Theta^\star)-\log\det \widehat\Theta^\star\Big)
- \Big(\Tr(S\Theta^\star)-\log\det \Theta^\star\Big)
+ \lambda\big(\|\widehat\Theta^\star\|_{1,\mathrm{off}}-\|\Theta^\star\|_{1,\mathrm{off}}\big).
\]
Combining with \eqref{eq:gl-breg-lb} and using $\nabla (\Tr(S\Theta^\star)-\log\det\Theta^\star) = S-\Sigma^\star$,
\begin{equation}\label{eq:key-start}
\frac{1}{2M^2}\|\Delta\|_F^2
\ \le\ -\langle S-\Sigma^\star,\Delta\rangle
-\lambda\big(\|\widehat\Theta^\star\|_{1,\mathrm{off}}-\|\Theta^\star\|_{1,\mathrm{off}}\big).
\end{equation}

On the event $\|S-\Sigma^\star\|_\infty\le \lambda/2$ (which holds w.p.\ $\ge 1-2p^{-c_0}$ when $\lambda\ge 2\|S-\Sigma^\star\|_\infty$),
\[
|\langle S-\Sigma^\star,\Delta\rangle|
\ \le\ \|S-\Sigma^\star\|_\infty\,\|\Delta\|_1
\ \le\ \frac{\lambda}{2}\,\|\Delta\|_1.
\]
For the $\ell_1$ term, by decomposability on off–diagonals and allowing approximate sparsity,
\[
\|\widehat\Theta^\star\|_{1,\mathrm{off}}-\|\Theta^\star\|_{1,\mathrm{off}}
\ \ge\ \|\Delta_{\gB^c}\|_1-\|\Delta_{\gB}\|_1-2\|(\Theta^\star)_{\gB^c}\|_{1}.
\]
Plugging these two bounds into \eqref{eq:key-start} yields
\begin{equation}\label{eq:key-ineq}
\frac{1}{2M^2}\|\Delta\|_F^2
\ \le\ \frac{\lambda}{2}\big(\|\Delta_{\gB}\|_1+\|\Delta_{\gB^c}\|_1\big)
+ \lambda\big(\|\Delta_{\gB}\|_1-\|\Delta_{\gB^c}\|_1+2\|(\Theta^\star)_{\gB^c}\|_1\big).
\end{equation}
Rearranging,
\begin{equation}\label{eq:cone}
\|\Delta_{\gB^c}\|_1 \ \le\ 3\|\Delta_{\gB}\|_1 + 4\|(\Theta^\star)_{\gB^c}\|_1
\qquad\text{(cone condition).}
\end{equation}

From \eqref{eq:key-ineq}, drop the negative term $-(\lambda/2)\|\Delta_{\gB^c}\|_1$, and use $\|\Delta_{\gB}\|_1\le \sqrt{|\gB|}\,\|\Delta\|_F$:
\[
\frac{1}{2M^2}\|\Delta\|_F^2
\ \le\ \frac{3\lambda}{2}\sqrt{|\gB|}\,\|\Delta\|_F + 2\lambda\|(\Theta^\star)_{\gB^c}\|_1.
\]
Solving the quadratic inequality in $x=\|\Delta\|_F$ gives
\begin{equation}\label{eq:stat-frob}
\|\widehat\Theta^\star-\Theta^\star\|_F
\ \le\ M^2\Big(3\lambda\sqrt{|\gB|}+ 4\,\|(\Theta^\star)_{\gB^c}\|_1/\sqrt{|\gB|}\Big).
\end{equation}

On the band $\{\Theta:\ \epsilon I\preceq \Theta\preceq MI\}$, the function $f_2$ is $1/M^2$–strongly convex. Therefore, for the $k$-th iterate,
\[
\varepsilon_{\mathrm{opt}}^{\mathrm{GL}}(k)
:= f_2(\Theta^{(k)})-f_2(\widehat\Theta^\star)
\ \ge\ \frac{1}{2M^2}\,\|\Theta^{(k)}-\widehat\Theta^\star\|_F^2
\Longrightarrow
\|\Theta^{(k)}-\widehat\Theta^\star\|_F \ \le
\sqrt{2M^2\,\varepsilon_{\mathrm{opt}}^{\mathrm{GL}}(k)}.
\]
Finally, by the triangle inequality and \eqref{eq:stat-frob},
\[
\|\Theta^{(k)}-\Theta^\star\|_F
\ \le\ \|\Theta^{(k)}-\widehat\Theta^\star\|_F + \|\widehat\Theta^\star-\Theta^\star\|_F
\ \le\ \sqrt{2M^2\,\varepsilon_{\mathrm{opt}}^{\mathrm{GL}}(k)}
\;+\; M^2\Big(3\lambda\sqrt{|\gB|}+ 4\,\|(\Theta^\star)_{\gB^c}\|_1/\sqrt{|\gB|}\Big).
\]
Choosing $\lambda\asymp \sqrt{\tfrac{\log p}{n}}$ on the concentration event and stopping when
$\varepsilon_{\mathrm{opt}}^{\mathrm{GL}}(k)\ \lesssim\ |\gB|\,\tfrac{\log p}{n}$ yields the stated rate.
\end{proof}

\subsection{Proof of Proposition \ref{prop:lipschitz_continuity}}
\begin{proof}
Recall that the weighted proximal operator $\operatorname{prox}_{w,F}:\mathbb{R}^{p\times p}\to\mathbb{R}^{p\times p}$ by
\[
\operatorname{prox}_{w,F}(M)
:=\arg\min_{X\in\mathbb{R}^{p\times p}}
\Big\{F(X)+\tfrac{1}{2}\big\|\tfrac{1}{\sqrt{w}}\circ(X-M)\big\|_F^2\Big\},
\]
where $\tfrac{1}{\sqrt{w}}$ is taken element-wise. Define the weighted inner product
\[
\langle X, \tfrac{1}{w}\circ Y\rangle := \sum_{i,j} X_{ij}\,\frac{Y_{ij}}{w_{ij}},
\]
and the associated norm
\[
\|X\|_{w^{-1}}^2 := \langle X, \tfrac{1}{w}\circ X\rangle.
\]
We observe that
\begin{equation*}
\begin{aligned}
\big\|\tfrac{1}{\sqrt{w}}\circ(X-M)\big\|_F^2
&=
\sum_{i,j}\Big(\frac{X_{ij}-M_{ij}}{\sqrt{w_{ij}}}\Big)^2=
\sum_{i,j}\frac{(X_{ij}-M_{ij})^2}{w_{ij}}
\\&=
\big\langle X-M,\tfrac{1}{w}\circ(X-M)\big\rangle=
\|X-M\|_{w^{-1}}^2.
\end{aligned}
\end{equation*}

\medskip\noindent
\textbf{Step 1: Nonexpansiveness in the weighted norm $\|\cdot\|_{w^{-1}}$.}


Let $M_1,M_2\in\mathbb{R}^{p\times p}$ and $X_1 := \operatorname{prox}_{w,F}(M_1),\;
X_2 := \operatorname{prox}_{w,F}(M_2).$
The optimal condition yields
\begin{equation*}
\begin{aligned}
0 \in \partial F(X_1) + \tfrac{1}{w}\circ(X_1 - M_1),\quad
0 \in \partial F(X_2) + \tfrac{1}{w}\circ(X_2 - M_2).
\end{aligned}
\end{equation*}
Thus there exist $G_1\in\partial F(X_1)$ and $G_2\in\partial F(X_2)$ such that
\begin{equation*}
G_1 + \tfrac{1}{w}\circ(X_1 - M_1) = 0, \quad
G_2 + \tfrac{1}{w}\circ(X_2 - M_2) = 0.
\end{equation*}
Equivalently,
\begin{equation*}
G_1 = \tfrac{1}{w}\circ(M_1 - X_1),\quad
G_2 = \tfrac{1}{w}\circ(M_2 - X_2).
\end{equation*}
Since $F$ is convex, its subdifferential $\partial F$ is a monotone operator. Hence
\begin{equation*}
\big\langle X_1 - X_2,\; G_1 - G_2\big\rangle \;\ge\; 0.
\end{equation*}
Denote that
\begin{equation*}
\begin{aligned}
\Delta X := X_1 - X_2,\qquad
\Delta M := M_1 - M_2,
\end{aligned}
\end{equation*}
then we have
\begin{equation*}
\begin{aligned}
0
&\le
\big\langle \Delta X,\; G_1 - G_2\big\rangle\\
&=
\big\langle \Delta X,\; \tfrac{1}{w}\circ(\Delta M - \Delta X)\big\rangle\\
&=
\big\langle \Delta X,\; \tfrac{1}{w}\circ\Delta M\big\rangle
-
\big\langle \Delta X,\; \tfrac{1}{w}\circ\Delta X\big\rangle.
\end{aligned}
\end{equation*}
Rearranging gives
\begin{equation*}
\begin{aligned}
\|\Delta X\|_{w^{-1}}^2
\;\le\;
\big\langle \Delta X,\; \tfrac{1}{w}\circ\Delta M\big\rangle.
\end{aligned}
\end{equation*}
Now apply the Cauchy--Schwarz inequality:
\begin{equation*}
\begin{aligned}
\big|\big\langle \Delta X,\; \tfrac{1}{w}\circ\Delta M\big\rangle\big|=\big|\big\langle \tfrac{1}{\sqrt{w}}\circ \Delta X,\; \tfrac{1}{\sqrt{w}}\circ \Delta M\big\rangle\big|
\;\le\; \|\tfrac{1}{\sqrt{w}}\circ \Delta X\|\;\|\tfrac{1}{\sqrt{w}}\circ \Delta M\|=
\|\Delta X\|_{w^{-1}}\;\|\Delta M\|_{w^{-1}}.
\end{aligned}
\end{equation*}
Hence
\begin{equation*}
\begin{aligned}
\|\Delta X\|_{w^{-1}}^2
\;\le\;
\|\Delta X\|_{w^{-1}}\;\|\Delta M\|_{w^{-1}}.
\end{aligned}
\end{equation*}
This implies that 
\begin{equation*}
\begin{aligned}
\|\Delta X\|_{w^{-1}}
\;\le\;
\|\Delta M\|_{w^{-1}}.
\end{aligned}
\end{equation*}
That is,
\begin{equation*}
\begin{aligned}
\big\|\operatorname{prox}_{w,F}(M_1) - \operatorname{prox}_{w,F}(M_2)\big\|_{w^{-1}}
\;\le\;
\|M_1 - M_2\|_{w^{-1}},
\qquad\forall\,M_1,M_2.
\end{aligned}
\end{equation*}
Thus $\operatorname{prox}_{w,F}$ is nonexpansive in the norm $\|\cdot\|_{w^{-1}}$.

\medskip\noindent
\textbf{Step 2: From $\|\cdot\|_{w^{-1}}$ to the Frobenius norm $\|\cdot\|_F$.}

By the elementwise bounds on $w$, for any $Z\in\mathbb{R}^{p\times p}$ we have
\begin{equation*}
\begin{aligned}
\frac{1}{c_2}\|Z\|_F^2\le\|Z\|_{w^{-1}}^2\le
\frac{1}{c_1}\|Z\|_F^2,
\end{aligned}
\end{equation*}
Then
\begin{equation*}
\begin{aligned}
\|\Delta X\|_F
\le
\sqrt{c_2}\,\|\Delta X\|_{w^{-1}}\le
\sqrt{c_2}\,\|\Delta M\|_{w^{-1}}\le
\sqrt{\frac{c_2}{c_1}}\;\|\Delta M\|_F.
\end{aligned}
\end{equation*}
That is,
\begin{equation*}
\begin{aligned}
\big\|\operatorname{prox}_{w,F}(M_1) - \operatorname{prox}_{w,F}(M_2)\big\|_F
\;\le\;
\sqrt{\frac{c_2}{c_1}}\;\|M_1 - M_2\|_F
\qquad\forall\,M_1,M_2\in\mathbb{R}^{p\times p}.
\end{aligned}
\end{equation*}
Therefore $\operatorname{prox}_{w,F}$ is lipschitz continuous with lipschitz constant at most $\sqrt{{c_2}/c_1}$. This completes the proof. 

\end{proof}

\subsection{Proof of Theorem \ref{thm:nn_approximate}}
\begin{proof}
Proposition \ref{prop:lipschitz_continuity} implies that $\operatorname{vec}(\operatorname{prox}_{w,F}(\cdot))$ is $\sqrt{c_2/c_1}$-lipschitz continuous. Then the desired result is a direct corollary of Propositon 1 and Propositon 6 of \cite{Bach2017jmlr}.
\end{proof}

\section{Performance of CVXPY and MMA under the sparse covariance structure} \label{CVXPYMMAsparse}
In this section, Table \ref{SparsetableMMACVXPY} reports the simulation results for ADMM, LADMM, TOSA, PFBS, FISTA, as well as CVXPY and MMA, under a sparse covariance structure (sparsity level \(q = 0.1\)) with dimensions \(p = 1000, 2000, 3000\).

\begin{table}[H]
\scriptsize
\centering
\caption{Experimental results of different methods on the sparse covariance structure}\label{SparsetableMMACVXPY}
\setlength{\tabcolsep}{1.5mm}{
\begin{tabular}{c|c|ccccccc}
\hline
Metric & Dimension & ADMM & LADMM & TOSA & PFBS & FISTA & CVXPY & MMA\\
\hline
\multirow{3}{*}{\tabincell{c}{Time (s)}} 
& \(p=1000\) & 1.696$\times 10^{2}$ & 1.110$\times 10^{2}$ & 2.662$\times 10^{2}$ & 4.011$\times 10^{1}$ & 2.374$\times 10^{2}$ & 2.427$\times 10^{2}$ & 1.157$\times 10^{4}$ \\
& \(p=2000\) & 6.248$\times 10^{2}$ & 3.582$\times 10^{2}$ & 9.583$\times 10^{2}$ & 1.582$\times 10^{2}$ & 1.082$\times 10^{3}$ & 1.602$\times 10^{3}$ & 3.817$\times 10^{4}$ \\
& \(p=3000\) & 1.452$\times 10^{3}$ & 8.957$\times 10^{2}$ & 2.432$\times 10^{3}$ & 3.849$\times 10^{2}$ & 3.093$\times 10^{3}$ & 5.038$\times 10^{3}$ & 6.702$\times 10^{5}$ \\
\hline
\multirow{3}{*}{\tabincell{c}{Frobenius norm}} 
& \(p=1000\) & 1.252$\times 10^{3}$ & 1.252$\times 10^{3}$ & 1.251$\times 10^{3}$ & 1.252$\times 10^{3}$ & 1.252$\times 10^{3}$ & 1.252$\times 10^{3}$ & 1.251$\times 10^{3}$ \\
& \(p=2000\) & 3.582$\times 10^{3}$ & 3.582$\times 10^{3}$ & 3.577$\times 10^{3}$ & 3.582$\times 10^{3}$ & 3.582$\times 10^{3}$ & 3.582$\times 10^{3}$ & 3.600$\times 10^{3}$ \\
& \(p=3000\) & 6.649$\times 10^{3}$ & 6.649$\times 10^{3}$ & 6.639$\times 10^{3}$ & 6.649$\times 10^{3}$ & 6.649$\times 10^{3}$ & 6.649$\times 10^{3}$ & 6.738$\times 10^{3}$ \\
\hline
\multirow{3}{*}{\tabincell{c}{Nuclear norm}} 
& \(p=1000\) & 3.002$\times 10^{4}$ & 3.002$\times 10^{4}$ & 3.000$\times 10^{4}$ & 3.002$\times 10^{4}$ & 3.002$\times 10^{4}$ & 3.002$\times 10^{4}$ & 3.002$\times 10^{4}$ \\
& \(p=2000\) & 1.138$\times 10^{5}$ & 1.138$\times 10^{5}$ & 1.133$\times 10^{5}$ & 1.138$\times 10^{5}$ & 1.138$\times 10^{5}$ & 1.138$\times 10^{5}$ & 1.144$\times 10^{5}$ \\
& \(p=3000\) & 2.374$\times 10^{5}$ & 2.374$\times 10^{5}$ & 2.360$\times 10^{5}$ & 2.374$\times 10^{5}$ & 2.374$\times 10^{5}$ & 2.374$\times 10^{5}$ & 2.410$\times 10^{5}$ \\
\hline
\end{tabular}
}
\end{table}

From Table~\ref{SparsetableMMACVXPY}, it is evident that under the sparse covariance structure with \(q=0.1\), neither MMA nor CVXPY provides a practically meaningful advantage. We therefore exclude these two methods from the remaining experiments. For MMA, the main issue is computational efficiency. MMA is consistently the slowest method, and its runtime grows dramatically with the dimension: \(1.157\times 10^{4}\) seconds at \(p=1000\), \(3.817\times 10^{4}\) seconds at \(p=2000\), and \(6.702\times 10^{5}\) seconds at \(p=3000\). In contrast, first-order splitting methods such as PFBS/LADMM/ADMM typically complete within \(10^{1}\)-\(10^{3}\) seconds at the same dimensions. This severe lack of scalability renders MMA impractical in our large-scale simulations. For CVXPY, the main issue is solution quality rather than speed. CVXPY does not yield any noticeable improvement in either the Frobenius norm or the nuclear norm: across all three dimensions, the Frobenius and nuclear norms produced by CVXPY are essentially identical to those achieved by ADMM/LADMM/FISTA. For instance, at \(p=1000\), CVXPY attains a Frobenius norm of \(1.252\times 10^{3}\) and a nuclear norm of \(3.002\times 10^{4}\), which are nearly the same as the corresponding values of the competing methods; similar behavior is observed for \(p=2000\) and \(p=3000\). Moreover, CVXPY is also slower than efficient alternatives (e.g., \(5.038\times 10^{3}\) seconds at \(p=3000\) versus \(3.849\times 10^{2}\) seconds for PFBS), leading to a strictly worse time--accuracy trade-off.

\end{appendix} 

\newpage
\bibliography{./bib/bib.bib}
\bibliographystyle{elsarticle-num-names}

\end{document}